%% file: main.tex
\documentclass{article}

\usepackage{microtype}
\usepackage{graphicx}
\usepackage{subfigure}
\usepackage{booktabs}
\usepackage{pifont}
\usepackage{multirow}
\usepackage{bbm}
\usepackage{enumitem}
\usepackage{subcaption}
\usepackage[hidelinks]{hyperref}
\usepackage{xurl} %

\usepackage[accepted]{icml2026}

\usepackage{amsmath}
\usepackage{amssymb}
\usepackage{mathtools}
\usepackage{amsthm}

\usepackage[capitalize,noabbrev]{cleveref}
\input{custom_commands}

\input{math_commands}

\theoremstyle{plain}
\newtheorem{theorem}{Theorem}[section]

\newtheorem{lemma}[theorem]{Lemma}

\theoremstyle{definition}

\theoremstyle{remark}

\usepackage[textsize=tiny]{todonotes}

\usepackage[normalem]{ulem}
\newif\ifenablecomments

\enablecommentstrue

\icmltitlerunning{A Critical Look at Targeted Instruction Selection: Disentangling What Matters (and What Doesn't)}

\begin{document}

\twocolumn[
\icmltitle{A Critical Look at Targeted Instruction Selection: \\
Disentangling What Matters (and What Doesn't)}

\icmlsetsymbol{equal}{*}

\begin{icmlauthorlist}
\icmlauthor{Nihal V. Nayak}{harvard}
\icmlauthor{Paula Rodriguez-Diaz}{harvard}
\icmlauthor{Neha Hulkund}{mit}
\icmlauthor{Sara Beery}{mit}
\icmlauthor{David Alvarez-Melis}{harvard,kempner}
\end{icmlauthorlist}

\icmlaffiliation{harvard}{Harvard University}
\icmlaffiliation{mit}{MIT}
\icmlaffiliation{kempner}{Kempner Institute}

\icmlcorrespondingauthor{Nihal V. Nayak}{nnayak@seas.harvard.edu}

\icmlkeywords{Machine Learning, ICML}

\vskip 0.3in
]

\printAffiliationsAndNotice{\icmlEqualContribution} %

\input{sections/abstract}

\input{sections/introduction}
\input{sections/preliminaries}
\input{sections/method}

\input{sections/experiment}

\input{sections/theory}

\input{sections/related_work}

\input{sections/conclusion}

\input{sections/acknowledgments}
\input{sections/impact}

\def\UrlBreaks{\do\/\do-\do\&\do.\do:}
\bibliography{main}
\bibliographystyle{icml2026}

\clearpage
\onecolumn
\appendix
\input{appendices/appendix_list}

\end{document}

%% file: custom_commands.tex
\usepackage{xcolor}

\newif\ifshowcomments
\showcommentstrue   %

\def\D{\mathcal{D}} %
\def\Q{\mathcal{Q}} %
\def\T{\mathcal{T}} %
\def\S{\mathcal{S}} %

%% file: math_commands.tex
\usepackage{amsmath,amsfonts,bm}

\def\eqref#1{equation~\ref{#1}}

\def\1{\bm{1}}

\def\vtheta{{\bm{\theta}}}

\def\mC{{\bm{C}}}

\DeclareMathAlphabet{\mathsfit}{\encodingdefault}{\sfdefault}{m}{sl}
\SetMathAlphabet{\mathsfit}{bold}{\encodingdefault}{\sfdefault}{bx}{n}

\newcommand{\E}{\mathbb{E}}

\newcommand{\R}{\mathbb{R}}

\DeclareMathOperator*{\argmin}{arg\,min}

%% file: sections/abstract.tex
\begin{abstract}
Instruction fine-tuning of large language models (LLMs) often involves selecting a subset of instruction training data from a large candidate pool, using a small query set from the target task. Despite growing interest, the literature on targeted instruction selection remains fragmented and opaque: methods vary widely in selection budgets, often omit zero-shot baselines, and frequently entangle the contributions of key components. As a result, practitioners lack actionable guidance on selecting instructions for their target tasks. In this work, we aim to bring clarity to this landscape by disentangling and systematically analyzing the two core ingredients: data representation and selection algorithms. Our framework enables controlled comparisons across models, tasks, and budgets. We find that only gradient-based data representations choose subsets whose similarity to the query consistently predicts performance across datasets,models, and candidate pools. While no single method dominates, gradient-based representations paired with greedy round-robin selection often perform best on average at low budgets, but these gains diminish at larger budgets. Finally, we unify several existing selection algorithms as forms of approximate distance minimization between the selected subset and the query set, and support this view with new generalization bounds. More broadly, our findings provide critical insights and a foundation for more principled data selection in LLM fine-tuning. The code is available at \href{https://github.com/dcml-lab/targeted-instruction-selection}{\nolinkurl{https://github.com/dcml-lab/targeted-instruction-selection}}.
\looseness-1
\end{abstract}

%% file: sections/introduction.tex
\section{Introduction}

Large language models (LLMs) have demonstrated a remarkable ability to follow complex instructions through instruction fine-tuning on curated datasets of instruction-response pairs ~\citep{openai:arxiv24,olmo:arxiv25,agarwal:arixv25}. Constructing such datasets, however, typically requires careful experimentation and extensive ablation studies, making them both time-consuming and computationally expensive~\citep{guha:arix25,magnusson:icml25}.
This challenge is further exacerbated by the growing need to adapt LLMs to specialized downstream tasks under limited data or compute budgets~\citep{thulke:arixv24,zhao:innovation24}. As a result, a growing body of work studies how to automatically select a subset of examples from a large candidate pool that is most useful for a given target task---a problem commonly referred to as \textit{targeted instruction selection}~\citep{xia:icml24}.
\looseness-1

Despite increasing interest, the literature on targeted instruction selection remains fragmented and difficult to interpret (Appendix \ref{app:fragmented}). Proposed methods vary widely in their formulations, often involve multiple design choices (e.g., representations, similarity metrics, selection algorithms), and lack key baselines, such as the zero-shot baseline. Empirical results are inconsistent, and it remains unclear which techniques actually drive performance, when, and why. 
\looseness-1

In this work, we bring clarity to this space through a disentangled framework that separates the two core ingredients of targeted instruction selection: (i) the representation used to encode the data and (ii) the algorithm used to select examples based on these representations. This disentangled view allows us to isolate the effects of each component and enables controlled comparisons across models, datasets, and budgets. We further show that, despite differing in algorithmic detail, many selection methods can be unified under the view of approximate distance minimization between the selected subset and the target task distributions. We support this perspective with new generalization bounds that characterize when distance-based selection helps---and when it doesn't.
\looseness-1

Our empirical findings are mixed but revealing. 
Gradient-based representations provide the most reliable distance-to-query signal for query loss, 
but this signal does not consistently translate into downstream performance gains, and even the strongest methods fail to improve over zero-shot inference in some regimes 
(Section \ref{exp:distance_quantile}, Section \ref{sec:experiments:data_representation}, Appendix \ref{app:ablations}, Appendix \ref{app:dolci_instruct_ablations}).
While no single method dominates, greedy round-robin selection tends to perform best at small budgets, while optimal transport-based methods offer modest gains at larger ones with Tulu V2 as the candidate pool 
(Section \ref{sec:experiments:data_representation}).
However, our Dolci Instruct experiments show that these trends are candidate pool dependent and do not hold uniformly across tasks, models, and budgets 
(Appendix \ref{app:dolci_instruct_ablations}).
Surprisingly, randomly sampled subsets often match or outperform many popular selection methods (TyDiQA and MMLU-Pro in Figure \ref{fig:true_metric_data_rep_llama_7b}), especially as the budget increases, highlighting the brittleness of current practice.
\looseness-1

Our work makes the following contributions:
\begin{itemize}[leftmargin=*, noitemsep, topsep=0pt]
    \item We propose a \textbf{disentangled framework} for targeted instruction selection that isolates the effects of (i) data representations and (ii) selection algorithms, enabling controlled comparisons.
    \item We conduct \textbf{systematic empirical analyses} across multiple target tasks and LLMs, showing that gradient-based data representations provide the strongest distance-to-query signal for query loss, that greedy round-robin selection performs best at low budgets, and that optimal transport-based methods perform best at high budgets, using Tulu V2 as the candidate pool.
    \item We develop a \textbf{unified theoretical perspective} that interprets many selection algorithms as approximate distance minimization, and prove generalization bounds that explain both the benefits and limitations of this view.
\end{itemize}

%% file: sections/preliminaries.tex
\section{Targeted Instruction Selection}
Let $\D=\{z_{i}=(x_{i}, y_{i})\}_{i=1}^{N}$ be a large candidate pool and $\Q=\{q_{j}=(x_{j}, y_{j})\}_{j=1}^{M}$ be the query set drawn from the distribution $P_{\T}$ of the target task $\T$. 
We consider a model $f_{\theta}$ parameterized by $\theta$. 
For a sample $z=(x,y)$, we define the per-example loss as $\ell(\theta; z) := \ell(f_{\theta}(x), y)$. 
The goal of targeted instruction selection is to choose a subset of examples $\S\subseteq \D$ of size $B$ (training budget) and train the model $f_{\theta}$ using only examples in $\S$ so as to minimize the expected loss on the target task $\T$:
\vspace{-0.2cm}
\[
\mathcal{S}^{*}
=\arg\min_{\substack{\mathcal{S}\subseteq \mathcal{D}\\ |\mathcal{S}|=B}}
\E_{z\sim P_{\mathcal{T}}}\!\left[
\ell\!\left(\theta_{\mathcal{S}};\, z\right)\right].
\]
Since we do not have access to the target tasks during training and selection, we use $\mathcal{Q}$ as a proxy for selection and choose $\mathcal{S}$ to solve the objective:
\[
\hat{\mathcal{S}}
=\arg\min_{\substack{\mathcal{S}\subseteq \mathcal{D}\\ |\mathcal{S}|=B}}
\frac{1}{M}\sum_{j=1}^{M}
\ell\!\left(\theta_{\mathcal{S}};\, q_j\right).
\]
Optimizing this objective is impractical and combinatorially expensive, which has motivated compute efficient instruction selection methods. 

%% file: sections/method.tex
\begin{figure}[t]
    \centering
    \includegraphics[width=1\linewidth]{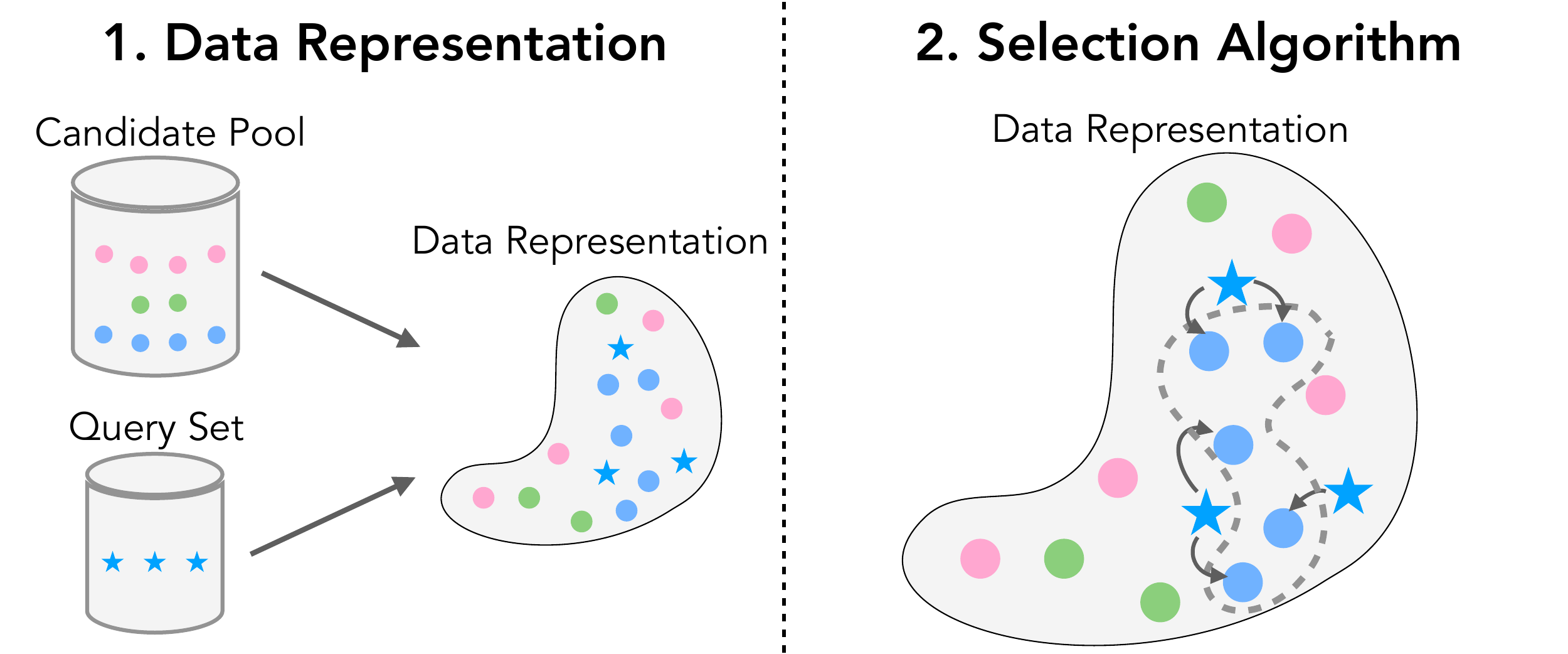}    \caption{\textbf{Disentangled view of targeted instruction selection.} First, the query set (stars) and candidate pool (dots) are encoded as data representations. Then, for a given budget, using the data representations for the query and candidates, we perform targeted selection (denoted by the dotted line) using a selection algorithm such as greedy round-robin.\looseness-1}
    \label{fig:disentangled_view}
\end{figure}

\section{A Disentangled View of Instruction Selection}\label{sec:disentangled_view}
In this section, we present the disentangled view of targeted instruction selection. Then, we describe the commonly used data representations and selection algorithms.
\looseness-1

\subsection{Disentangling Data Representation and Selection Algorithm}\label{sec:disentangled_view:main}

We adopt a disentangled view of instruction selection that separates two key components: (i) data representation, and (ii) the selection algorithm (Figure \ref{fig:disentangled_view}).
\looseness-1

\begin{itemize}[leftmargin=*, noitemsep, topsep=0pt]
   \item \textbf{Data Representation}: First, we encode instruction-response pairs from the candidate pool and query sets into feature vectors (their \emph{representations}). Ideally, these representations capture candidate-query distances that predict performance on the target task.\looseness-1
    
    \item \textbf{Selection Algorithm}: Next, the selection algorithm uses the data representations for the candidate pool and the query set to compute the similarity (or distances) between them and then select $B$ examples from the candidate pool based on the distances, where $B$ is the budget. 
    \looseness-1
\end{itemize}
\vspace{-0.05cm}

With this view, we aim to isolate the effects of data representations and selection algorithms from prior work that reports the best results on targeted instruction selection~\citep{liu:neurips24,xia:icml24,ivison:arxiv25}.\looseness-1
\vspace{-0.3cm}

\subsection{Data Representation}\label{sec:disentangled_view:representation}
Here, we describe three data representation approaches for encoding the samples (More details in Appendix \ref{app:implementation:embed}).
In this work, for a fair comparison, all these data representations are used to compute a cosine similarity matrix between the query and the candidate samples. 
\looseness-1
\vspace{-0.25cm}

\paragraph{RDS+.}
We refer to the data representation from \citet{ivison:arxiv25} as RDS+ and discuss the selection algorithm (RR) in Section \ref{sec:disentangled_view:selection_alg}.
RDS+ computes the representations for the samples in the query and candidate pool by taking a position-weighted mean of the hidden states from the base language model~\citep{muennighoff:arxiv22}. 
\looseness-1
\vspace{-0.1cm}

\paragraph{EMBED.}
Here, the query and candidate samples are passed through an off-the-shelf sentence encoder, such as GTR-T5~\citep{ni:emnlp22}, to produce representations.
Since these are typically much smaller, they significantly reduce FLOPs compared to RDS+. 
We follow the setup from ~\citet{ivison:arxiv25} to produce these representations.
\looseness-1

\paragraph{LESS.}
~\citet{xia:icml24} propose LESS (Low-rank gradiEnt Similarity Search), an optimization-aware influence formulation that represents query and candidate samples as low-dimensional gradient features.
LESS estimates influence using a first-order approximation of training dynamics~\citep{pruthi:neurips20} and adapts it to Adam~\citep{kingma:iclr14}.
Concretely, it averages the cosine similarity between the query gradient and the candidate’s Adam update vector across multiple training checkpoints.
To make this approach scalable, LESS computes LoRA gradients~\citep{hu:iclr22} and applies random projection to obtain compact vectors, resulting in an efficient and reusable gradient datastore~\citep{johnson:contmath84,park:icml23}.
LESS is part of a broad literature on data attribution and influence estimation, which seeks to quantify the effect of individual training examples on a model~\citep{grosse:arxiv23,kwon:iclr24,ruis:iclr25,chang:iclr25,wang:icml25}.
For simplicity, we use LESS representations to encode the candidate pool and query sets (Appendix \ref{app:less_details} for details).
\looseness-1

\subsection{Selection Algorithm}\label{sec:disentangled_view:selection_alg}
We describe selection algorithms for targeted instruction selection (More details in Appendix \ref{app:implementation:selection_algorithm}).
\looseness-1

\paragraph{Greedy Round-Robin (RR).}
We consider the greedy round-robin algorithm from ~\citet{ivison:arxiv25}. 
For each query sample, RR selects the sample from the candidate pool with the highest cosine similarity, adds the sample to the subset, and then removes it from the candidate pool. 
The algorithm repeats the round-robin process across all query samples until the data selection budget $B$ is exhausted.
\looseness-1

\paragraph{Doubly Greedy (DG).}
We consider the doubly greedy selection algorithm from ~\citet{xia:icml24}.
Given a similarity matrix, DG assigns each candidate sample an influence score equal to the maximum similarity it has with any query point. 
Then, DG selects the top-$B$ influential candidates to create the subset.
\looseness-1

\paragraph{KNN-Uniform.}
KNN-Uniform is a selection algorithm inspired by optimal transport~\citep{liu:neurips24}.
KNN-Uniform proposes a closed-form solution based on $K$ nearest neighbors to avoid explicitly solving the optimal transport problem. 
The algorithm first determines $K$ based on the trade-off between the alignment and diversity. 
After determining $K$, for each query, a uniform probability mass is assigned to the $K$ nearest neighbors, and then top-$B$ candidates with the highest mass (summed over queries) are selected.
\looseness-1

\begin{figure*}[t!]
    \centering
    \includegraphics[width=1\linewidth]{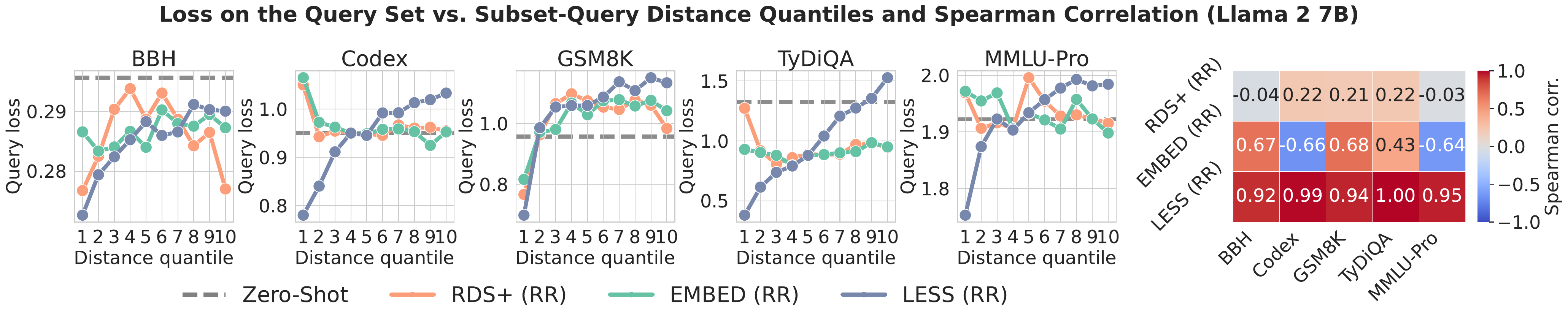}
    \caption{\textbf{Query loss vs.\ subset-query distance quantile.} We stratify candidates into 10 distance quantiles (1 = closest, 10 = farthest) using each representation, select 500 examples per quantile using the RR selection algorithm, and train the Llama 2 7B model. We report query-set cross-entropy loss and Spearman correlation per target task. LESS (RR) exhibits a strong monotonic increase in loss with distance (high positive Spearman correlation), whereas RDS+ (RR) and EMBED (RR) show weak or inconsistent correlations.}
    \vspace{-0.23cm}
    \label{fig:distance_quantile_query_loss}
\end{figure*}
\begin{figure*}[t!]
    \centering
    \includegraphics[width=1\linewidth]{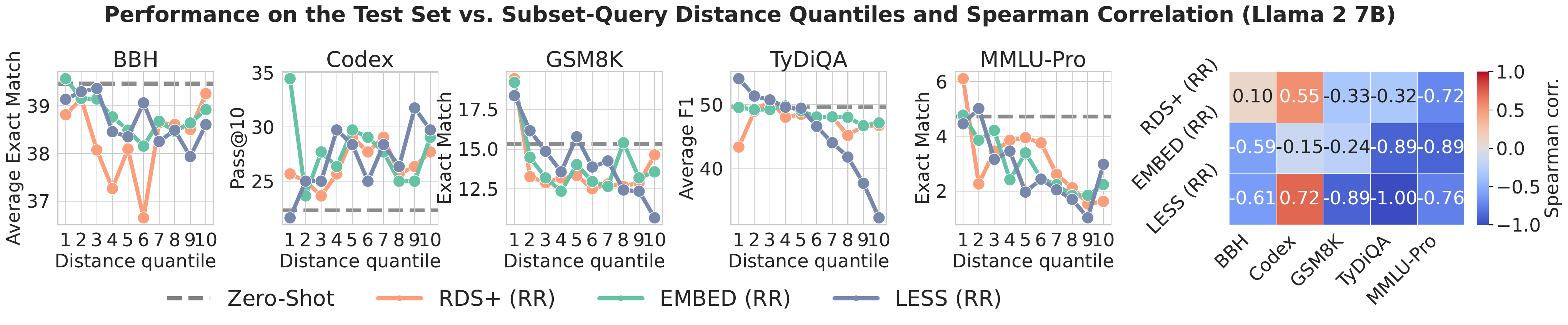}
    \caption{\textbf{Downstream performance vs. subset-query distance quantile.} Using the same quantile construction and training protocol as Figure \ref{fig:distance_quantile_query_loss}, we evaluate downstream task performance across distance quantiles and report Spearman correlation per target task. LESS (RR) shows a strong negative correlation across most target tasks, while RDS+ (RR) and EMBED (RR) exhibit weaker, less consistent trends.}
    \vspace{-0.23cm}
    \label{fig:distance_quantile_downstream_perf}
\end{figure*}

\paragraph{KNN-KDE.}
KNN-KDE builds on the closed-form solution of KNN-Uniform by adding an additional regularizer to reduce the effect of near duplicates in the candidate pool~\citep{liu:neurips24}.
Instead of assigning uniform mass to nearly identical points, ~\citet{liu:neurips24} proposes incorporating kernel density estimation (KDE) as a regularizer.
Then, for each query example, the $K$ nearest neighbors are assigned probability mass weighted by the inverse of their density estimates, and the top-$B$ candidates are selected.
\looseness-1
\vspace{-0.4cm}

\paragraph{Unbalanced OT (UOT).}
Here, we propose a new selection algorithm based on unbalanced optimal transport~\citep{chizat:mathematics18}, which we refer to as UOT.
Unlike KNN-Uniform, in UOT, we explicitly solve the optimization problem of transporting mass from the query set to the candidate pool, penalizing marginal deviations (Appendix \ref{app:ot}). 
This allows us to ``ignore'' outliers and less relevant samples for the target task. 
After solving for the transport plan, we sum the plan over the rows (queries) and choose the top-$B$ candidates with the highest mass (Appendix \ref{app:uot_implementation} for implementation).
\looseness-1

%% file: sections/experiment.tex
\section{Experimental Setup}\label{sec:experimental_setup}
We closely follow the setup from ~\citet{ivison:arxiv25} and ~\citet{xia:icml24}.
Following ~\citet{xia:icml24}, we primarily use Llama-2-7B as the base model and train on the selected subset of training data.
In Appendix ~\ref{app:ablations}, we include experiments with additional models. 
Across all data representations, selection algorithms, and models, we use cosine similarity (or cosine distance) to measure the similarity between the query-candidate pairs. 
We experiment with the following target tasks: BBH, Codex, GSM8K, TyDiQA, and MMLU-Pro (Appendix \ref{app:target_tasks}).
We use the subsampled Tulu V2 dataset from ~\citet{ivison:arxiv25} as the candidate pool.
More training details and hyperparameters are included in Appendix ~\ref{app:training_and_hyp_details}.
We use the evaluation code from ~\citet{ivison:arxiv25} and \texttt{lm-eval-harness} to evaluate the models and report the downstream metrics. 
We also report the average cross entropy loss over the response tokens on the query set (query loss).
\looseness-1

\section{Experiments and Analysis}\label{sec:experiments}
\vspace{-0.1cm}
\subsection{Does Subset Distance to Query Predict Performance?}\label{exp:distance_quantile}
In this experiment, we aim to understand the relationship between the subset distance to the query and performance. 
The goal is to determine which data representation creates subsets whose distances predictably correlate with loss and downstream performance. 
If there is a correlation, we can formulate optimization objectives to minimize the distance between the query set and the candidates, thereby achieving the highest downstream performance.
In Section \ref{sec:theory}, we also theoretically motivate this view by showing that minimizing the subset-query distance translates into minimizing the loss.\looseness-1

\paragraph{Setup.}
We consider all the data representation methods from Section \ref{sec:disentangled_view:representation}.
Given a cosine similarity matrix, we run the greedy round-robin selection algorithm over the candidates to obtain an ordering, which we then use to stratify them into distance quantiles.
The quantiles are indexed from 1 to 10, where 1 corresponds to the subset closest to the query and 10 to the farthest.
Then, we select the top 500 training examples from each quantile and train Llama 2 7B on them.\looseness-1

\begin{figure*}[ht!]
    \centering
    \includegraphics[width=1\linewidth]{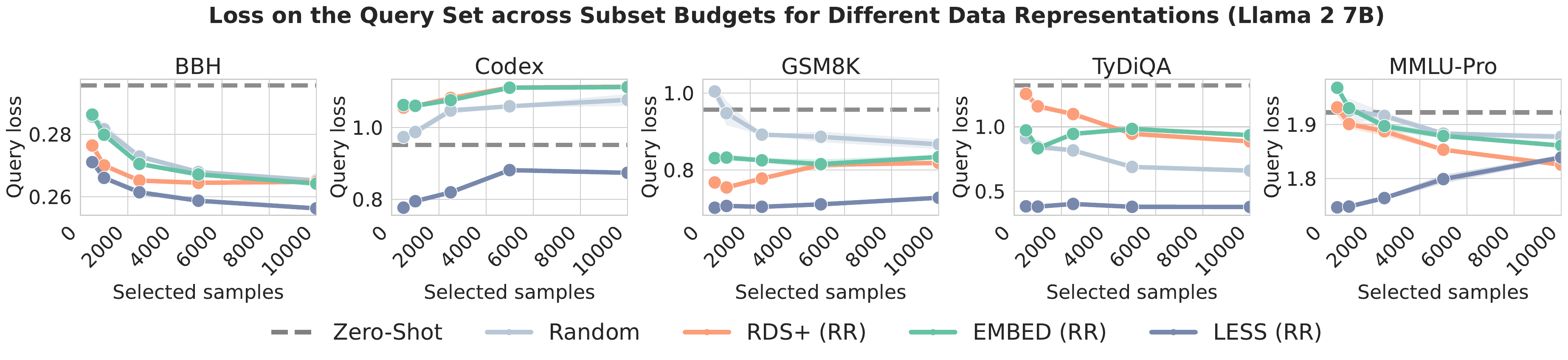}
    \caption{\textbf{Query loss vs. budget for different data representations (fixed selection algorithm).} Using greedy round-robin selection and the query-candidate pool similarity, we select subsets of size $B\in\{500,1000,2500,5000,10000\}$, train Llama~2~7B on them, and report average cross entropy loss averaged across three seeds and the standard error. Random averages over three uniformly sampled subsets from the candidate pool. LESS (RR) achieves the lowest loss across target tasks, while RDS+ (RR) and EMBED (RR) can underperform Random at larger budgets.}
    \vspace{-0.1cm}
    \label{fig:ce_loss_data_rep_llama_7b}
\end{figure*}
\begin{figure*}[ht!]
    \centering
    \includegraphics[width=1\linewidth]{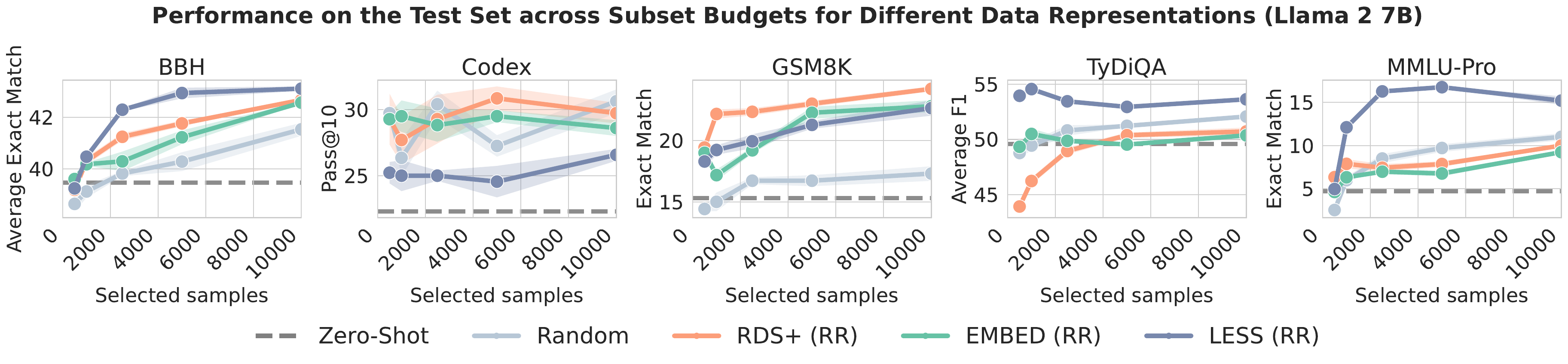}
    \caption{\textbf{Downstream performance vs. budget for different data representations (fixed selection algorithm).} With the same greedy round-robin selection and budgets as Figure \ref{fig:ce_loss_data_rep_llama_7b}, we report the downstream performance for different data representations averaged across three random seeds and the standard error. LESS (RR) performs best on BBH, TyDiQA, and MMLU-Pro, whereas RDS+ (RR) performs the best on GSM8K and is competitive with Random on Codex.}
    \vspace{-0.1cm}
    \label{fig:true_metric_data_rep_llama_7b}
\end{figure*}

\paragraph{Results.}
Figure \ref{fig:distance_quantile_query_loss} shows that only quantiles created by LESS (RR) highly correlate with the loss on the query set. 
On the other hand, both RDS+ (RR) and EMBED (RR) show very low Spearman correlation. 
Across all target tasks, models trained on a subset in the first distance quantile using LESS (RR) achieve the lowest loss. 
In contrast, RDS+ and EMBED may not often show the lowest loss in their first quantile. 
To our surprise, we find that both methods can sometimes have the lowest loss in the last distance quantile (see BBH for RDS+ (RR) and MMLU-Pro for EMBED (RR)). 
Figure \ref{fig:distance_quantile_downstream_perf} shows that LESS (RR) shows a strong negative Spearman correlation (lower is better) across four out of the five downstream tasks.
However, we do find that lower loss in Figure \ref{fig:distance_quantile_query_loss} does not necessarily translate to the highest downstream performance in the first quantile. 
For instance, we see that EMBED (RR) performs better than LESS (RR) on four out of the five target tasks despite having a much higher loss on the query set.
This discrepancy motivates the need for data representations that not only correlate with the loss but also translate to better downstream performance. 
Finally, we analyze the behavior of these representations by further subdividing the first quantile in Appendix \ref{app:first_distance_quantile}.

Overall, LESS is the only representation that creates subsets whose distance correlates predictably with query loss and downstream performance.
In Appendix \ref{app:ablations}, across additional models, we find that LESS consistently shows strong correlation across target tasks, but the trends are less consistent on downstream performance with newer, over-trained models.
Finally, in Appendix \ref{app:dolci_instruct_ablations}, when we used Dolci Instruct as the candidate pool, we again see that LESS strongly correlates with query loss but is less consistent in downstream performance. 
\looseness-1

\subsection{Effect of Data Representation across Subset Budgets}\label{sec:experiments:data_representation}
Here, we select subsets with different data representations while keeping the selection algorithm fixed to study performance as the budget increases. 
\vspace{-0.25cm}

\paragraph{Setup.}
We fixed the selection algorithm to a greedy round-robin approach, since \citet{ivison:arxiv25} showed strong performance across tasks.
Given a cosine similarity matrix from a data representation method, we select $\mathrm{B}$ samples from the candidate pool where $B\in\{500, 1{,}000, 2{,}500, 5{,}000, 10{,}000\}$.
Then, we train the base model, Llama 2 7B, on the selected samples and report the downstream performance. 
We also include a Random baseline that uniformly samples $B$ candidates without replacement from the candidate pool. 
For all methods, we train three models with different seeds and report the average performance and standard error. 
For the Random baseline, we sample three times from the candidate pool, train one model on each, and report the average performance. 
\vspace{-0.25cm}
\paragraph{Results.}
Figure \ref{fig:ce_loss_data_rep_llama_7b} shows that LESS (RR) achieves the lowest loss on the query set across all the target tasks under different budget constraints. 
We observe that RDS+ (RR) and EMBED (RR) can have higher loss than the Random baseline.
Finally, we see that the query loss can increase or may stabilize as we continue increasing the budget.
Figure \ref{fig:true_metric_data_rep_llama_7b} shows that LESS (RR) outperforms the other methods by a clear margin on three out of the five target tasks. 
Next, we observe that RDS+ (RR) achieves higher performance across all budget constraints on GSM8K, but Random is a competitive baseline at larger budgets on Codex. 
Lastly, we note that Random is a strong baseline but often underperforms targeted instruction selection methods at low budgets.

\begin{figure*}[t!]
    \centering
    \includegraphics[width=1\linewidth]{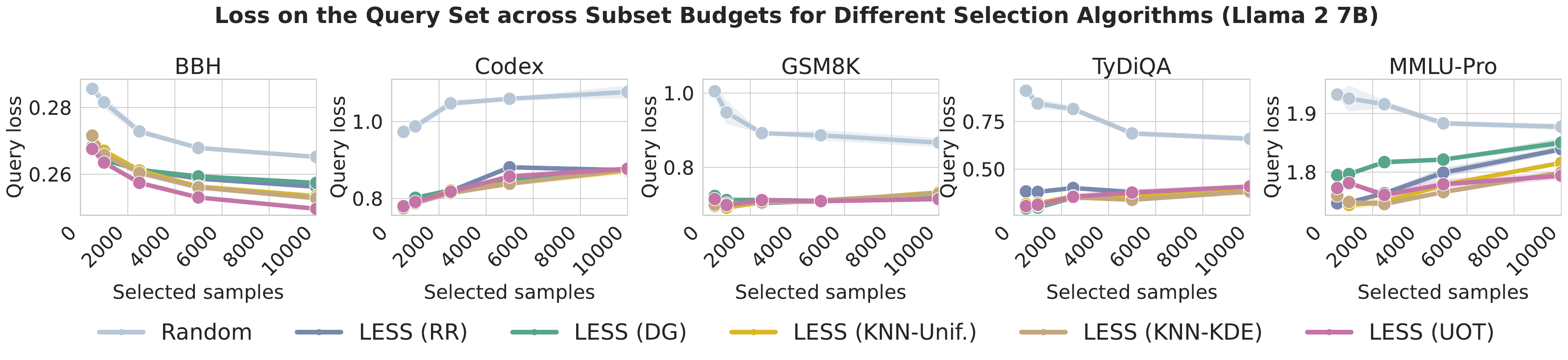}
    \caption{\textbf{Query loss vs.\ budget for different selection algorithms (fixed data representation).} Using LESS representations and the query-candidate pool cosine similarity (or distance), we select subsets of size $B\in\{500,1000,2500,5000,10000\}$ with each selection algorithm, train Llama~2~7B on them, and report average cross entropy loss on the query set averaged across three seeds and the standard error. Random averages over three uniformly sampled subsets from the candidate pool. UOT achieves the lowest loss on three of the five datasets and remains competitive on the others, while DG often underperforms, yielding the highest loss on three datasets.}
    \vspace{-0.1cm}
    \label{fig:ce_loss_sel_alg_llama_7b}
\end{figure*}
\begin{figure*}[ht!]
    \centering
    \includegraphics[width=1\linewidth]{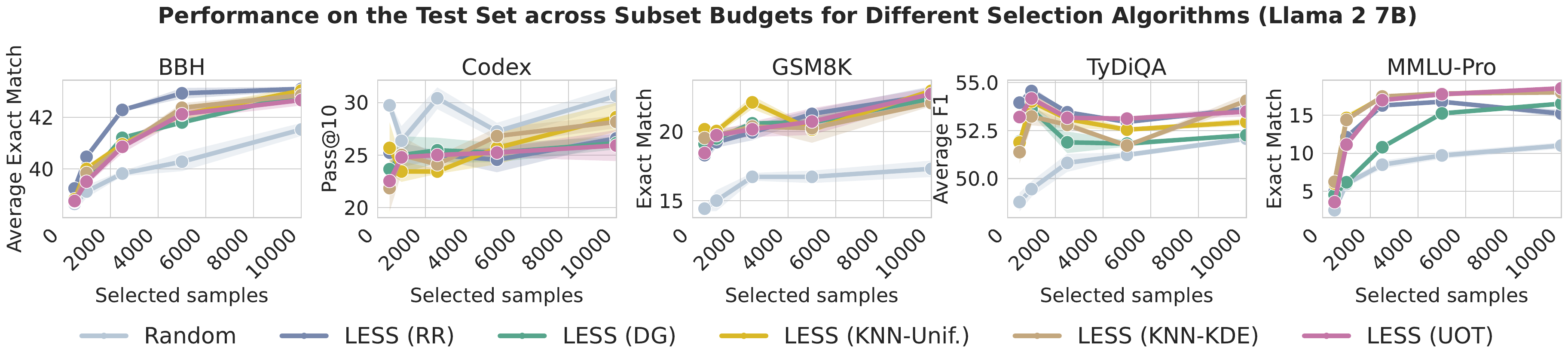}
    \caption{\textbf{Downstream performance vs.\ budget for different selection algorithms (fixed data representation).} With the same data representation and the budgets as Figure~\ref{fig:ce_loss_sel_alg_llama_7b}, we report downstream performance for different selection algorithms averaged across three seeds and the standard error. RR tends to perform best at smaller budgets, whereas UOT and KNN-KDE perform better at larger budgets; DG consistently underperforms across three of the five datasets.}
    \vspace{-0.1cm}
    \label{fig:true_metric_sel_alg_llama_7b}
\end{figure*}

Overall, these results show that, under a fixed selection algorithm (greedy round-robin), gradient-based representations (LESS) perform better than model-based representations on a majority of the target tasks.
Across models and downstream target tasks that benefit from additional training (i.e., zero-shot performance is not saturated), we see that LESS often outperforms other baselines (Appendix \ref{app:ablations}). 
When using Dolci Instruct as the candidate pool (Appendix \ref{app:dolci_instruct_ablations}), RDS+ and EMBED either match or outperform LESS on downstream metrics, suggesting that the performance of the data representation is candidate pool dependent. 
A key limitation of LESS representations is that they are computationally more expensive than model-based representations, as they require a forward and a backward pass over the entire candidate pool. 
For this reason, in Appendix \ref{app:cheaper_proxy}, we further investigate whether subsets created by smaller proxy LESS models offer a cheaper alternative, and show that even models with 135M parameters can select examples for training larger models and dramatically reduce FLOPs. 
Finally, we find that subsets selected by model-based representations are competitive and often outperform Random under constrained budgets.
\looseness-1

\subsection{Effect of Selection Algorithms across Subset Budgets}\label{sec:experiment:selection_algorithm}
Here, we select subsets with different selection algorithms while keeping the data representation fixed to study the performance as the budget increases. 

\paragraph{Setup.}
The experimental setup closely follows Section \ref{sec:experiments:data_representation}. 
Building on our insights from Section \ref{sec:experiments:data_representation}, we fix LESS as our data representation. 
Given the LESS representations for the query set and candidate pool, we use the selection algorithm to select samples for budgets $B \in \{500, 1{,}000, 2{,}500, 5{,}000, 10{,}000\}$. 
We use the weighted cosine similarity matrix between the query and the candidate pool (Equation \ref{eq:less_similarity} in the Appendix) to select instructions using greedy round-robin (RR) and doubly greedy (DG). 
We convert this cosine similarity matrix into a cosine distance matrix to select samples with KNN-Uniform, KNN-KDE, and UOT (Appendix \ref{app:knn_kde_unif_implementation} and \ref{app:uot_implementation}). 
Following \citet{liu:neurips24}, we also include results for KNN-Uniform and KNN-KDE using L2 distance in Appendix \ref{app:knn_unif_kde}. 
\vspace{-0.25cm}

\paragraph{Results.}
Figure \ref{fig:ce_loss_sel_alg_llama_7b} shows that across the five tasks, UOT achieves the lowest loss on three of them, while remaining competitive on the others. 
On the other hand, we see that DG often has the highest loss across tasks at large budgets.
Figure \ref{fig:true_metric_sel_alg_llama_7b} shows the downstream performance after training Llama-2-7B on the selected subsets. 
We observe that LESS (RR), the greedy round-robin selection algorithm, performs well on low budgets. 
On the other hand, LESS (UOT) and LESS (KNN-KDE) perform better with higher budgets. 
Finally, we find that the doubly greedy (DG) algorithm consistently underperforms other selection algorithms, suggesting that it might be choosing samples closest to only a subset of the queries. 
\looseness-1

Overall, our results show that greedy round-robin (RR) selection works best under limited budget constraints, but these gains generally diminish as more examples are added. 
At larger budgets, optimal transport–based selection methods (UOT and KNN-KDE) provide greater benefits.
In Appendix \ref{app:ablations}, we find that the performance trends observed in this experiment generalize to other models. 
For example, LESS (RR) performs best on BBH with Llama 3.2 3B and Qwen3 4B Base. 
We also observe that LESS (UOT) performs best on MMLU-Pro at high budgets with Olmo 3 7B Base. 
These results suggest that selection algorithms exhibit similar performance trends across models for particular target tasks.
However, when we change the candidate pool to Dolci Instruct (Appendix \ref{app:dolci_instruct_ablations}), we see that no selection algorithms consistently show strong budget-dependent trends, which suggests the effect of selection algorithms are dependent on the candidate pool.
\looseness-1

%% file: sections/theory.tex
\section{A Unifying View: Instruction Selection as Set Distance Minimization}\label{sec:theory}

While instruction selection algorithms vary in form, from greedy heuristics to transport-based selection, they often share a deeper, unifying principle: selecting a subset that is ``close'' to the query set in the representation space. In this section, we formalize and motivate this perspective. 

Concretely, we observe that a broad class of existing methods (including those discussed in Section \ref{sec:disentangled_view:selection_alg}) can be viewed as approximately minimizing a distributional distance between the selected subset $\mathcal{S}$ and the query set $\mathcal{Q}$:
\begin{itemize}[noitemsep, topsep=0pt, leftmargin=*]
    \item Greedy algorithms (e.g., round-robin, doubly greedy) aim to reduce the average or maximum similarity-based distance between $\mathcal{S}$ and $\mathcal{Q}$.
    \item Density-based methods (e.g., KNN-Uniform, KNN-KDE) select points from high-density regions near $\mathcal{Q}$, implicitly matching the mass of the subset distribution to the query.
    \item Optimal transport approaches (e.g., UOT) directly solve for a transport coupling minimizing cost between empirical distributions associated with $\mathcal{S}$ and $\mathcal{Q}$.
\end{itemize}

Despite differing in implementation and exact objective, all these methods can be understood as minimizing a distance function $ \mathrm{Dist}(\S,\Q)$, where the choice of distance reflects the algorithm's inductive biases. 
This unified view not only clarifies relationships between methods, but also explains their performance patterns. In the remainder of this section:
\begin{itemize}[leftmargin=*,noitemsep,topsep=0pt]
    \item We show that reducing the subset-query distance tightens a generalization bound on downstream loss (Section \ref{subsec:theory1}).
    \item We show that the benefit of distance-aware selection over random sampling diminishes with budget, and characterize this tradeoff (Section \ref{subsec:theory2}).
\end{itemize}

\subsection{Minimizing Subset-Query Distance Tightens a Generalization Bound}\label{subsec:theory1}
We establish a theoretical bound formalizing the intuition that selecting a subset $\S$ whose empirical distribution is close to that of the query set $\Q$ leads to improved performance on the target task $\T$. 
We show that the empirical test loss$L_\T(\theta_S) := \frac{1}{|\T|} \sum_{z\in \T} \ell(\theta_S; z)$ for a loss function $\ell(\theta_S; z)$ and $\theta_\S \in \argmin_{\theta \in \Theta} L_\S(\theta)$ an empirical risk minimizer trained on $\S$, is upper-bounded by the 1-Wasserstein distance between the empirical distributions of the selected subset and the query set, denoted by $W_1(\hat P_{\S}, \hat P_{\Q})$. Here, $\hat P_{\S}$ and $\hat P_{\Q}$ denote the empirical distributions associated with $\S$ and $\Q$, respectively. This result follows by viewing subset selection as a two-stage domain adaptation problem: first from $\S$ (source) to $\Q$ (target), and then from $\Q$ (source), to $\T$ (target). Bounds of this form are well established in the domain adaptation literature, where optimal transport distances between source and target distributions are known to control transfer error \cite{redko:ecml17,courty:neurips17}. In our setting, this bound implies that algorithms which approximately minimize $W_1(\hat P_{\S}, \hat P_{\Q})$ directly tighten a theoretical upper bound on target task performance.

\begin{theorem}\label{theorem:subset_da}
    Let $\ell:\Theta\times\mathcal Z\to\mathbb R_+$ be a loss function, where $\mathcal Z \subset \mathbb{R}^d$ denotes the data space and $\Theta$ the parameter space. Assume that $\ell$ is symmetric, convex, bounded, satisfies the triangle inequality, and for $z=(x,y) \in \mathcal Z$ admits the parametric form $\ell(\theta; z)=|y - f_\theta(x)|^q$ for some $q>0$. Let $\D$ denote a labeled candidate pool, and let $\S \subseteq \D$ be any subset of size $B := |\S|$. Let $\Q$ (the query set) and $\T$ (the test set) be labeled datasets, and assume $|Q| \leq \min(|\T|, |\S|)$. Then for any $d'> d$ and $c' < \sqrt{2}$, there exists a constant $N_0$, depending on $d'$, such that for any $\delta>0$ and $|Q| \geq N_0 \max(\delta^{-(d'+2)},1)$, with probability at least $1-2\delta$:
    \begin{equation}\label{eq:subset_da_bound:main}
    \begin{aligned}
        L_\T(\theta_\S) \leq \ & \underbrace{W_1(\hat P_\S, \hat P_\Q)}_{\mathclap{\substack{\text{Subset and query}\\ \text{dataset distance}}}}+ \underbrace{W_1(\hat P_\Q,  \hat P_{\T})}_{\mathclap{\substack{\text{Query and test}\\ \text{dataset distance}}}}
        \\[2pt]&
        + \underbrace{L_{\S}(\theta_\S)}_{\text{training error}} + \zeta \sqrt{\frac{2}{c'} \log \Big(\frac{1}{\delta}\Big)} + \tilde{\lambda}
    \end{aligned}
    \end{equation}
    where $W_1$ is the 1-Wasserstein distance, $\zeta$ is a constant given by $\zeta := B^{-\frac{1}{2}} + 2|\Q|^{-\frac{1}{2}}+ |\T|^{-\frac{1}{2}}$, and $\tilde{\lambda}$ is the minimum combined error $L_\S(\theta_{\tilde{\S}}) + 2L_\Q(\theta_{\tilde{\S}}) + L_\T(\theta_{\tilde{\S}})$ over datasets $\tilde{\S} \subseteq \D$ of size $B$.
\end{theorem}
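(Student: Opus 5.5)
The plan is to reduce the statement to two applications of the Redko et al.\ (2017) Wasserstein domain-adaptation bound, chained through the query set $\Q$, in the style of the two-stage adaptation described just before the theorem.

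\textbf{Step 1: a single-transfer lemma.} Under the stated assumptions on $\ell$ (symmetric, convex, bounded, triangle inequality, form $|y-f_\theta(x)|^q$), the Redko et al.\ argument applies. For two distributions $P_A,P_B$ and any $\theta$ it gives
\[
L_{P_B}(\theta)\le L_{P_A}(\theta)+W_1(P_A,P_B)+\lambda_{AB}, \qquad \lambda_{AB}=\min_{\theta'}\big(L_{P_A}(\theta')+L_{P_B}(\theta')\big).
\]
The proof is short. The triangle inequality yields $L_{P_B}(\theta)\le L_{P_B}(\theta')+L_{P_B}(\theta,\theta')$. One then passes from $L_{P_B}(\theta,\theta')$ to $L_{P_A}(\theta,\theta')$ by Kantorovich--Rubinstein duality, using that $z\mapsto \ell$ is Lipschitz (normalized to constant $1$) over the bounded data space. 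Finally, the triangle inequality on the $P_A$ side is applied again.

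\textbf{Step 2: two transfers through $\Q$.} First apply Step 1 with $A=\S$ and $B=\Q$ to $\theta_\S$. Then apply it with $A=\Q$ and $B=\T$. Summing the two inequalities bounds $L_\T(\theta_\S)$ by $L_\S(\theta_\S)+W_1(\cdot_\S,\cdot_\Q)+W_1(\cdot_\Q,\cdot_\T)$ plus two joint-error terms. Fixing a single comparator $\theta_{\tilde\S}$ in both steps, rather than minimizing separately, combines these terms into $L_\S+2L_\Q+L_\T$ evaluated at that comparator. This is why $\Q$ carries the factor 2. Minimizing over subsets $\tilde\S\subseteq\D$ of size $B$ then yields $\tilde\lambda$.

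\textbf{Step 3: empirical measures.} If Step 1 is applied directly to the empirical measures $\hat P_\S,\hat P_\Q,\hat P_\T$, the bound is purely deterministic. However, the theorem's concentration term $\zeta\sqrt{(2/c')\log(1/\delta)}$ and its sample-size condition indicate that the intended route, following Redko et al., runs through true distributions. Under that route, each population $W_1$ is replaced by its empirical version using the Bolley--Guillin--Villani concentration inequality for empirical Wasserstein distance: with probability $1-\delta$,
\[
W_1(\hat P_N,P)\le \sqrt{\tfrac{2}{c'}\log\tfrac1\delta}\,N^{-1/2},
\]
valid for $d'>d$, $c'<\sqrt2$ and $N\ge N_0\max(\delta^{-(d'+2)},1)$.

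The two chained transfers are applied to the triangle-inequality expansion
\[
W_1(P_\S,P_\Q)\le W_1(\hat P_\S,\hat P_\Q)+W_1(\hat P_\S,P_\S)+W_1(\hat P_\Q,P_\Q),
\]
together with the analogous expansion for $(\Q,\T)$. This produces deviation rates $B^{-1/2}$, $|\Q|^{-1/2}$ twice, and $|\T|^{-1/2}$, which sum to $\zeta$. The assumption $|\Q|\le\min(|\T|,B)$ makes the sample-size condition on $|\Q|$ sufficient for all three sets. The failure probability $2\delta$ comes from a union bound over the two transfer steps, with each step's empirical deviations controlled jointly as in Redko et al.

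\textbf{Main obstacle.} I expect Step 3 to be the hardest: matching the exact constants and the $1-2\delta$ probability, while keeping the interpretation of empirical versus population losses consistent. In particular, $L_\T$ is an empirical loss on $\T$, so the cleanest route is to identify each dataset with its population distribution where convenient and let the concentration terms absorb the gap. I would first reproduce Redko et al.'s Theorem 2 verbatim for each stage, then simply add the two stages.
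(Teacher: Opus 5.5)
Your proposal is correct and matches the paper's proof. The paper likewise applies its adapted Redko et al.\ bound (Lemma~\ref{lemma:DA}) twice to the fixed $\theta_{\S}$, once from $\Q$ to $\T$ and once from $\S$ to $\Q$. It sums the two concentration terms into $\zeta$, takes a union bound to get $1-2\delta$, and uses $|\Q|\le\min(|\T|,B)$ to reduce the sample-size condition to one on $|\Q|$.

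The differences are minor. The paper merges the joint errors via $\lambda_{\T,\Q}+\lambda_{\Q,\S}\le\min_{\theta}\{L_{\S}(\theta)+2L_{\Q}(\theta)+L_{\T}(\theta)\}\le\tilde{\lambda}$, which is equivalent to your fixing a common comparator $\theta_{\tilde{\S}}$ up front. The paper also simply states its lemma with empirical losses. Your remark that the single-transfer inequality, applied directly to the empirical measures, already gives the bound deterministically is sound, under the same Lipschitz hypothesis your Step~1 uses. It disposes of the empirical-versus-population bookkeeping you flagged as the main obstacle.
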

\vspace{-0.15cm}
\noindent\textit{Proof in Appendix \ref{app:proofs:subset_da}}

Among the terms in~\eqref{eq:subset_da_bound:main}, only $W_1(\hat P_\S,\hat P_\Q)$ is directly affected by the choice of $\S$ (for fixed $\Q,\T$). Consequently, selecting $\S$ to (approximately) minimize $W_1(\hat P_\S,\hat P_\Q)$ is a principled objective: any reduction in this transport distance directly tightens the right-hand side, and hence the worst-case upper bound on $L_\T(\theta_\S)$. The second Wasserstein term, $W_1(\hat P_\Q,\hat P_\T)$, quantifies mismatch between the query distribution and the downstream evaluation distribution. This term is independent of $\S$, and therefore sets an irreducible error on how informative $\Q$ is about $\T$.
\looseness-1

At the same time, the theorem does not imply that minimizing $W_1(\hat P_\S,\hat P_\Q)$ alone guarantees strong target performance. The bound also includes the training error $L_\S(\theta_\S)$ and the term $\tilde{\lambda}$. In the instruction-selection setting, these terms capture two additional failure modes: (i) $\S$ may be close to $\Q$ yet noisy or internally inconsistent, leading to large $L_\S(\theta_\S)$; and (ii) there may be no single hypothesis that performs well simultaneously on $\S$, $\Q$, and $\T$, leading to large $\tilde{\lambda}$. The quantity $\tilde{\lambda}$ can be interpreted as an ideal joint error term: it is small when the candidate pool $\D$ contains a size-$B$ subset whose ERM achieves low loss on $\S$, $\Q$, and $\T$ simultaneously. Consequently, if $\D$ lacks coverage of the skills required by $\Q$ and $\T$, or if the hypothesis class cannot realize a predictor that works well across these datasets, then no selection rule based purely on distribution matching can guarantee strong downstream performance. The remaining terms account for finite-sample effects through $\zeta$.
\looseness-1
\vspace{-0.2cm}

\subsection{Diminishing Returns of Query-Aware Selection as Budget Increases} \label{subsec:theory2}

Computing the exact subset that minimizes the distance to the query set is a combinatorial problem, and even approximate distance-minimization methods can be computationally expensive. In contrast, uniformly sampling $B$ samples from the candidate pool $\D$ is essentially cost-free and becomes increasingly competitive as the selection budget $B$ grows.
For analysis, we consider the random baseline obtained by sampling $B$ points i.i.d. from the empirical pool distribution $\hat P_{\D}$ (i.e., with replacement).
Then, for a fixed budget $B$, Theorem~\ref{theorem:gain_wass} formalizes the benefit of query-aware selection by upper bounding the improvement in test loss achieved by training on a Wasserstein-optimal subset---i.e., the subset closest to $\Q$---relative to this random sample of the same size.\looseness=-1

\begin{theorem}\label{theorem:gain_wass}
Let the candidate pool $\D \subset \mathbb{R}^d$ lie in a set of diameter $\Delta$ and $d\geq3$. Let $\S^\mathrm{rnd} := (s_1,\ldots,s_B)$ be a tuple of $B$ elements sampled i.i.d. uniformly from $\D$, and let $S^*_W \subseteq \D$ be a subset of size $B$ that minimizes the 1-Wasserstein distance to the query set $\Q$, i.e., $\S^*_W \in \argmin_{\S \subseteq \D; |\S|=B} W_1(\hat{P}_\S,\hat{P}_{\Q})$. Assume (i) $L_\S(\theta)$ is $\mu$--strongly convex in $\theta$, (ii) $L_\T(\theta)$ is $K$-Lipschitz, and (iii) $\nabla_\theta\ell(\theta;z)$ is $G_{\theta z}$-Lipschitz with respect to $z$. Then, there exists a constant $C_d>0$, depending only on the dimension $d$, such that, with probability at least $1-2\delta$:
\begin{align*}
    L_\T(\theta_{\S^\mathrm{rnd}}) - L_\T(\theta_{S^*_W}) \\
    \leq 
    C_{\star} \Bigg( & \underbrace{C_d \Delta B^{-1/d}}_{\mathclap{\substack{\text{Curse of}\\ \text{dimensionality}}}} \ + \ \underbrace{\Delta \sqrt{\frac{\log(1/\delta)}{2B}}}_{\text{Concentration bound}}
\\ + \ &\underbrace{W_1(\hat{P}_\D, \hat{P}_\Q)}_{\text{Pool-query mismatch}} + \ \underbrace{W_1(\hat{P}_{S^*_W}, \hat{P}_\Q)}_{\text{Distance residual}} \Bigg)
\end{align*}
\end{theorem}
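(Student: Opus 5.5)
We follow a stability-of-minimizers argument: a Wasserstein perturbation of the training distribution yields a bounded perturbation of the ERM parameter, and hence of the test loss. Both sides of the inequality compare ERMs trained on two empirical distributions, so we first control the test-loss gap by a parameter gap, then the parameter gap by a Wasserstein distance between the two training distributions, and finally split that distance with the triangle inequality into the four displayed terms. The constant $C_\star$ will absorb $K$, $\mu$, and the Lipschitz constant $G_{\theta z}$; concretely we aim for $C_\star = K G_{\theta z}/\mu$.

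\textbf{Step 1 (loss to parameters).} By assumption (ii), $L_\T(\theta_{\S^\mathrm{rnd}}) - L_\T(\theta_{S^*_W}) \le K\|\theta_{\S^\mathrm{rnd}} - \theta_{S^*_W}\|$.

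\textbf{Step 2 (parameters to distributions).} Write $\theta_1 = \theta_{\S^\mathrm{rnd}}$ and $\theta_2 = \theta_{S^*_W}$. By first-order optimality, $\nabla L_{\S^\mathrm{rnd}}(\theta_1) = 0 = \nabla L_{S^*_W}(\theta_2)$. Strong convexity of $L_{S^*_W}$ (assumption (i)) then gives
\[
\mu\|\theta_1-\theta_2\| \le \|\nabla L_{S^*_W}(\theta_1) - \nabla L_{S^*_W}(\theta_2)\| = \|\nabla L_{S^*_W}(\theta_1) - \nabla L_{\S^\mathrm{rnd}}(\theta_1)\|.
\]
The right-hand side is $\|\E_{\hat P_{S^*_W}}\nabla\ell(\theta_1;z) - \E_{\hat P_{\S^\mathrm{rnd}}}\nabla\ell(\theta_1;z)\|$. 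By assumption (iii), each coordinate direction of $z\mapsto\nabla_\theta\ell(\theta_1;z)$ is $G_{\theta z}$-Lipschitz; pairing with a unit vector and applying Kantorovich--Rubinstein duality bounds this quantity by $G_{\theta z} W_1(\hat P_{\S^\mathrm{rnd}}, \hat P_{S^*_W})$. Hence $\|\theta_1-\theta_2\| \le (G_{\theta z}/\mu)\, W_1(\hat P_{\S^\mathrm{rnd}}, \hat P_{S^*_W})$.

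\textbf{Step 3 (triangle inequality).} Insert $\hat P_\D$ and $\hat P_\Q$:
\[
W_1(\hat P_{\S^\mathrm{rnd}}, \hat P_{S^*_W}) \le W_1(\hat P_{\S^\mathrm{rnd}}, \hat P_\D) + W_1(\hat P_\D, \hat P_\Q) + W_1(\hat P_\Q, \hat P_{S^*_W}).
\]
The last two terms are exactly the pool--query mismatch and the distance residual in the statement. It therefore remains to bound $W_1(\hat P_{\S^\mathrm{rnd}}, \hat P_\D)$, the empirical convergence of $B$ i.i.d.\ samples drawn from the finite measure $\hat P_\D$ supported in a set of diameter $\Delta$.

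\textbf{Step 4 (empirical convergence).} We split the remaining term as $W_1(\hat P_{\S^\mathrm{rnd}},\hat P_\D) \le \E W_1 + (W_1 - \E W_1)$.
\begin{itemize}
\item For the expectation, the Dereich--Scheutzow--Schottstedt / Fournier--Guillin bound for $d\ge 3$ gives $\E W_1 \le C_d\Delta B^{-1/d}$, after rescaling the support to unit diameter.
\item For the fluctuation, changing one sample $s_i$ moves $W_1$ by at most $\Delta/B$, since it moves mass $1/B$ over a distance of at most $\Delta$. McDiarmid's inequality then gives $W_1 - \E W_1 \le \Delta\sqrt{\log(1/\delta)/(2B)}$ with probability $1-\delta$.
\end{itemize}
The $1-2\delta$ in the statement presumably comes from a union bound with a second event; since our argument needs only this one event, the stated probability is weaker and therefore still holds. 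Combining Steps 1--4 yields the claim with $C_\star = KG_{\theta z}/\mu$.

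\textbf{Main obstacle.} The delicate step is Step 2. There are three points to handle.
\begin{itemize}
\item We must state the vector-valued Kantorovich--Rubinstein step carefully. Taking the supremum over unit vectors $u$ of the scalar $1$-Lipschitz-scaled functions $z\mapsto u^\top\nabla\ell(\theta_1;z)$ works, provided assumption (iii) is read in operator norm.
\item Strong convexity must be used for the right empirical loss; we apply it to $L_{S^*_W}$, so we need (i) to hold uniformly over subsets.
\item $\S^\mathrm{rnd}$ is a tuple that may contain repeats, so its empirical measure carries multiplicities, but this is harmless for both $W_1$ and the ERM.
\end{itemize}
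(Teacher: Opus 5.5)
Your proposal is correct and follows the paper's proof essentially step for step. The paper packages your Steps 1--2 as a separate Wasserstein stability lemma with the same constant $C_\star = K G_{\theta z}/\mu$, obtained from strong convexity, first-order optimality and Kantorovich--Rubinstein duality. It then uses the same triangle inequalities through $\hat P_\Q$ and $\hat P_\D$, and bounds $W_1(\hat P_{\S^\mathrm{rnd}},\hat P_\D)$ with the same Fournier--Guillin expectation bound plus McDiarmid with bounded differences $\Delta/B$; that lemma, like your argument, only needs a single $1-\delta$ event.
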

\vspace{-0.15cm}
\noindent\textit{Proof in Appendix \ref{app:proofs:gain_wass}.}

Theorem~\ref{theorem:gain_wass} shows that the upper bound on the potential improvement in test loss from training on a query-aware subset $S_W^*$ decreases as the budget $B$ increases. In particular, when $d\geq3$, the leading $B$-dependent term in the upper bound scales as $C_{\star}C_d\Delta\,B^{-1/d}$ (the $B^{-1/2}$ concentration term is lower order). Thus, up to the additive shift induced by pool--query mismatch $W_1(\hat P_\D,\hat P_\Q)$ (and the residual $W_1(\hat P_{S_W^*},\hat P_\Q)$), we can define a critical subset size for a tolerance $\varepsilon$: requiring $C_{\star}C_d\Delta\,B^{-1/d} \le \varepsilon$ yields $B \ge (C_{\star}C_d\Delta/\varepsilon)^d$. This highlights the curse of dimensionality: in high-dimensional embedding spaces, making random sampling competitive can require $B$ to grow on the order of $\varepsilon^{-d}$, whereas query-aware selection can be especially beneficial at substantially smaller budgets.
The intrinsic dimension of these features could be much lower, which would further tighten the bound~\citep{weed:bernoulli17}. 
\looseness-1

Figure \ref{fig:diff_loss} shows the change in performance gap between LESS variants and random sampling as the budget increases on MMLU-Pro.
As $B$ increases, the gap shrinks, and the performance approaches the random sampling, which is qualitatively aligned with Theorem \ref{theorem:gain_wass}.
While $B^{-1/d}$ reference rate shows a slow worst-case decay in high dimensions, we see that LESS (KNN-Unif.), LESS (KNN-KDE), and LESS (UOT) exhibit a slower or a similar decay up to $2{,}500$ samples, suggesting the bound is non-trivial in this regime.
Finally, the LESS variants approach the random baseline at different rates, suggesting that the choice selection algorithm affects the constants and residual errors.\looseness-1

\begin{figure}[t]
    \centering
    \includegraphics[width=\linewidth]{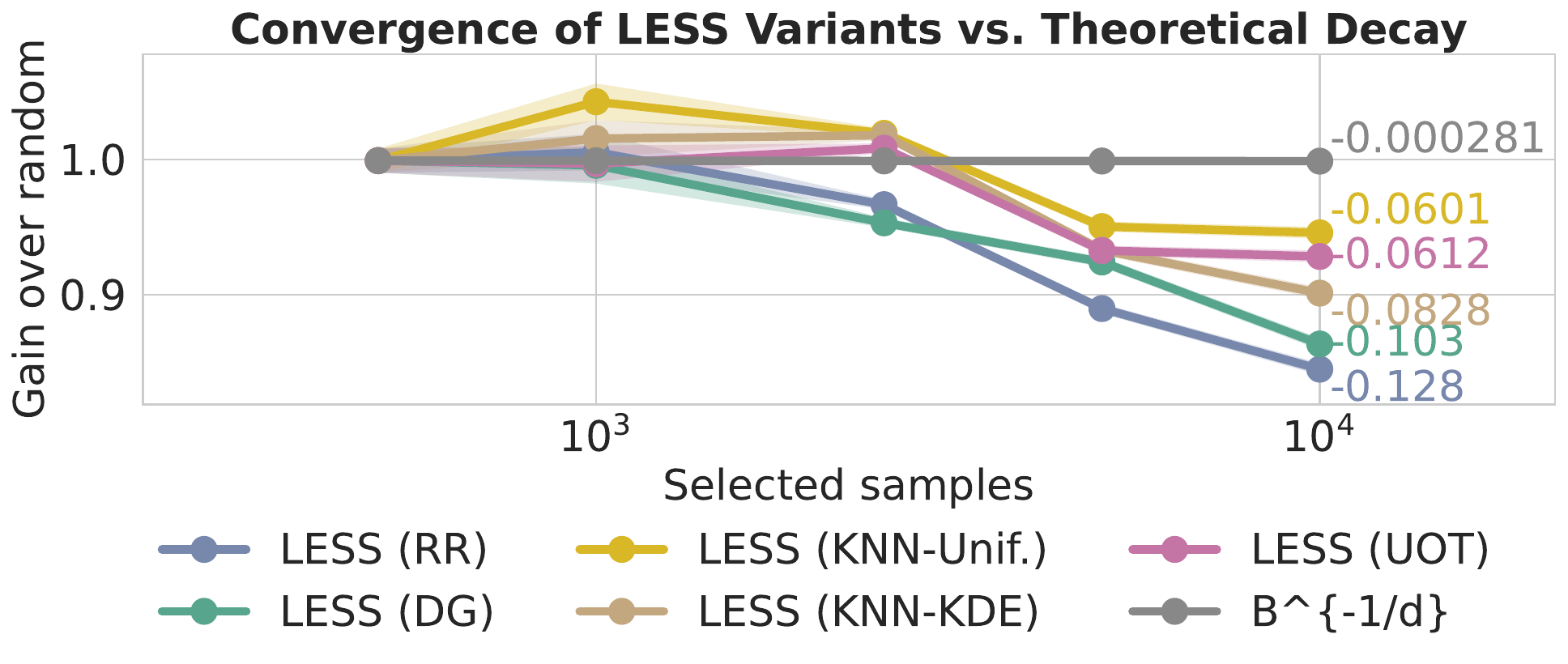}
    \caption{\textbf{Convergence of LESS variants toward random sampling as budget increases.}
    We plot the loss gap between random sampling and the LESS variants as a function of budget $B$ (log-scale) on MMLU-Pro. 
    The gray reference line indicates the $B^{-1/d}$ decay predicted by Theorem~\ref{theorem:gain_wass}.
    We report the average difference in loss across one seed of LESS and three randomly sampled multisets of size $B$, along with the standard error.
    For visual comparison of decay rates, we apply a constant offset to each LESS curve so that all methods start at $B_{0}^{-1/d}$ where $B_{0}=500$ and $d=8192$. 
    }
    \label{fig:diff_loss}
    \vspace{-0.5cm}
\end{figure}

%% file: sections/related_work.tex
\section{Related Work}\label{section:related_work}

\paragraph{Subset Selection.}
Subset selection (or coreset selection) is a fundamental task in machine learning, where the goal is to choose a subset of the data from a large candidate pool for training and achieve downstream performance similar to or better than that of training on the entire candidate pool~\citep{wei:icml15,huang:neurips19,moser:arxiv25}.
Over the years, several techniques involving clustering~\citep{har:acm04,chen:arxiv23}, gradient matching~\citep{killamsetty:icml21}, kernel thinning~\citep{dwivedi:jmlr24,carrell:icml25}, and proxy models~\citep{ye:icml25,magnusson:icml25} have been proposed to efficiently select core training samples in a variety of domains~\citep{hulkund:arxiv25}.
In this work, we focus on selecting a subset of instruction-response pairs from a candidate pool for a target task.
\looseness-1

\paragraph{Instruction Selection.}
Instruction selection is a key ingredient in today's large language model training pipeline~\citep{olmo:arxiv25}. 
While curation via careful experimentation in the post-training pipeline has led to dramatic performance improvements~\citep{longpre:icml23,guha:arix25}, there is a growing interest in automatically curating instruction tuning datasets~\citep{liu:iclr24,yin:iclr25}.
We focus on instruction selection for target tasks, selecting samples within a given budget from a candidate pool.
Existing work on automatic instruction selection often relies on heuristics such as length~\citep{zhao:icml24}, perplexity~\citep{ankner:arxiv24}, number of reasoning~\citep{li:arxiv25}, task similarity~\cite{xia:icml24,ivison:arxiv25,nikdan:neurips25}, and more~\citep{albalak:tmlr24}.
Our focus is to understand targeted instruction selection that uses similarity between the query and the candidate pool to select subsets.
\looseness-1

%% file: sections/conclusion.tex
\section{Conclusion}
We disentangle the key components of targeted instruction selection and provide new critical empirical and theoretical insights. 
Our experiments reveal that gradient-based data representations create subsets whose distances to the query strongly predict performance. 
Exploring computationally cheaper alternatives to produce such representations is a promising research direction (Appendix \ref{app:cheaper_proxy}).
Our theoretical insight, which unifies several existing selection algorithms as approximate distance minimizers, provides a general framework for creating new selection algorithms.
Overall, we present a practical roadmap for designing improved targeted instruction selection frameworks.
\looseness-1

%% file: sections/acknowledgments.tex
\section*{Acknowledgments}
We thank Yonatan Belinkov, Bingbin Liu, Lyndon Lam, and the members of the ML Foundations group and the Kempner Institute for thoughtful feedback on the manuscript. 
Nihal V. Nayak, Paula Rodriguez-Diaz, and David Alvarez-Melis acknowledge support from the NSF AI-SDM Institute (Award No. 2229881), the Kempner Institute, the FAS Dean’s Competitive Fund for Promising Scholarship, and the Aramont Fellowship Fund.
Neha Hulkund is supported by the NSF GRFP DGE-2146755.
Sara Beery acknowledges support by the Schmidt Sciences AI2050 Program, NSF Awards No. 2330423 and 2441060, NSERC Award No. 585136, and the MIT-IBM Watson AI Lab.

%% file: sections/impact.tex
\section*{Impact Statement}
This paper presents work aimed at advancing targeted instruction selection in large language models (LLMs). 
While our work comprehensively evaluates numerous LLMs across many target tasks, we rely on existing pre-trained LLMs. 
Further fine-tuning these pre-trained LLMs might amplify any pre-existing biases or might even hurt performance on unrelated tasks. 
Before deploying any of these models, we recommend conducting careful evaluations on target tasks and additional safety checks.

%% file: appendices/appendix_list.tex
\input{appendices/existing_methods}

\input{appendices/datasets}

\input{appendices/ot}

\input{appendices/less}
\input{appendices/implementation_embed}
\input{appendices/implementation_selection_algorithm}

\input{appendices/training_details}

\input{appendices/main_experiments_extended}

\input{appendices/proofs}
\input{appendices/model_ablations}
\clearpage
\input{appendices/multitask_ablation}
\clearpage
\input{appendices/dolci_model_ablations}

%% file: appendices/existing_methods.tex
\section{Fragmented Literature on Targeted Instruction Selection}\label{app:fragmented}
\begin{table}[t!]
    \centering
    \begin{tabular}{lccccc}\toprule
       \textbf{Existing Work}  & \textbf{Data Representation} & \textbf{Similarity} & \textbf{Selection Algorithm} & \textbf{Candidate Pool} & \textbf{Budget} \\\midrule
        \citet{xia:icml24} & LESS & Cosine sim. & Doubly Greedy (DG) & Tulu V1 (Subset) & $13{,}533$ \\
        \citet{liu:neurips24} & LESS & L2 dist. & KNN-KDE, KNN-Unif. &  Tulu V1 (Subset) & Multiple \\
        \citet{ivison:arxiv25} & RDS+ & Cosine sim. & Round-Robin (RR) &  Tulu V2 & $10{,}000$ \\\bottomrule
    \end{tabular}
    \caption{\textbf{Overview of the fragmented literature on targeted instruction selection.} We summarize prior work and highlight differences in data representation, similarity between data representations, selection algorithm, candidate pool, and selection budget, which prevent us from systematically understanding which of these key factors contributes to performance. Note that \citet{xia:icml24} uses 5\% of the candidate pool as the budget, resulting in $13{,}533$ samples whereas \citet{liu:neurips24} uses 0.5\%, 1.0\%, and 5.0\% of the candidate pool as the budget.}
    \label{tab:fragmented_existing_work}
\end{table}

Table \ref{tab:fragmented_existing_work} gives an overview of the fragmented literature on targeted instruction selection. 
We show that existing work, notably ~\citet{xia:icml24},~\citet{liu:neurips24}, and~\citet{ivison:arxiv25}, widely vary in their data representations, similarity metrics, selection algorithm, candidate pool, and budget. 
While \citet{xia:icml24} and \citet{liu:neurips24} are similar in many ways, we observed that the selection algorithm in ~\citet{liu:neurips24} makes additional changes to the data representation construction process and uses L2 distances between query and the candidates (Appendix \ref{app:knn_unif_kde}). 
These small but key differences prevent us from systematically comparing the selection algorithms, KNN-KDE and KNN-Uniform, introduced in \citet{liu:neurips24}. 
Finally,~\citet{ivison:arxiv25} uses Tulu V2 mixture~\citep{ivison:arxiv23} as the candidate pool but also changes the data representation, similarity metric, and budget. 
For these reasons, we aim to bring clarity to this important but rather fragmented literature through systematic experiments.

%% file: appendices/datasets.tex
\section{Target Tasks}\label{app:target_tasks}
We closely follow the target tasks from ~\citet{ivison:arxiv25}.
In addition, we include the MMLU-Pro dataset~\citep{wang:neurips24}. 
Table \ref{tab:target_tasks} provides statistics of the target tasks used in this work. 
Below, we provide descriptions of the target tasks:

\paragraph{BBH (BIG-Bench Hard).}
BBH is a curated subset of tasks from BIG-Bench that are empirically difficult for language models without advanced reasoning strategies. The benchmark includes tasks such as logical deduction, causal reasoning, temporal reasoning, symbolic manipulation, and algorithmic problem solving. BBH is commonly used to assess emergent reasoning abilities and sensitivity to prompting strategies, particularly few-shot and chain-of-thought prompting.
We use the few-shot examples across all subtasks ($3\times27=81$ samples) as our query set. 
In the query set, all the few-shot examples are treated as individual examples.
We follow the evaluation in ~\citep{suzgun:findings23} and evaluate the model with a 3-shot chain-of-thought. 
We report the average exact match on the test set.

\paragraph{Codex.}
Codex is an evaluation suite for coding-based models based on the correctness of completing coding tasks at scale, testing advanced reasoning ability over short and long-context coding tasks \cite{chen:arxiv21}.
We closely follow the custom query-set/test-set split from \citet{ivison:arxiv25}, which divides the existing dataset of $164$ examples into a query set of $16$ samples and uses the remaining examples as the test set.
For downstream evaluation on the test set, we report the pass@10 with a sampling temperature of $0.8$. 

\paragraph{GSM8K.}
GSM8K is a dataset of grade-school-level math word problems that require multi-step arithmetic reasoning. Each problem is paired with a detailed, step-by-step solution that explicitly outlines the reasoning process leading to the final answer. The dataset is widely used to evaluate numerical reasoning, chain-of-thought capabilities, and the ability of language models to perform symbolic manipulation and logical decomposition rather than surface-level pattern matching.
We treat the $8$-shot examples as individual samples and use them as our query set. 
We evaluate the test set with the 8 in-context samples using the chain-of-thought and report the exact match.

\paragraph{TyDiQA.} TyDiQA is a multilingual question answering dataset built from Wikipedia that contains real information-seeking questions written by native speakers in 11 typologically diverse languages, avoiding translation artifacts. The dataset is widely used to evaluate multilingual and cross-lingual QA robustness, especially under linguistic diversity and distribution shift.
We follow \citet{ivison:arxiv25} and evaluate the models in a 1-shot setting across 9 languages where the answer is provided in the passage. 
We use the 1-shot samples across the languages as the query set.

\paragraph{MMLU Pro.} 
MMLU-Pro is designed to evaluate advanced reasoning across a broad set of academic and professional domains, emphasizing multi-step reasoning and conceptual understanding, using carefully curated questions with reduced surface cues and strong controls against memorization or data leakage. It is commonly used to probe the upper limits of model reasoning performance in settings that more closely reflect expert-level problem solving.
We treat the few-shot samples in the validation set as individual samples and use them as our query set. 
We perform zero-shot chain-of-thought evaluation on the test set and report exact match scores.

In contrast to ~\citet{ivison:arxiv25}, we do not include MMLU and SQuAD in the main experiments, as these datasets are largely saturated, making it harder to see clear performance trends.
We also exclude AlpacaEval from the comparison because it relies on the GPT API as an evaluator, and conducting a large-scale study would be prohibitively expensive.
\begin{table}[t]
    \centering
    \begin{tabular}{lccc}\toprule
         \textbf{Target Task} & \textbf{Num. Query Set Samples} & \textbf{Num. Test Set Samples} & \textbf{Target Task Evaluation} \\\midrule
         BBH (BIG-Bench Hard) &  $81$ & $6{,}511$ & $3$-Shot\\
         Codex & $16$ & $148$ & $0$-Shot \\
         GSM8K & $8$ & $1{,}319$ & $8$-Shot\\
         TyDiQA & $9$ & $5{,}077$ & $1$-Shot \\
         MMLU Pro & $70$ & $12{,}032$ & $0$-Shot \\\bottomrule
    \end{tabular}
    \caption{\textbf{Statistics of target tasks used in our experiments.} The query set is used to select samples from the candidate pool to train the base model, and then evaluated on the test set from the target task. 
    Each target task is evaluated using 0 to N few-shot in-context examples, and we reuse the query set samples as these few-shot examples if mentioned in Appendix \ref{app:target_tasks}.}
    \label{tab:target_tasks}
\end{table}

%% file: appendices/ot.tex
\section{Background on Optimal Transport}\label{app:ot}

Optimal Transport (OT) is a principled approach for comparing probability distributions based on their underlying geometry, with strong theoretical guarantees \cite{villani:springer08}. In machine learning, OT has been applied to a variety of domains, including domain adaptation \cite{courty:pami14}, generative modeling \cite{arjovsky:icml17}, and distance metrics \cite{alvarez:neurips20}. The OT problem considers a complete metric space $\mathcal{X}$ with probability measures $\mu,\nu \in \mathcal{P}(\mathcal{X})$, which can be either discrete or continuous. The Kantorovich formulation of the transportation problem is defined as:
$$\text{OT}(\mu,\nu)=\min_{\pi\in\Pi(\mu,\nu)}\int_{\mathcal{X}\times\mathcal{X}}c(x,y)d\pi(x,y)$$
where $c(x,y)$ is a cost function over $\mathbb{R}^+$ and the set of couplings $\Pi(\mu,\nu)$ is defined as the joint probability distributions over the product space $\mathcal{X}\times\mathcal{X}$ with marginals $\mu$ and $\nu$ such that $\Pi(\mu,\nu)=\{\pi\in\mathcal{P}(\mathcal{X}\times\mathcal{X})|P_{1\#}\pi=\mu,P_{2\#\pi}=\nu\}$. When $\mathcal{X}$ has a given distance metric $d_{\mathcal{X}}$, this is often treated as the cost function such that $c(x,y)=d_{\mathcal{X}}(x,y)^p$ for some $p\geq1$. This is commonly defined as the $p-$Wasserstein distance where $W_p(\mu,\nu)=\text{OT}(\mu,\nu)^{\frac{1}{p}}.$

In the discrete optimal transport setting with finite samples, as considered in this work, let $\mu$ and $\nu$ be discrete probability measures defined as $\mu = \sum_{i=1}^n u_i \delta_{x_i}$ and $\nu = \sum_{j=1}^m v_j \delta_{y_j},$
where $\{x_i\}_{i=1}^n \in \mathcal{X}^n$ and $\{y_j\}_{j=1}^m \in \mathcal{Y}^m$ are points in a metric space (e.g., feature embeddings), with $u \in \mathbb{R}_+^n$, $v \in \mathbb{R}_+^m$, and $\sum_i u_i = \sum_j v_j = 1$. This is now a linear program and can be solved using classical solvers; however, its worst-case complexity is $O(N^3)$, which is often prohibitive in large-scale settings.

\subsection{Entropy-Regularized Optimal Transport}\label{app:background:er_ot}

To improve computational efficiency, entropy regularization is commonly applied:
\[\text{OT}(\mu,\nu)=\min_{\pi\in\Pi(\mu,\nu)} \int_{\mathcal{X}\times\mathcal{X}}c(x,y)d\pi(x,y)+\varepsilon \text{H}(\pi|\mu \otimes \nu)\]
where $\text{H}(\pi|\mu \otimes \nu) = \int log(d\pi/d\mu d\nu)$ denotes the entropy of the transport plan. This objective is strongly convex and differentiable, and can be solved efficiently using the Sinkhorn--Knopp algorithm for the discrete case  \cite{sinkhorn:annals64}.

\subsection{Unbalanced Optimal Transport}\label{app:ot_background:uot}
Classical optimal transport assumes that the two measures have equal total mass. In many applications, such as noisy data, out-of-distribution comparison, or dataset selection, this assumption is undesirable. Unbalanced optimal transport relaxes the hard marginal constraints by penalizing deviations from the prescribed marginals \cite{benamou:esaim03,chizat:mathematics18,liero:Inventiones18,peyre:foundations19,sejourne:arxiv22}.

The unbalanced optimal transport problem is defined as
\[
\min_{\pi \in \mathcal{M}_+(\mathcal{X} \times \mathcal{X})}
\;
\int_{\mathcal{X} \times \mathcal{X}} c(x,y)\, d\pi(x,y)
+ \lambda_1 \, D(P_{1\#}\pi \,\|\, \mu)
+ \lambda_2 \, D(P_{2\#}\pi \,\|\, \nu),
\]
where $P_{1\#}\pi$ and $P_{2\#}\pi$ denote the first and second marginals of $\pi$, and $D(\cdot\|\cdot)$ is a divergence between non-negative measures, commonly chosen as the Kullback--Leibler divergence.

Adding entropy regularization yields the unbalanced entropy-regularized OT objective:
\[
\min_{\pi \in \mathcal{M}_+(\mathcal{X} \times \mathcal{X})}
\;
\int_{\mathcal{X} \times \mathcal{X}} c(x,y)\, d\pi(x,y)
- \varepsilon H(\pi)
+ \tau_1 \,\mathrm{KL}(P_{1\#}\pi \,\|\, \mu)
+ \tau_2 \,\mathrm{KL}(P_{2\#}\pi \,\|\, \nu),
\]
where $H(\pi) = -\int \log\!\left(\frac{d\pi}{dx\,dy}\right)\, d\pi$ denotes the entropy of the transport plan.
As $\tau_1, \tau_2 \to \infty$, the marginal constraints are enforced exactly, recovering the balanced entropy-regularized OT formulation.

%% file: appendices/less.tex
\section{Details on LESS}\label{app:less_details}
We now describe how LESS computes an influence matrix, which we use as a similarity matrix for our selection algorithms (see Appendix \ref{app:implementation:embed:less} for implementation details).
First, we perform LoRA warmup training of the model $f_{\theta_W}$ on a small randomly sampled warmup set $W\subseteq \D$ for $T$ epochs, saving the model parameters $\{\theta_W^{(t)}\}_{t=1}^{T}$ along with the corresponding optimizer states at the end of each epoch.
Because LLMs are typically trained with Adam, LESS estimates influence via a first-order approximation of Adam training dynamics~\citep{kingma:iclr14}, extending earlier first-order influence approximations developed for SGD~\citep{pruthi:neurips20}.
Concretely, LESS represents each example using low-dimensional projected features derived from (i) the query gradient and (ii) the candidate Adam update, where the Adam update is a preconditioned gradient computed from Adam's first- and second-moment estimates stored in the optimizer state.
To make this scalable, we apply a random projection to these high-dimensional vectors to obtain compact representations in a lower-dimensional subspace.
Finally, we compute influence for every query-candidate pair by aggregating cosine similarities between the projected query gradients and the projected candidate Adam updates across checkpoints, weighted by the average learning rate between checkpoints.

More formally, for each checkpoint $t$, we first compute (i) the query gradient and (ii) the candidate Adam update vector:
\[
\nabla_{\theta}\ell\!\left(\theta_{W}^{(t)};\, q_{i}\right)\in \R^{P},
\]
and
\[
\begin{aligned}
\Gamma\!\left(\theta_W^{(t)};\, z_j\right) &= \frac{\hat m^{\,t+1}}{\sqrt{\hat v^{\,t+1}+\epsilon}},\\
g_t &:= \nabla_{\theta}\ell\!\left(\theta_W^{(t)};\, z_j\right),\\
\hat m^{\,t+1} &= \frac{\beta_1 m^{t} + (1-\beta_1)\, g_t}{1-\beta_1^{t+1}},\\
\hat v^{\,t+1} &= \frac{\beta_2 v^{t} + (1-\beta_2)\, g_t^{\odot 2}}{1-\beta_2^{t+1}}.
\end{aligned}
\]

Next, we apply a random projection $\Pi\in\R^{P\times d}$ to obtain low-dimensional representations:
\[
\Tilde{\nabla}\ell\!\left(\theta_{W}^{(t)};\, q_{i}\right)
= \Pi^{\intercal}\nabla_{\theta}\ell\!\left(\theta_{W}^{(t)};\, q_{i}\right),
\qquad
\Tilde{\Gamma}\!\left(\theta_{W}^{(t)};\, z_{j}\right)
= \Pi^{\top}\Gamma\!\left(\theta_W^{(t)};\, z_j\right).
\]

Then the influence of candidate sample $z_j$ on query sample $q_i$ is computed as the weighted cosine similarity aggregated across checkpoints:
\begin{equation}\label{eq:less_similarity}
\mathrm{Inf}_{\mathrm{Adam}}(q_{i}, z_{j})
\;=\;
\sum_{t=1}^{T}\Bar{\eta_{t}}\;
\cos\!\Big(
\Tilde{\nabla}\ell\!\left(\theta_{W}^{(t)};\, q_{i}\right),
\Tilde{\Gamma}\!\left(\theta_{W}^{(t)};\, z_{j}\right)
\Big).
\end{equation}

Here, $q_{i}$ is the $i$-th query sample, $z_{j}$ is the $j$-th candidate sample, $m^{(t)}$ and $v^{(t)}$ are the first and second moments from the saved checkpoint optimizer states obtained during warmup training, $\beta_{1}$, $\beta_{2}$, and $\epsilon$ are Adam-specific hyperparameters, and $\Pi$ is a random projection matrix whose entries are drawn from a Rademacher distribution (i.e., $\Pi_{ij}\sim \mathcal{U}(\{-1,1\})$)~\citep{johnson:contmath84,park:icml23}.
$\Bar{\eta_{t}}$ is the average learning rate between the $(t\!-\!1)$-th and $t$-th checkpoint.
We compute $\mathrm{Inf}_{\mathrm{Adam}}(q_i,z_j)$ for all query-candidate pairs to obtain an $\R^{M\times N}$ influence matrix, which we use as the similarity matrix for instruction selection.

%% file: appendices/implementation_embed.tex
\section{Implementation Details: Data Representation}\label{app:implementation_data_rep}
\subsection{RDS+}\label{app:implementation:rds}
We use the weighted mean for RDS+ from  ~\citet{ivison:arxiv25} and ~\citet{muennighoff:arxiv22}.
For an input sequence $z$ of length $L$, we compute its representation as $z = \sum_{i=1}^{L} w_i h_i$, where $i$ is the token index, $h_i$ is the $i$-th hidden state from the base pre-trained language model, and $w_i = \frac{i}{\sum_{j=1}^{L} j}$ is the positional weight. 
We truncate both queries and candidates to a maximum length of 2048 tokens.

\subsection{EMBED}\label{app:implementation:embed}
Following \citet{ivison:arxiv25}, we use GTR-T5 Base~\citep{ni:emnlp22} to get data representations for the candidate pool and query set.
For the examples in the query set, we also include an additional prefix \texttt{Instruct: Given a sample, find the passages closest to that sample. \textbackslash nQuery: \{query\}}. 

\subsection{LESS}\label{app:implementation:embed:less}
We closely follow the LESS implementation of ~\citet{xia:icml24}. 
For warmup training, we sample $10{,}000$ instruction–response pairs from the candidate pool and train the base model with LoRA~\citep{hu:iclr22} using cross-entropy loss. 
Following the original LESS setup for Llama 2 7B, we apply trainable LoRA parameters to all attention blocks during warmup. 
We follow the same LoRA setting for Llama 3.2 3B. 
For the remaining models, following recent recommendations for LoRA training~\citep{schulman:thinkingmachines25}, we apply trainable LoRA parameters to both the attention and MLP blocks. 
We use the LoRA hyperparameters from ~\citet{xia:icml24} (rank=$128$, $\alpha=512$, dropout=$0.1$), and all other hyperparameters are provided in Appendix \ref{app:training_and_hyp_details}. 
We train for 4 epochs and save each checkpoint (total of 316 steps).

Next, we compute gradients of the average loss over response tokens with respect to each LoRA checkpoint to produce the SGD update vectors for the query set and Adam update vectors for the candidate pool. 
We then apply a random projection using the TRAK package~\citep{park:icml23} to map these update vectors into $8192$-dimensional feature vectors, resulting in four sets of data representations for both the candidate pool and the query set. 
For fair comparison with RDS+ and EMBED, we do not average representations when query samples come from the same subtask; instead, we treat them as individual samples. 
We also compute the average learning rate across training steps between each checkpoint, normalize these values by dividing each by their sum, and finally compute the weighted cosine similarity between candidate and query representations to measure candidate influence on each query sample (Equation \ref{eq:less_similarity}).

%% file: appendices/implementation_selection_algorithm.tex
\section{Implementation Details: Selection Algorithms}\label{app:implementation:selection_algorithm}

\subsection{KNN KDE and KNN Uniform}\label{app:knn_kde_unif_implementation}
We use the KNN-KDE implementation from TSDS~\citep{liu:neurips24} in the released code.
We reimplement KNN-Uniform from Algorithm 1 in their paper.
We further simplify the implementation by using native PyTorch rather than the FAISS library to compute the distances between data representations. 
For a fair comparison with other methods, we use cosine distance rather than L2 distance between the query and candidate data representations. 
In KNN-KDE with LESS, we compute the cosine distance between candidate examples by modifying Equation \ref{eq:less_similarity} as $\mathbf{C_{ij}}=1 -\sum_{t=1}^{T} \Bar{\eta}_{t}\,
\cos(\ \Tilde{\nabla}\Gamma(z_{i}, f_{\hat{\vtheta}_{t}}),\ \Tilde{\nabla}\Gamma(z_{j}, f_{\hat{\vtheta}_{t}}))$. 
We closely followed the hyperparameters from ~\citet{liu:neurips24} for KNN-KDE and KNN-Uniform (if the hyperparameter is used) as set $L=5{,}000$ (prefetching nearest neighbors), $K_{KDE}=1{,}000$ (KDE neighborhood size), $\sigma=0.75$ (kernel bandwidth), and $C=5.0$. 
We set $\alpha=0.01$ so that at least $10{,}000$ candidates would have a transported mass greater than zero. \looseness-1

\subsection{Unbalanced OT (UOT)}\label{app:uot_implementation}
Algorithm \ref{alg:uot} describes the UOT selection algorithm, a new selection algorithm that explicitly solves the unbalanced optimal transport problem to obtain a transport plan, then selects the candidate samples with the highest mass transported.
We use the \texttt{sinkhorn\_unbalanced} implementation from the POT package~\citep{flamary:jmlr21,flamary:pot24} to solve the optimization problem. 
We set $\varepsilon=0.01$, $\tau_{1}=\infty$, and $\tau_{2}=0.0001$. 
In all our experiments, we compute the cosine distance between the query set and candidate pool and normalize between 0 and 1 by dividing by 2 as follows $\mathbf{C_{ij}}=\left(1 -\sum_{t=1}^{T} \Bar{\eta}_{t}\,
\cos(\ \Tilde{\nabla}\Gamma(z_{i}, f_{\hat{\vtheta}_{t}}),\ \Tilde{\nabla}\Gamma(z_{j}, f_{\hat{\vtheta}_{t}}))\right)/2$ . 
We use this normalized cosine distance matrix as our cost matrix. \looseness-1

\begin{algorithm}[h]
\caption{Unbalanced OT Selection}
\label{alg:uot}
\begin{algorithmic}
\STATE \textbf{Input:} Candidate pool $\D=\{z_{i}=(x_{i}, y_{i})\}_{i=1}^{N}$, distance matrix (or cost matrix) between the query set and the candidate pool $\mC\in \R^{M\times N}$, entropy regularization term $\varepsilon$, marginal relaxation terms $\tau_{1}$ and $\tau_{2}$ and budget $B\leq |\mathcal{D}|$.
\STATE \textbf{Output:} Selected subset $\hat{S}$.
\STATE $\Pi \leftarrow \mathrm{UOT}(\mC, \varepsilon, \tau_{1}, \tau_{2})$  \hfill {\footnotesize // solve UOT (Appendix \ref{app:ot}) and get the transport plan}
\STATE $\mathbf{r} \leftarrow \Pi^T\,\mathbf{1}_{M}$ \hfill {\footnotesize // candidate masses: $r_j=\sum_{j=1}^{M}\Pi_{i,j}$}
\STATE $\pi \leftarrow \text{argsort}(-\mathbf{r})$ \hfill {\footnotesize // indices sorted by descending $m_i$}
\STATE $\hat{S} \leftarrow \emptyset$
\FOR{$k=1$ to $B$}
  \STATE $\hat{S} \leftarrow \hat{S} \cup \{z_{\pi_k}\}$ \hfill {\footnotesize // select the top-$B$ candidates}
\ENDFOR
\STATE \textbf{return} $\hat{S}$
\end{algorithmic}
\end{algorithm}

%% file: appendices/training_details.tex
\section{Training Details and Hyperparameters}\label{app:training_and_hyp_details}
We follow a standard supervised fine-tuning pipeline and train the base model with a cross entropy loss over the response tokens.
During supervised fine-tuning, we set the maximum sequence length to 2048 tokens.
For this reason, we remove instruction-response pairs if the response does not appear in 2048 tokens to avoid a zero loss.
The preprocessed candidate dataset contains about 198K examples.
We apply the chat template to both the candidate and the query set during instruction selection, training, and evaluation, except during zero-shot evaluation of the pre-trained base language model. 

\begin{table}[!ht]
    \centering
    \begin{tabular}{lr}\toprule
    \textbf{Hyperparameters} & \textbf{Values} \\\midrule
    Learning rate & 2e-5\\
    Learning rate scheduler & linear \\
    Number of epochs & 2 \\
    Warmup ratio & 0.03 \\
    Optimizer & AdamW \\
    Adam betas & (0.9, 0.999) \\
    Adam epsilon & 1e-8 \\
    Weight decay & 0.0 \\
    Max. gradient norm & 1.0 \\
    Max. sequence length & 2048 \\
    Effective batch size & 128 \\
    Mixed precision & bf16 \\
    \bottomrule
    \end{tabular}
    \caption{Hyperparameters used to train the base models on the selected instructions.}
    \label{tab:hyperparameters}
\end{table}

Table \ref{tab:hyperparameters} lists all the hyperparameters used to train the base models on the selected data.
We closely follow the training setup from ~\citet{ivison:arxiv25}. 
We run all the training and evaluation experiments on a single NVIDIA H100 GPU with 80GB of memory.\looseness-1

%% file: appendices/main_experiments_extended.tex
\begin{figure*}[ht!]
    \centering
    \includegraphics[width=1\linewidth]{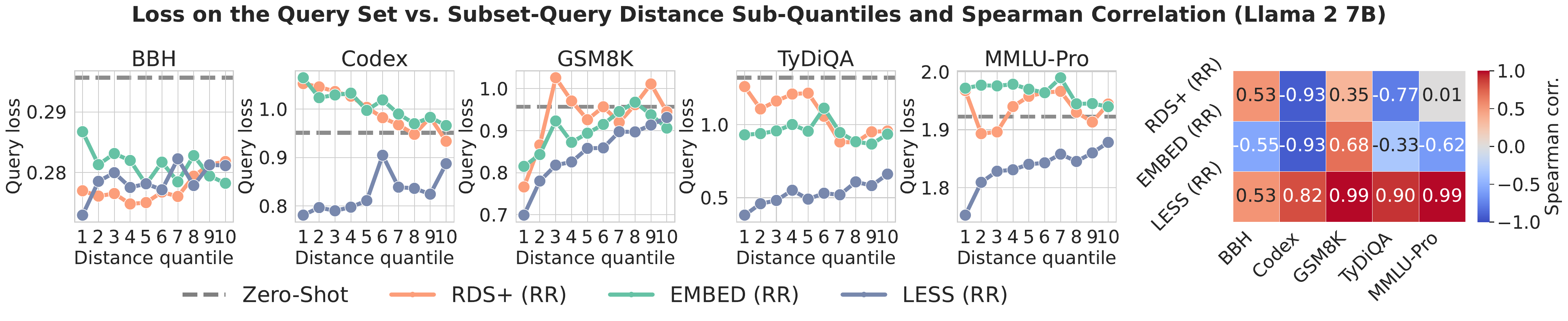}
    \caption{\textbf{Query loss across subset-query distance sub-quantiles and Spearman correlation.} We further stratify the first distance quantile from Section \ref{exp:distance_quantile} into 10 sub-quantiles (1 = closest, 10 = farthest), select 500 examples per sub-quantile, and train the Llama~2~7B model. We report loss on the query set and Spearman correlation per dataset. LESS (RR) maintains a strong monotonic increase in loss with distance (high Spearman correlation), whereas RDS+ (RR) and EMBED (RR) show weak or inconsistent correlations.}

    \label{fig:subbin_ce_loss}
\end{figure*}
\begin{figure*}[ht!]
    \centering
    \includegraphics[width=1\linewidth]{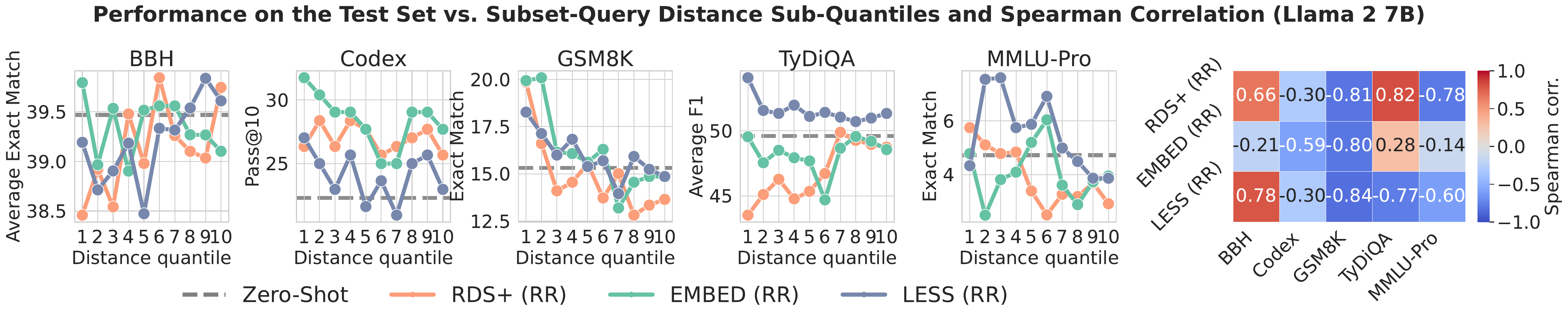}
    \caption{\textbf{Downstream performance across subset-query distance sub-quantiles and Spearman correlation.} Using the same sub-quantile construction and training protocol as Figure \ref{fig:subbin_ce_loss}, we evaluate downstream task performance across sub-quantiles and report Spearman correlation per dataset (more negative is better). 
    LESS (RR) shows a stronger negative correlation on average, but the performance differences are small, suggesting that many subsets within the closest quantile result in similar downstream performance.}

    \label{fig:subbin_true_metric}
\end{figure*}

\section{Fine-Grained Stratification of the Nearest Distance Quantile}\label{app:first_distance_quantile}
Building on the distance quantile experiment (Section \ref{exp:distance_quantile}), we now aim to determine whether data representations can differentiate between very similar subsets, i.e., subsets that are closer to each other. 
Since we know the first distance quantile contains the samples most similar to the query set, we further subdivide it into 10 distance quantiles, select the top-K samples from each, and train the base models.

\paragraph{Setup.}
We closely follow the experiment setup from Section \ref{exp:distance_quantile} for creating the distance quantiles, training, and evaluation. 
We further subdivide the closest distance quantile (first distance quantile) into 10 distance quantiles using the same procedure. 
Then, we select the top-500 samples from the distance quantiles and use them as training data to train Llama 2 7B. 

\paragraph{Results.}
Figure \ref{fig:subbin_ce_loss} shows that LESS (RR) shows a high Spearman correlation across target tasks with the query loss.
On the other hand, the RDS+ (RR) and EMBED (RR) do not show strong correlations across target tasks and often exhibit negative correlations with the distance sub-quantiles, suggesting that these data representations cannot differentiate between similar subsets. 
Figure \ref{fig:subbin_true_metric} further shows that while LESS (RR), on average, shows a stronger negative Spearman correlation on the downstream evaluation compared to the other baselines, the difference in downstream performance appears to be similar. 
This suggests that multiple subsets in the first distance quantile can achieve a similar downstream performance. 

\section{Differences between Selected Subsets}\label{app:selected_subsets_diff}
\begin{figure}[t!]
    \centering
    \includegraphics[width=1\linewidth]{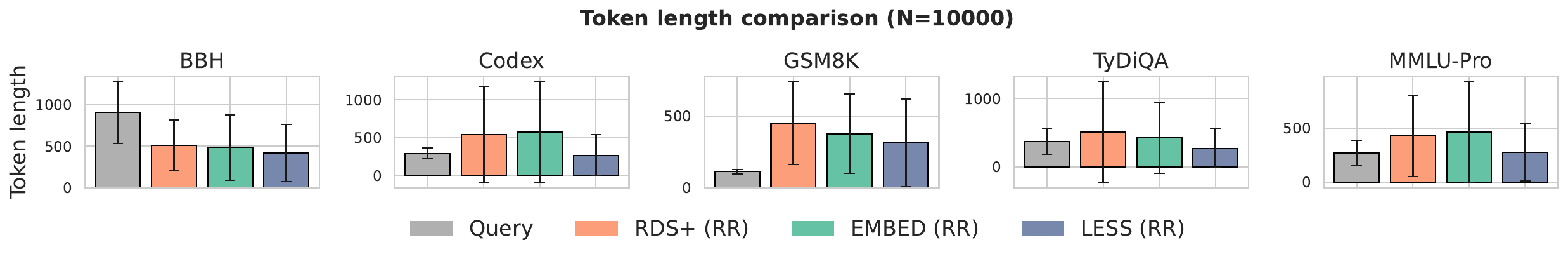}
    \caption{\textbf{Average token length of the query set and the selected subsets with different data representations.} We find that LESS (RR) is biased towards shorter sequences, whereas RDS+ (RR) and EMBED (RR) select subsets with longer sequences.}
    \label{fig:token_length}
\end{figure}
\begin{figure}[t!]
    \centering
    \includegraphics[width=1\linewidth]{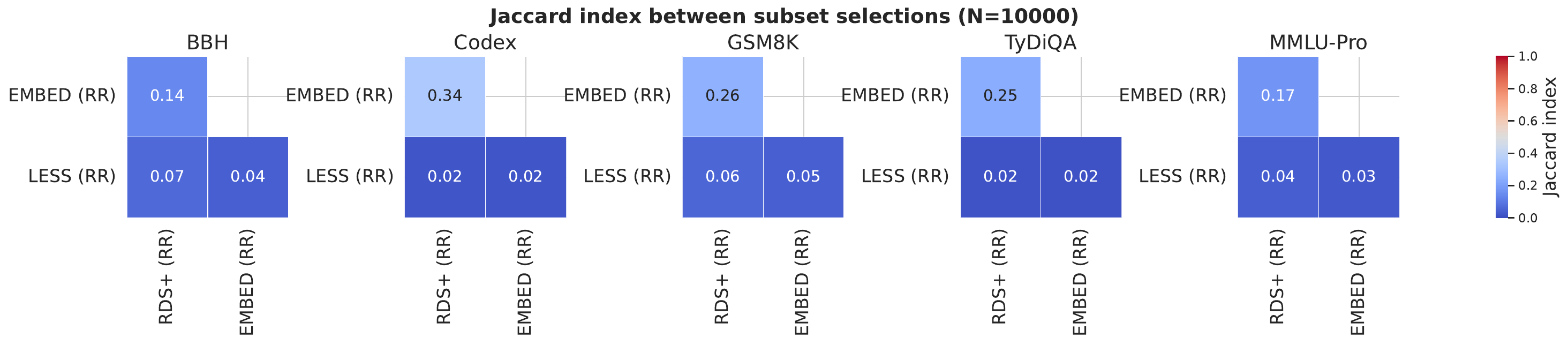}
    \caption{\textbf{Jaccard index between selected subsets created using different data representations.} We find that model-based representations (RDS+ and EMBED) have a higher Jaccard index compared to LESS.}
    \label{fig:jaccard}
\end{figure}

Here, we understand the differences between the selected subsets created using different data representations from Section \ref{sec:experiments:data_representation}.
We compute average token length and Jaccard index across the selected subsets for query sets when the budget is $10{,}000$ samples. 

Figure \ref{fig:token_length} shows that subsets created with LESS (RR) contain shorter sequences than those produced by RDS+ (RR) and EMBED (RR) across all target tasks. 
We suspect this bias toward longer sequences may explain the competitive performance of RDS+ (RR) and EMBED (RR) as we increase the budgets across target tasks~\citep{zhao:icml24}. 
Finally, Figure \ref{fig:jaccard} shows that RDS+ (RR) and EMBED (RR) have higher Jaccard indices compared to LESS (RR), suggesting that model-based embeddings share greater similarity in the examples they select.

\section{Cheaper Proxies for LESS}\label{app:cheaper_proxy}
LESS data representations are computationally expensive as they require a forward and a backward pass over all the candidate samples.
Here, we revisit the experiment of computing the LESS data representations from a smaller proxy model and then use them to select samples from the candidate pool to train the larger model (Appendix D.5 in ~\citet{xia:icml24}). 
We broaden the set of proxy models, varying in size and include pre-trained LLMs trained with different token budgets to better understand how the performance of the larger base model relates to size and token budgets. 

\paragraph{Setup.}
We consider the following proxy models: Pythia 160M~\citep{biderman:icml23}, SmolLM 135M~\citep{allal:huggingface24}, SmolLM2 135M~\citep{allal:arxiv25}, and Llama 3.2 3B~\citep{grattafiori:arxiv24}. 
We follow the same procedure to obtain the data representations for all proxy models (Appendix \ref{app:implementation:embed:less} and Appendix \ref{app:training_and_hyp_details}) and select samples for different budgets using a greedy round-robin approach. 
Then, we train Llama 2 7B on the selected instructions and report the downstream performance.
We also include LESS without the proxy models, along with the Random baseline, to better contextualize the results.  \looseness-1
\begin{figure}[h]
    \centering
    \includegraphics[width=1\linewidth]{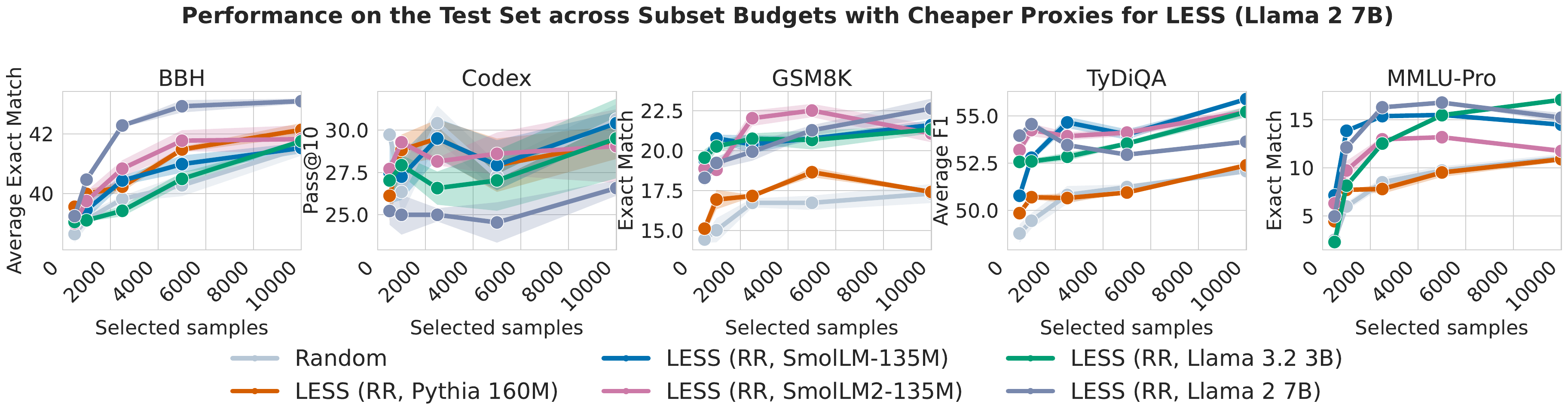}
    \caption{\textbf{Cheaper proxies for LESS (fixed selection algorithm).} With the same greedy round-robin selection procedure and the budgets from Section~\ref{sec:experiments:data_representation}, we report downstream performance of Llama~2~7B when LESS representations are computed using proxy models, averaged across three seeds and the standard error. 
    SmolLM-135M and SmolLM2-135M consistently match or outperform the Random baseline across target tasks, whereas Pythia-160M often matches Random on several tasks. 
    Llama~3.2~3B matches or outperforms LESS computed with Llama~2~7B on multiple target tasks, suggesting a trade-off between proxy and target model size when approximating instruction selection.}
    \label{fig:cheaper_proxy}
\end{figure}

\paragraph{Results.}
Figure \ref{fig:cheaper_proxy} shows that, at higher budgets, smaller proxy models outperform the baseline LESS (RR, Llama 2 7B) on three out of the five target tasks.
While these results suggest that proxy models for instruction selection are viable, similar to \citet{xia:icml24}, not all the proxy models perform well.
We find that Pythia-160M performs as poorly as Random on several target tasks. 

Next, we see that LESS (RR, SmolLM-135M) and LESS (RR, SmolLM2-135M) outperform or match Random across all target tasks, and sometimes even outperform LESS (RR, Llama 2 7B).
A key difference between the two is that SmolLM-135M is pre-trained on 600B tokens, whereas SmolLM2-135M is pre-trained on 2 trillion tokens.
We observe that on some target tasks, such as BBH and GSM8K, LESS (RR, SmolLM2-135M) achieves higher performance at low budgets, whereas on the rest, LESS (RR, SmolLM-135M) either matches or outperforms LESS (RR, SmolLM2-135M). These results show that the relationship between training tokens and proxy data representation for instruction selection remains unclear.

Finally, we observe that LESS (RR, Llama 3.2 3B) outperforms the baseline LESS (RR, Llama 2 7B) on three out of five target tasks, suggesting a trade-off between proxy and downstream model sizes in how well they approximate instruction selection of the larger models.

Overall, these results suggest that using proxy models for instruction selection to train larger models can dramatically reduce cost, but that a more thorough investigation is necessary to understand how to choose proxy models~\citep{khaddaj:icml25}.

\section{KNN-Uniform and KNN-KDE with L2 Distance}\label{app:knn_unif_kde}
We now select instructions with KNN-Uniform and KNN-KDE selection algorithms and use L2 distances instead of cosine distance to match the original implementation in ~\citet{liu:neurips24}. 

\paragraph{Setup.}
We use the Llama 2 7B LESS representations for both the query set and the candidate pool to compute distances. 
Instead of computing the weighted average between the query and the candidate representation (Equation \ref{eq:less_similarity}), following ~\citet{liu:neurips24}, we scale the representation for the epoch checkpoint by the average learning rate for that epoch,  concatenate all the scaled representations across epochs, and normalize them by the L2 norm. 
Then, we compute the L2 distance between the query and the candidate data representations. 
In KNN-KDE, we use the concatenated candidate data representations to compute L2 distances between them. 
We follow the same hyperparameters from Appendix \ref{app:knn_unif_kde} to select the instructions for a given budget. 
We report the downstream performance of these two selection methods across three seeds. 
We contextualize these results by comparing them with KNN-KDE and KNN-Uniform when implemented with cosine distance. 

\begin{figure}[h]
    \centering
    \includegraphics[width=1\linewidth]{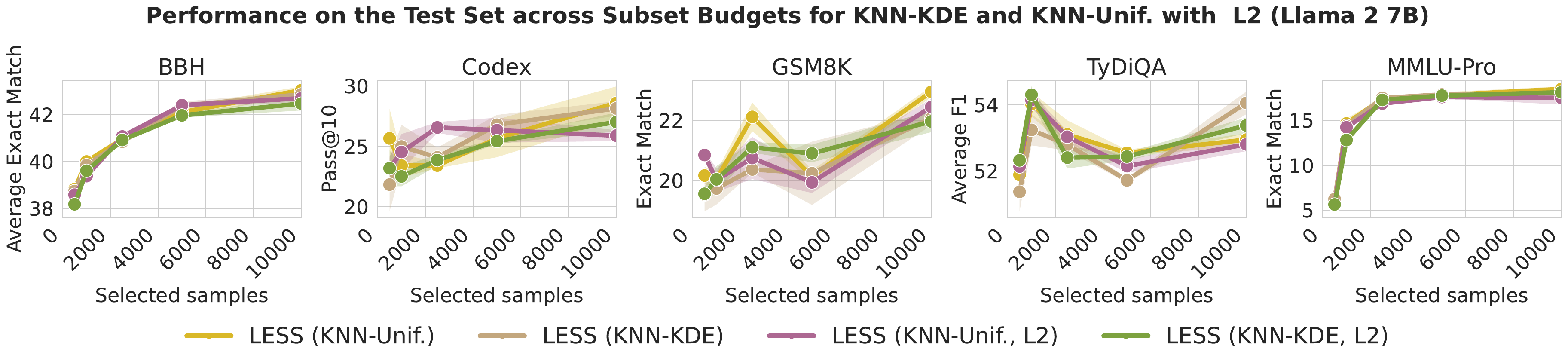}
    \caption{\textbf{KNN-Uniform and KNN-KDE with L2 distance (fixed data representation).} 
    With the same Llama~2~7B LESS representations and the budgets from Section~\ref{sec:experiment:selection_algorithm}, we report downstream performance for KNN-Uniform and KNN-KDE when distances are computed with L2 (following \citet{liu:neurips24}), averaged across three seeds and the standard error. 
    We compare against cosine distance variants and find similar performance trends across budgets.}
    \label{fig:knn_kde_unif_l2}
\end{figure}

\paragraph{Results.}
Figure \ref{fig:knn_kde_unif_l2} shows that both the KNN-KDE and KNN-Uniform with L2 show similar performance trends when cosine distance is used to compute the distances, which further validates our conclusions regarding the choice of selection algorithms
in Section \ref{sec:experiment:selection_algorithm}.

%% file: appendices/proofs.tex
\section{Proofs}\label{app:proofs}

\subsection{Lemmas used for proofs of Theorem \ref{theorem:subset_da} and Theorem \ref{theorem:gain_wass}} \label{app:lemmas_used}

\begin{lemma}[\textbf{Domain adaptation bound}; adapted from Theorem 2 in \cite{redko:ecml17}] \label{lemma:DA}
Let $S$ and $T$ be two samples of size $N_S$ and $N_T$ drawn i.i.d from probability measures $P_S$ and $P_T$ on $\mathbb{R}^d$, respectively. Let $\hat P_S:=\frac{1}{N_S}\sum_{x_S\in S} \delta_{x_S}$ and $\hat P_T:=\frac{1}{N_T}\sum_{x_T\in T} \delta_{x_T}$, where $\delta_{x}$ is the Dirac measure at $x$, be the associated empirical measures.
Let $\ell$ be any loss function that is symmetric, convex, bounded, obeys the triangular inequality, and for $z=(x,y)$ has the parametric form $|y - f_\theta(x)|^q$ for some $q>0$. Let $\Theta$ denote the model's parameter space, and for any dataset $D$ let $\theta_D \in \Theta$ denote the parameters obtained by training on $D$. Then, for any $d'>d$ and $c'< \sqrt{2}$, there exists some constant $N_0$ depending on $d'$ such that for any $\delta>0$ and $\min(N_S,N_T)\geq N_0\max(\delta^{-(d'+2)},1)$, with probability at least $1-\delta$, the following holds for every dataset $D$:
\begin{equation}
    L_T(\theta_D) \leq L_S(\theta_D) + W_1(\hat P_S, \hat P_T) + \sqrt{\frac{2}{c'} \log \Big(\frac{1}{\delta}\Big)} \Bigg( \frac{1}{\sqrt{N_S}} + \frac{1}{\sqrt{N_T}} \Bigg) +  \lambda_{S,T}
\end{equation}
\end{lemma}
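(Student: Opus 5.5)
The plan is to follow the argument of Theorem~2 in \cite{redko:ecml17}, adapted to the present notation. Fix any dataset $D$ and write $h:=f_{\theta_D}$. Let $\theta^\star$ be a minimiser of the joint error $L_S(\theta)+L_T(\theta)$, and write $h^\star:=f_{\theta^\star}$. Define $\lambda_{S,T}:=L_S(\theta^\star)+L_T(\theta^\star)$.

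\textbf{Step 1 (reduce to a disagreement loss).} Write $L_P(h,h')$ for the expected disagreement $\E_{z\sim P}\,\ell(h(x),h'(x))$. Symmetry and the triangle inequality for $\ell$ give
\[
L_T(\theta_D)\le L_T(h,h^\star)+L_T(\theta^\star),
\qquad
L_S(h,h^\star)\le L_S(\theta_D)+L_S(\theta^\star).
\]
So it suffices to bound the gap $L_T(h,h^\star)-L_S(h,h^\star)$, because
\[
L_T(\theta_D)\le L_S(\theta_D)+\big[L_T(h,h^\star)-L_S(h,h^\star)\big]+\lambda_{S,T}.
\]

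\textbf{Step 2 (transport the discrepancy).} Consider the map $z\mapsto \ell(h(x),h^\star(x))$. The aim is to show it is $1$-Lipschitz on the data space, uniformly over $h,h^\star$ in the hypothesis class. This is where the parametric form $|y-f_\theta(x)|^q$ and boundedness are used. Redko et al.\ do it by placing the loss in the unit ball of an RKHS, and I would import exactly that assumption.

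Given this, Kantorovich--Rubinstein duality gives
\[
L_T(h,h^\star)-L_S(h,h^\star)\le W_1(P_S,P_T)
\]
simultaneously for every $D$. This uniformity is what makes the bound hold for every dataset $D$ with a single probability event.

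\textbf{Step 3 (pass to empirical measures).} By the triangle inequality for $W_1$,
\[
W_1(P_S,P_T)\le W_1(P_S,\hat P_S)+W_1(\hat P_S,\hat P_T)+W_1(\hat P_T,P_T).
\]
Each outer term is controlled by the concentration inequality of Bolley--Guillin--Villani. For any $d'>d$ and $c'<\sqrt2$ there is an $N_0(d')$ such that
\[
\Pr\big(W_1(P,\hat P_N)>\varepsilon\big)\le \exp(-c'N\varepsilon^2/2)
\quad\text{whenever } N\ge N_0\max(\varepsilon^{-(d'+2)},1).
\]
Setting $\varepsilon=\sqrt{2\log(1/\delta)/(c'N)}$ gives the terms $\sqrt{\tfrac{2}{c'}\log(1/\delta)}\,N^{-1/2}$ for $N=N_S$ and $N=N_T$. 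A union bound over the two samples then gives the stated confidence, after the usual rescaling of $\delta$ that Redko et al.\ absorb into the constants. The sample-size condition on $\min(N_S,N_T)$ is what is needed for both applications.

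\textbf{Main obstacle.} The delicate point is Step~2: justifying that the disagreement function is $1$-Lipschitz in $z=(x,y)$, so that $W_1$ over the joint (labelled) space, rather than over inputs only, controls the gap. I would first try to deduce it directly from boundedness and the power form of $\ell$, treating labels as coordinates of $z$. If that fails, I would state it as the inherited RKHS-norm assumption of \cite{redko:ecml17}, since the lemma is explicitly adapted from there.

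A secondary check is the matching of the threshold $N_0\max(\delta^{-(d'+2)},1)$ with the $\varepsilon$ chosen above. This is routine bookkeeping of constants, as in the original theorem.
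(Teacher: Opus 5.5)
\textbf{Same route, but Step~3 fails.} The paper gives no proof of its own here; it imports Theorem~2 of \cite{redko:ecml17}, and your Steps~1--3 reproduce that argument.

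\textbf{Step~2.} Your fallback is the only option, because the listed properties cannot yield Lipschitzness. Take a discontinuous $f_\theta$, $P_S=\delta_0$, $P_T=\delta_\eta$, labels $y\equiv 0$, and the zero predictor in the class (so $\lambda_{S,T}=0$). Let $f_{\theta_D}(0)=0$ and $f_{\theta_D}(\eta)=1$. Then $L_T(\theta_D)-L_S(\theta_D)=1$ while $W_1(\hat P_S,\hat P_T)=\eta$, so the inequality fails as $\eta\to 0$ and $N_S,N_T\to\infty$. The RKHS unit-ball assumption of \cite{redko:ecml17} must therefore be added to the statement. Alternatively, assume directly that $x\mapsto \ell(h(x),h'(x))$ is $1$-Lipschitz, with $W_1$ taken in the matching metric. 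Your joint-versus-input worry, on the other hand, is not an issue: the disagreement depends on $x$ alone, and $z=(x,y)\mapsto x$ is $1$-Lipschitz.

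\textbf{Step~3 is the genuine gap; it is not routine bookkeeping.} In Bolley--Guillin--Villani the threshold $N\ge N_0\max(\varepsilon^{-(d'+2)},1)$ is on the deviation $\varepsilon$, not on $\delta$ (and $N_0$ also depends on a square-exponential moment of $P$). With $\varepsilon=\sqrt{2\log(1/\delta)/(c'N)}$ the threshold reads $N\ge N_0\bigl(c'N/(2\log(1/\delta))\bigr)^{(d'+2)/2}$. This fails for every large $N$, because $(d'+2)/2>1$. The lemma's hypothesis $N\ge N_0\max(\delta^{-(d'+2)},1)$, obtained in \cite{redko:ecml17} by substituting $\delta$ for $\varepsilon$, does not imply it.

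No choice of constants repairs this. For $d\ge 3$ and $P$ uniform on a cube, every $N$-atom measure lies at $W_1$-distance at least $c_d N^{-1/d}\gg N^{-1/2}$ from $P$; this is the rate used in Lemma~\ref{lemma:bound_random}. Hence $\Pr\bigl(W_1(P,\hat P_N)>CN^{-1/2}\bigr)=1$ for large $N$, so the two-sided concentration you invoke is false.

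\textbf{How to recover an $N^{-1/2}$ rate.} You only need an upper bound on $W_1(P_S,P_T)$, so avoid the two-sided triangle inequality. On a support of diameter $\Delta$, fix a population Kantorovich potential $f^\star$ with $W_1(P_S,P_T)=\int f^\star\,d(P_T-P_S)$. Then
\[
W_1(P_S,P_T)\le W_1(\hat P_S,\hat P_T)+\int f^\star\,d(P_T-\hat P_T)+\int f^\star\,d(\hat P_S-P_S).
\]
Here $f^\star$ is a single fixed function with oscillation at most $\Delta$, independent of the samples and of $D$. Hoeffding therefore bounds the last two terms by $\Delta\sqrt{\log(2/\delta)/(2N_S)}+\Delta\sqrt{\log(2/\delta)/(2N_T)}$, with probability at least $1-\delta$ and simultaneously for all $D$. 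This gives the stated form with $\Delta\sqrt{\log(2/\delta)/2}$ in place of $\sqrt{(2/c')\log(1/\delta)}$, and it needs no $N_0$, no $d'$ and no moment condition.

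If instead $L_S$ and $L_T$ are read as empirical averages over the samples, as in Section~\ref{subsec:theory1}, apply Kantorovich--Rubinstein directly to $\hat P_S,\hat P_T$. The bound then holds deterministically.
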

where $\lambda_{S,T} := \min_{\theta \in \Theta} \{L_T(\theta) + L_S(\theta)\}$ is the minimum combined error over all model parameters.\looseness=-1
\bigskip

\paragraph{Assumptions for Lemma \ref{lemma:stability_bound} and Theorem \ref{theorem:gain_wass}} \label{app:proof_assumptions}
\begin{enumerate}
\item[A1.] (\textbf{Strong convexity}) For any fixed dataset $D \subset \mathcal{Z}$, the empirical risk $L_D(\theta)$ is $\mu$--strongly convex in $\theta$. 

\item[A2.] (\textbf{Smooth data dependence}) The gradient $\nabla_\theta\ell(z;\theta)$ is $G_{\theta z}$-Lipschitz with respect to the data point $z$, i.e.,
$\|\nabla_\theta \ell(\theta;z)-\nabla_\theta \ell(\theta;z')\|\le
G_{\theta z}\|z-z'\|$, for all $\theta$ in a relevant neighborhood.

\item[A3.] (\textbf{Lipschitzness at the ERM}) For a fixed dataset $D$, the 
 empirical risk $L_D$ is $K$-Lipschitz with respect to the model parameter. That is, for any parameters $\theta$ and $\theta^\prime$,
$|L_D(\theta)-L_{D}(\theta^\prime)|
\le K\|\theta-\theta^\prime\|$.
\end{enumerate}

\bigskip

\begin{lemma}[\textbf{Wasserstein stability bound}]\label{lemma:stability_bound}
    Let $\T$ be a target dataset. Assume assumptions A1--A3 hold. Then, for any two datasets $D$ and $D'$, $|L_\T(\theta_D)-L_\T(\theta_{D'})| \le C_\star\, W_1(\hat P_D,\hat P_{D'})$ with $C_\star:=\frac{K G_{\theta z}}{\mu}$.
\end{lemma}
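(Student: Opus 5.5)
We propose a standard algorithmic-stability argument: first bound the distance between the two empirical risk minimizers $\theta_D$ and $\theta_{D'}$ in parameter space by $W_1(\hat P_D,\hat P_{D'})$, then transfer that bound to the target loss through the Lipschitz assumption A3 applied to $L_\T$.

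\emph{Step 1 (first-order optimality and strong convexity).} Since $\theta_D$ minimizes the $\mu$-strongly convex function $L_D$ (A1), we have $\nabla L_D(\theta_D)=0$. Strong convexity gives $\langle \nabla L_D(\theta)-\nabla L_D(\theta'),\theta-\theta'\rangle\ge \mu\|\theta-\theta'\|^2$. Apply this with $\theta=\theta_{D'}$ and $\theta'=\theta_D$, and use $\nabla L_{D'}(\theta_{D'})=0$. This yields
\[
\mu\|\theta_{D'}-\theta_D\|^2 \le \langle \nabla L_D(\theta_{D'})-\nabla L_{D'}(\theta_{D'}),\theta_{D'}-\theta_D\rangle .
\]
By Cauchy--Schwarz, $\|\theta_{D'}-\theta_D\|\le \mu^{-1}\|\nabla L_D(\theta_{D'})-\nabla L_{D'}(\theta_{D'})\|$.

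\emph{Step 2 (gradient discrepancy via Wasserstein).} Write $\nabla L_D(\theta)-\nabla L_{D'}(\theta)=\int \nabla_\theta\ell(\theta;z)\,d\hat P_D(z)-\int\nabla_\theta\ell(\theta;z')\,d\hat P_{D'}(z')$. Let $\pi$ be any coupling of $\hat P_D$ and $\hat P_{D'}$. The difference then equals $\int(\nabla_\theta\ell(\theta;z)-\nabla_\theta\ell(\theta;z'))\,d\pi(z,z')$. Its norm is at most $\int\|\cdot\|\,d\pi\le G_{\theta z}\int\|z-z'\|\,d\pi$ by Jensen and A2. Taking the infimum over couplings gives $G_{\theta z}W_1(\hat P_D,\hat P_{D'})$. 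This is the Kantorovich-type step: for vector-valued gradients, duality is unnecessary, and the coupling argument with Jensen suffices. We must require that $\theta_{D'}$ lies in the neighborhood where A2 holds.

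\emph{Step 3 (transfer to the target loss).} By A3 applied to the dataset $\T$, $|L_\T(\theta_D)-L_\T(\theta_{D'})|\le K\|\theta_D-\theta_{D'}\|$. Combining with Steps 1 and 2 gives the bound $\frac{KG_{\theta z}}{\mu}W_1(\hat P_D,\hat P_{D'})=C_\star W_1(\hat P_D,\hat P_{D'})$.

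The only delicate point is Step 2. Two technicalities need care there: vector-valued integrands against a coupling, and the fact that the Lipschitz constant in A2 is only assumed locally. We will handle the first by the coupling/Jensen argument above rather than by scalar Kantorovich--Rubinstein duality. The second is handled by noting that A2 is needed only at the single point $\theta_{D'}$. A further point is that A1 and A3 are stated for ``fixed'' datasets. We use A1 for $D$ (and $D'$), and A3 for $\T$, which matches the theorem's hypotheses.
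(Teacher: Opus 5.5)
Your proof is correct and follows the paper's argument step for step: strong convexity plus first-order optimality bounds $\|\theta_D-\theta_{D'}\|$ by $\mu^{-1}\|\nabla L_D(\theta_{D'})-\nabla L_{D'}(\theta_{D'})\|$, A2 bounds this gradient gap by a Wasserstein distance, and A3 applied to $L_\T$ transfers the bound to the target loss. The one difference is in the middle step: you bound the vector-valued gradient gap directly with a coupling and the triangle inequality for integrals, whereas the paper cites Kantorovich--Rubinstein duality, which for vector-valued integrands needs the unstated extra step of testing against unit directions, so your version is slightly more careful.
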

\begin{proof}
    Let $\theta_D = \arg\min_\theta L_D(\theta)$ and $\theta_{D'} = \arg\min_\theta L_{D'}(\theta)$. Using strong convexity once at $L_D$ (Assumption A1)
    $$\mu||\theta_D - \theta_{D'}|| \leq ||\nabla L_D(\theta_D)- \nabla L_D(\theta_{D'})|| = ||\nabla L_D(\theta_{D'})|| $$
    where the last equality comes from $||\nabla L_D(\theta_D)||=0$ due to optimality. Similarly, because $||\nabla L_{D'}(\theta_{D'})||=0$ , then
    \begin{equation}\label{eq:proof_bound1}
      \mu||\theta_D - \theta_{D'}|| \leq ||\nabla L_D(\theta_{D'})-\nabla L_{D'}(\theta_{D'})||.  
    \end{equation}
    For any fixed $\theta$, Assumption A2 and Kantorovich–Rubinstein duality~\citep{arjovsky:icml17} gives
    \begin{equation}\label{eq:proof_bound2}
      ||\nabla_\theta L_D(\theta) - \nabla_\theta L_{D'}(\theta)|| \ \leq G_{\theta z} W_1(\hat P _D, \hat P_{D'}).
    \end{equation}
    Combining (\ref{eq:proof_bound1}) and (\ref{eq:proof_bound2})
    \begin{equation}\label{eq:proof_bound3}
        ||\theta_D - \theta_{D'}|| \leq \frac{G_{\theta z}}{\mu}W_1(\hat P _D, \hat P_{D'}).
    \end{equation}
    Assumption A3 gives $|L_{\T}(\theta_D)-L_{\T}(\theta_{D'})| \le K \|\theta_{D}-\theta_{D'}\|$, which combined with (\ref{eq:proof_bound3}) gives
    \begin{equation}
       |L_{\T}(\theta_D)-L_{\T}(\theta_{D'})| \leq K ||\theta_D - \theta_{D'}|| \leq \frac{K \cdot G_{\theta z}}{\mu}W_1(\hat P _D, \hat P_{D'}).
    \end{equation}
    
\end{proof}

\begin{lemma}[\textbf{High-probability bound for the Wasserstein distance of an empirical measure}]\label{lemma:bound_random}
    Let $\D = \{ z_1, \ldots, z_N \} \subset \mathbb{R}^d$ lie in a set of diameter $\Delta$ and $d\geq3$. Let $\S^\mathrm{rnd} := (s_1,\ldots,s_B)$ be a tuple of $B$ elements sampled i.i.d. uniformly from $\D$, and let $\hat P_\D$ and $\hat P_{\S^\mathrm{rnd}} $ be the empirical distributions on $\D$ and $\S^\mathrm{rnd}$ respectively.
    Then there exists a constant $C_d>0$ depending only on $d$ (with $q=2$ fixed) such that, for any $\delta \in (0,1)$, with probability at least $1-\delta$ over the draw of $\S^{\mathrm{rnd}}$:
    \begin{equation}
        W_1(\hat P_{\S^\mathrm{rnd}},\hat P_{\D}) 
        \leq C_d \Delta B^{-1/d} + \Delta\sqrt{\frac{\log(1/\delta)}{2B}}
    \end{equation}
\end{lemma}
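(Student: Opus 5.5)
The bound splits into an expectation term and a fluctuation term. Write $W := W_1(\hat P_{\S^\mathrm{rnd}},\hat P_{\D})$, viewed as a function of the i.i.d. draws $s_1,\ldots,s_B$ from $\hat P_\D$. We will show that $\E[W] \le C_d \Delta B^{-1/d}$ and that $W$ concentrates around its mean at rate $\Delta\sqrt{\log(1/\delta)/(2B)}$. Adding the two gives the claim.

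\textbf{Concentration.} We apply McDiarmid's bounded differences inequality. Replace a single draw $s_i$ by $s_i'$. The empirical measure $\hat P_{\S^\mathrm{rnd}}$ then changes by moving mass $1/B$ from $s_i$ to $s_i'$. Hence the $W_1$ distance between the old and new empirical measures is at most $\|s_i-s_i'\|/B \le \Delta/B$, since $\D$ lies in a set of diameter $\Delta$. By the triangle inequality for $W_1$, $W$ changes by at most $c_i=\Delta/B$. McDiarmid then gives
\[
\Pr\big(W-\E W \ge t\big)\le \exp\!\Big(-\tfrac{2t^2}{\sum_i c_i^2}\Big)=\exp\!\Big(-\tfrac{2Bt^2}{\Delta^2}\Big).
\]
Setting the right-hand side to $\delta$ yields $t=\Delta\sqrt{\log(1/\delta)/(2B)}$, which is exactly the second term.

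\textbf{Expectation.} We need $\E W_1(\hat P_B,P)\le C_d\Delta B^{-1/d}$ for $B$ i.i.d. samples from any probability measure $P$ supported in a set of diameter $\Delta$, with $d\ge 3$. Here $P=\hat P_\D$, the population being the finite pool. This is the standard rate of Fournier--Guillin or Dudley, as in \citet{weed:bernoulli17}. We plan to invoke it directly after rescaling: translate the support into a ball of radius $\Delta$, then scale by $1/\Delta$. Since $W_1$ is $1$-homogeneous under scaling, it suffices to prove the bound on the unit ball, where it is the known $B^{-1/d}$ bound for $d\ge3$. The remark ``$q=2$ fixed'' refers to the moment parameter in Fournier--Guillin, which is trivially finite here because the support is bounded.

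To be self-contained we would sketch a dyadic partition argument. At level $k$, partition the bounding cube into $2^{kd}$ subcubes of side $\asymp\Delta 2^{-k}$. This gives
\[
W_1\le \sum_k \Delta 2^{-k}\sum_{F\in\mathcal P_k}|\hat P_B(F)-P(F)|.
\]
Bound each cell term by $\E|\hat P_B(F)-P(F)|\le\sqrt{P(F)/B}$, then apply Cauchy--Schwarz over the $2^{kd}$ cells. Balancing the $2^{-k}\cdot 2^{kd/2}/\sqrt B$ terms against the truncation error $\Delta 2^{-k}$ gives order $\Delta B^{-1/d}$ precisely when $d>2$. This explains the $d\ge 3$ hypothesis. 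The main obstacle is keeping $C_d$ dependent on $d$ alone; the multiscale bound delivers this because it is uniform over $P$.
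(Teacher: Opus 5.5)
Your proposal is correct and takes essentially the paper's route. You bound the expectation via Fournier--Guillin with $p=1$, $q=2$, absorbing the $B^{-1/2}$ term into $C_d$ since $d\ge 3$ (the paper bounds $M_2^{1/2}\le\Delta$ where you rescale to the unit ball), and you obtain the fluctuation term from McDiarmid with bounded differences $c_i=\Delta/B$, where your triangle-inequality justification and dyadic-partition sketch make the argument more self-contained than the paper's.
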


\begin{proof}
The proof proceeds by decomposing $W_{1}(\hat{P}_{\S^\mathrm{rnd}}, \hat{P}_{D_c})$ into its expectation and its concentration around the mean. First, we use Theorem 1 in \citet{fournier:probability15} with $p=1$ and $q=2$, applied on bounded support: since $\mathrm{diam}(\mathrm{supp}(\hat P_\D))\le \Delta$,
we have $M_2(\hat P_\D)^{1/2}\le \Delta$. This theorem yields
$\mathbb{E}[W_1(\hat P_{S_{\mathrm{rand}}},\hat P_c)]\le C(1,d,2)\Delta\,(B^{-1/d}+B^{-1/2})$,
and because $d\ge 3$ implies $B^{-1/2}\le B^{-1/d}$, we can absorb the $B^{-1/2}$ term into the constant,
defining $C_d := 2C(1,d,2)$. Then, for dimensions $d \ge 3$:
\begin{equation}\label{eq:expectation}
    \mathbb{E}[W_{1}(\hat{P}_{\S^\mathrm{rnd}}, \hat{P}_{\D})] \le C_{d} \Delta B^{-1/d}
\end{equation}
where $C_d$ is a constant depending only on dimension $d$.

Next, we view $f(\S^\mathrm{rnd}) := W_{1}(\hat{P}_{\S^\mathrm{rnd}}, \hat{P}_\D)$ as a function of the random subset $\S^\mathrm{rnd}$.
Since the data lies in a bounded domain of diameter $\Delta$, changing a single data point in $\S^\mathrm{rnd}$ changes the probability mass at that location by $1/B$, and moves it by a distance of at most $\Delta$. Therefore, the function $f$ satisfies the bounded difference property with constant $c_i = \Delta/B$. Then, by McDiarmid's inequality \cite{mcDiarmid:surveys89}, for any $\epsilon > 0$:
\begin{equation}\label{eq:McDiarmid}
    \mathbb{P}\left( W_{1}(\hat{P}_{\S^\mathrm{rnd}}, \hat{P}_{\D}) - \mathbb{E}[W_{1}(\hat{P}_{\S^\mathrm{rnd}}, \hat{P}_{\D})] \ge \epsilon \right) \le \exp\left( \frac{-2\epsilon^2}{\sum_{i=1}^B (\Delta/B)^2} \right) = \exp\left( \frac{-2 B \epsilon^2}{\Delta^2} \right)
\end{equation}
Similar arguments have been used, for example, by \citet{solomon:siam22}, who apply McDiarmid’s inequality to the Wasserstein distance viewed as a function of two random datasets, taking expectations over both. In contrast, in our setting, we fix $\D$ and treat the Wasserstein distance as a function of the random subset $\S^\mathrm{rnd}$, taking the expectation over a single dataset. In this case, the bounded-differences condition required by McDiarmid’s inequality follows directly from the stronger assumption that the support of $\D$, and hence the support of $\S^\mathrm{rnd} \subseteq \D$, is bounded with diameter at most $\Delta$.

Setting the right hand side in (\ref{eq:McDiarmid}) to $\delta$ and solving for $\epsilon$, we get $\epsilon = \Delta\sqrt{\frac{\log(1/\delta)}{2B}}$. Thus, with probability at least $1-\delta$:
\begin{equation}\label{eq:concentration}
    W_{1}(\hat{P}_{\S^\mathrm{rnd}}, \hat{P}_{\D}) \le \mathbb{E}[W_{1}(\hat{P}_{\S^\mathrm{rnd}}, \hat{P}_{\D})] + \Delta\sqrt{\frac{\log(1/\delta)}{2B}}
\end{equation}

Combining the expectation bound (\ref{eq:expectation}) with the concentration bound (\ref{eq:concentration}) yields the final result:
\begin{equation}
    W_{1}(\hat{P}_{\S^\mathrm{rnd}}, \hat{P}_{\D}) \le C_{d}\Delta B^{-1/d} + \Delta\sqrt{\frac{\log(1/\delta)}{2B}}
\end{equation}
\end{proof}

\subsection{Theorem \ref{theorem:subset_da}}\label{app:proofs:subset_da}
    Let $\ell:\Theta\times\mathcal Z\to\mathbb R_+$ be any loss function that is symmetric, convex, bounded, satisfies the triangle inequality, and for $z=(x,y)$ admits the parametric form $\ell(\theta; z)=|y - f_\theta(x)|^q$ for some $q>0$. Let $\D$ denote a labeled candidate pool, and let $\S \subseteq \D$ be any subset of size $B := |\S|$. Let $\Q$ (the query set) and $\T$ (the test set) be labeled datasets. Then for any $c' < \sqrt{2}$, with probability at least $1-2\delta$:
    \begin{align*}
        L_\T(\theta_\S) \leq \ & \underbrace{W_1(\hat P_\S, \hat P_\Q)}_{\mathclap{\substack{\text{Subset and query}\\ \text{dataset distance}}}}+ \underbrace{W_1(\hat P_\Q,  \hat P_{\T})}_{\mathclap{\substack{\text{Query and test}\\ \text{dataset distance}}}}
        + \underbrace{L_{\S}(\theta_\S)}_{\text{training error}} + \zeta \sqrt{\frac{2}{c'} \log \Big(\frac{1}{\delta}\Big)} + \tilde{\lambda}
    \end{align*}
    where $W_1$ is the 1-Wasserstein distance with respect to the underlying metric on the embedding space $\mathcal{Z}$, $\zeta$ is a constant determined by the size of the datasets, with $\zeta = B^{-\frac{1}{2}} + 2|\Q|^{-\frac{1}{2}}+ |\T|^{-\frac{1}{2}}$, and $\tilde{\lambda}$ is the combined error of the dataset $\tilde{\S} \subseteq \D$ of size $B$ that minimizes the combined error of $L_\S(\theta_{\tilde{\S}}) + 2L_\Q(\theta_{\tilde{\S}}) + L_\T(\theta_{\tilde{\S}})$.
\begin{proof}
The proof proceeds by applying Lemma \ref{lemma:DA} twice. We first apply it to $L_\T(\theta_\S)$, treating $\T$ as the target dataset and $\Q$ as the source dataset, with the function obtained by training on $\S$. We then apply it to $L_\Q(\theta_\S)$, now treating $\Q$ as the target dataset and $\S$ as the source dataset, again using the prediction function trained on $\S$. This yields:
\begin{align}
    L_\T(\theta_\S) 
    \leq\; & L_\Q(\theta_\S) + W_1(\hat P_{\Q},  \hat P_{\T}) + \sqrt{\frac{2}{c'} \log \Big(\frac{1}{\delta}\Big)} \Bigg( \frac{1}{\sqrt{|\Q|}} + \frac{1}{\sqrt{|\T|}}\Bigg) +  \lambda_{\T,\Q}\\
    \leq \; & L_\S(\theta_\S) + \ W_1(\hat P_{\S}, \hat P_{\Q})+ W_1(\hat P_{\Q},  \hat P_{\T}) + \sqrt{\frac{2}{c'} \log \Big(\frac{1}{\delta}\Big)} \Bigg(\frac{1}{\sqrt{B}} + \frac{2}{\sqrt{|\Q|}}+ \frac{1}{\sqrt{|\T|}} \Bigg) + \lambda_{\T,\Q} + \lambda_{\Q, \S}\\
    \leq \; & L_\S(\theta_\S) + \ W_1(\hat P_{\S}, \hat P_{\Q})+ W_1(\hat P_{\Q},  \hat P_{\T}) + \zeta\sqrt{\frac{2}{c'} \log \Big(\frac{1}{\delta}\Big)} + \lambda_{\T,\Q} + \lambda_{\Q, \S}
\end{align}
where $\lambda_{\T,\Q} := \min_{\theta \in \Theta}L_Q(\theta) + L_T(\theta)$ and $\lambda_{\Q, \S} := \min_{\theta \in \Theta}L_S(\theta) + L_Q(\theta)$ and $\zeta = B^{-\frac{1}{2}} + 2|\Q|^{-\frac{1}{2}}+ |\T|^{-\frac{1}{2}}$. 

\noindent Since the sum of pointwise minima is bounded above by the minimum of the sum: $\lambda_{\T,\Q} + \lambda_{\Q,\S} \;\leq\; \min_{\theta \in \Theta}\{L_\S(\theta) + 2L_\Q(\theta) + L_\T(\theta)\} \;=:\; \lambda$. We refer to $\lambda$ as the minimum combined error over the full parameter space $\Theta$.

\noindent The minimum above is taken over all $\theta \in \Theta$. In practice, we are restricted to parameters obtainable by training on some subset $\S \subseteq \D$, i.e., parameters of the form $\theta_\S$. Restricting the minimum to this set can only increase it: $\lambda \;=\; \min_{\theta \in \Theta}\{L_\S(\theta) + 2L_\Q(\theta) + L_\T(\theta)\} \;\leq\; \min_{\S \subseteq \D}\{L_\S(\theta_\S) + 2L_\Q(\theta_\S) + L_\T(\theta_\S)\} \;=:\; \tilde\lambda$. We denote by $\Tilde{S}$ the subset achieving this minimum, so that $\Tilde{\lambda}$ is the combined error attained by training on $\Tilde{S}$. Substituting back, we obtain:

\begin{align}
L_\T(\theta_\S) \leq & L_\S(\theta_\S) + W_1(\hat P_{\S}, \hat P_{\Q})+ W_1(\hat P_{\Q},  \hat P_{\T}) + \zeta\sqrt{\frac{2}{c'} \log \Big(\frac{1}{\delta}\Big)}  + \lambda \\
\leq & L_\S(\theta_\S) + W_1(\hat P_{\S}, \hat P_{\Q})+ W_1(\hat P_{\Q},  \hat P_{\T}) + \zeta\sqrt{\frac{2}{c'} \log \Big(\frac{1}{\delta}\Big)}  + \tilde\lambda
\end{align}
\end{proof}

\subsection{Theorem \ref{theorem:gain_wass}}\label{app:proofs:gain_wass}
Let the candidate pool $\D \subset \mathbb{R}^d$ lie in a set of diameter $\Delta$ and $d\geq3$. Let $\S^\mathrm{rnd} := (s_1,\ldots,s_B)$ be a tuple of $B$ elements sampled i.i.d. uniformly 
from $\D$, and let $S^*_W \subseteq \D$ be a subset of size $B$ that minimizes the 1-Wasserstein distance to the query set $\Q$, i.e., $\S^*_W \subset \argmin_{\S \subseteq \D; |\S|=B} W_1(\hat{P}_\S,\hat{P}_{\Q})$. Assume assumptions A1--A3 hold. Then, there exists a constant $C_d>0$, depending only on the dimension $d$, such that, with probability at least $1-2\delta$:
\begin{align*}
    L_\T(\theta_{\S^\mathrm{rnd}}) - L_\T(\theta_{S^*_W}) 
    \leq 
    C_{\star} \Bigg( & \underbrace{C_d \Delta B^{-1/d}}_{\mathclap{\substack{\text{Curse of}\\ \text{dimensionality}}}} + \underbrace{\Delta \sqrt{\frac{\log(1/\delta)}{2B}}}_{\text{Concentration bound}}
+ \underbrace{W_1(\hat{P}_\D, \hat{P}_\Q)}_{\text{Pool-query mismatch}} + \ \underbrace{W_1(\hat{P}_{S^*_W}, \hat{P}_\Q)}_{\text{Distance residual}} \Bigg)
\end{align*}

\begin{proof}
\begin{align}
    L_\T(\theta_{\S^\mathrm{rnd}}) - L_\T(\theta_{S^*_W})  
    &\leq C_{\star} W_1(\hat P_{\S^\mathrm{rnd}},\hat P_{S^{*}_W}) \label{line_1} \\ 
    &\leq C_{\star} \Big( W_1(\hat P_{\S^\mathrm{rnd}}, \hat P_{\Q}) + W_1(P_{S^{*}_W}, \hat P_{\Q}) \Big) \label{line_2}\\
    &\leq C_{\star} \Big( W_1(\hat P_{\S^\mathrm{rnd}}, \hat P_{\D}) + W_1(\hat P_{\D}, \hat P_{\Q}) + W_1(P_{S^{*}_W}, \hat P_{\Q}) \Big) \label{line_3}\\
    &\leq C_{\star} \Big( C_d \Delta B^{-1/d} + \Delta \sqrt{\frac{\log(1/\delta)}{2B}}
        + W_1(\hat P_{\D}, \hat P_{\Q}) + W_1(\hat P_{\S^*_W}, \hat P_{\Q}) \Big) \label{line_4}
\end{align}
\end{proof}

Line \ref{line_1} follows from applying Lemma \ref{lemma:stability_bound}, line \ref{line_2} applied triangle inequality by introducing the distance to the empirical distribution of $\Q$, line \ref{line_3} again uses triangle inequality by introducing the distance to the empirical distribution of $\D$, and \ref{line_4} is a result of applying Lemma \ref{lemma:bound_random} on $W_1(\hat P_{\S^\mathrm{rnd}}, \hat P_{\D})$.

%% file: appendices/model_ablations.tex
\section{Model Ablations with Tulu V2}\label{app:ablations}
We run ablations for experiments in Section \ref{sec:experiments} across four models of different sizes: Llama 3.2 3B~\citep{grattafiori:arxiv24}, SmolLM3 3B ~\citep{bakouch:huggingface25}, Qwen3 4B Base~\citep{yang:arxiv25}, and Olmo 3 7B~\citep{olmo:arxiv25}. 

Below, we summarize our key takeaways from the experiment across models (See Figures \ref{fig:quantile_budget_all_llama3.2-3b}, \ref{fig:quantile_budget_all_smollm3-3b-base}, \ref{fig:quantile_budget_all_qwen3-4b}, and \ref{fig:quantile_budget_all_olmo3-7b}): 
\paragraph{Distance quantile experiment.} Only LESS (RR) creates subsets whose quantile distances strongly correlate with the loss. 
However, with over-trained models such as SmolLM3 3B Base and Qwen3 4B Base, the performance trends for LESS (RR) on datasets such as BBH and Codex show less correlation with the distance quantiles. 

\paragraph{Effect of data representation across subset budgets.} Although no single data representation with a fixed selection algorithm selects subsets that always perform the best, LESS with greedy round robin performs the best across all models on most of the target tasks compared to other model-based representation baselines.
But on some datasets and models (MMLU Pro with Qwen3 4B Base), we observe that the zero-shot baseline outperforms even instruction-tuned models, suggesting that either the candidate pool may not have sufficient training examples to further improve the performance, or due to an accidental leakage of the target task in the pre-training corpus~\citep{olmo:arxiv25}. 

\paragraph{Effect of selection algorithm across subset budgets.}
We observe that the performance trends for different selection algorithm generalize to other models. 
For instance, LESS (RR) achieves the strongest results on BBH with Llama 3.2 3B and Qwen3 4B Base. 
We also find that, at higher budgets, often LESS (UOT) delivers the best MMLU-Pro performance with Olmo 3 7B Base. 
Overall, these findings indicate that, for certain target tasks, selection algorithms tend to exhibit consistent performance across different models.

\begin{figure*}[h!]
    \centering
    \includegraphics[width=\linewidth]{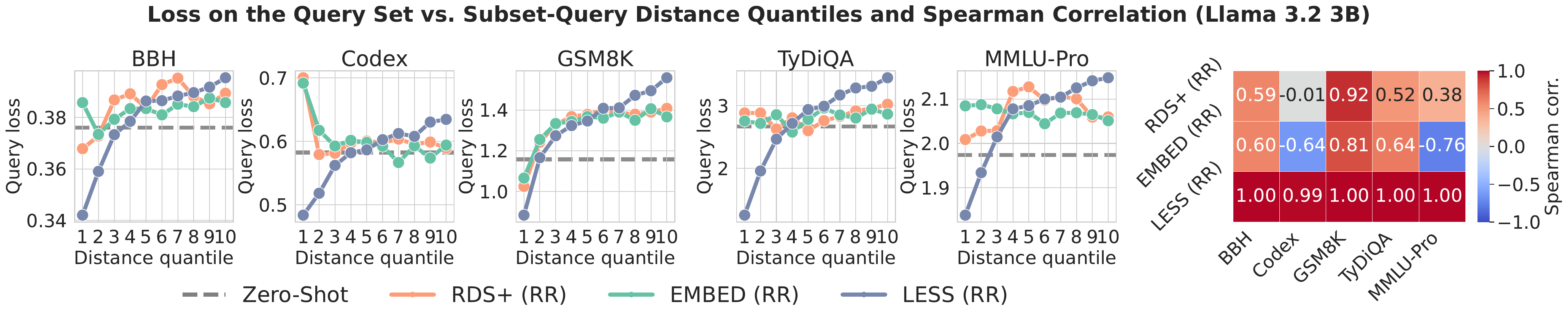}

    \includegraphics[width=\linewidth]{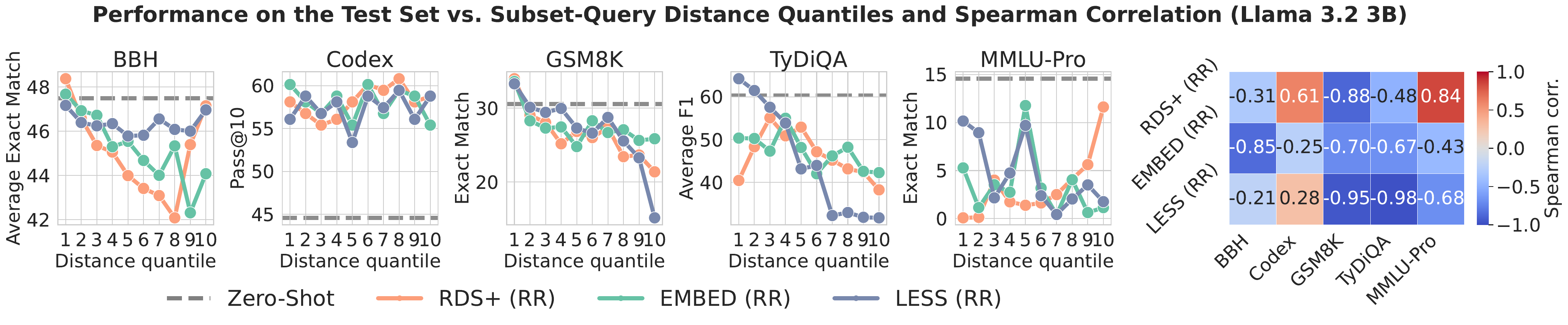}

    \includegraphics[width=\linewidth]{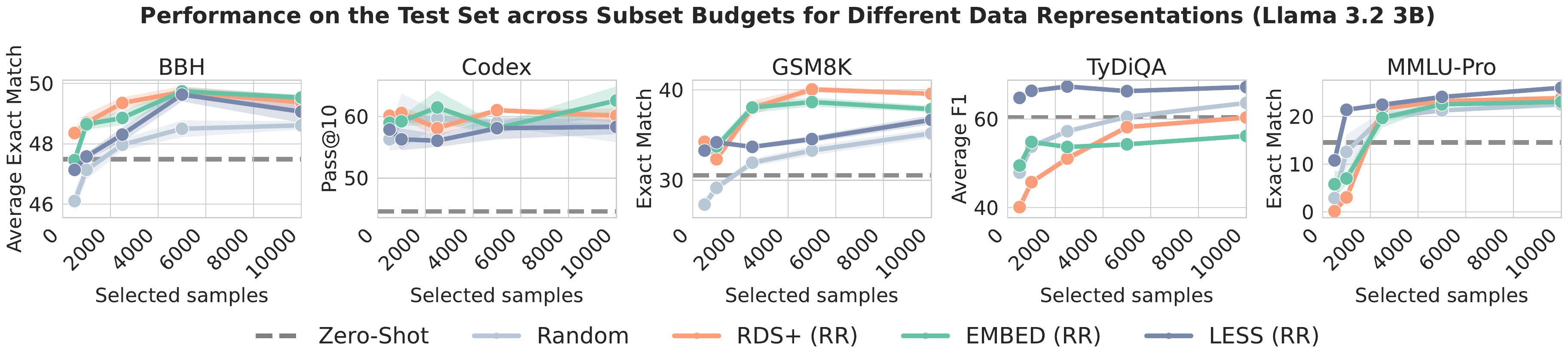}

    \includegraphics[width=\linewidth]{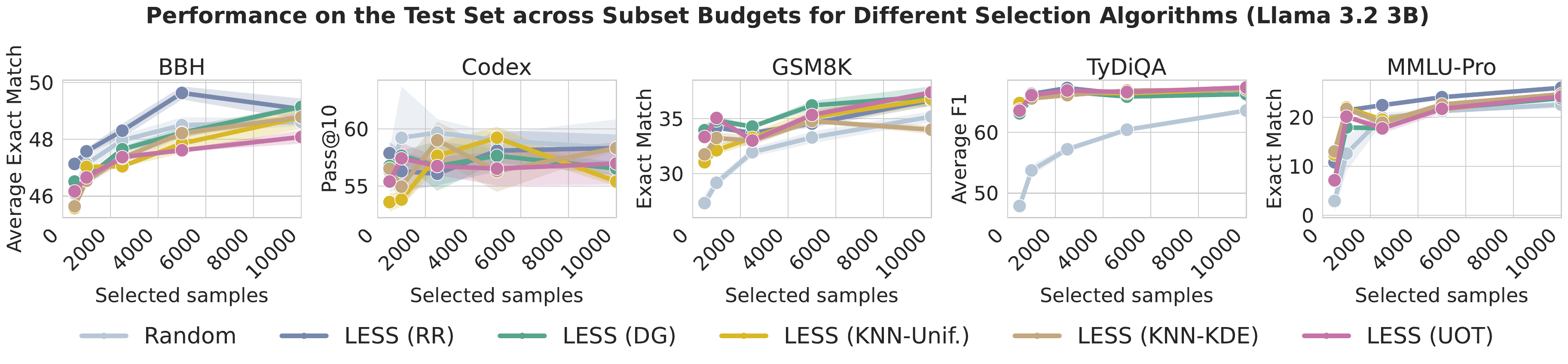}

    \caption{Ablation experiments with Llama 3.2 3B.}
    \label{fig:quantile_budget_all_llama3.2-3b}
\end{figure*}

\begin{figure*}[t!]
    \centering
    \includegraphics[width=\linewidth]{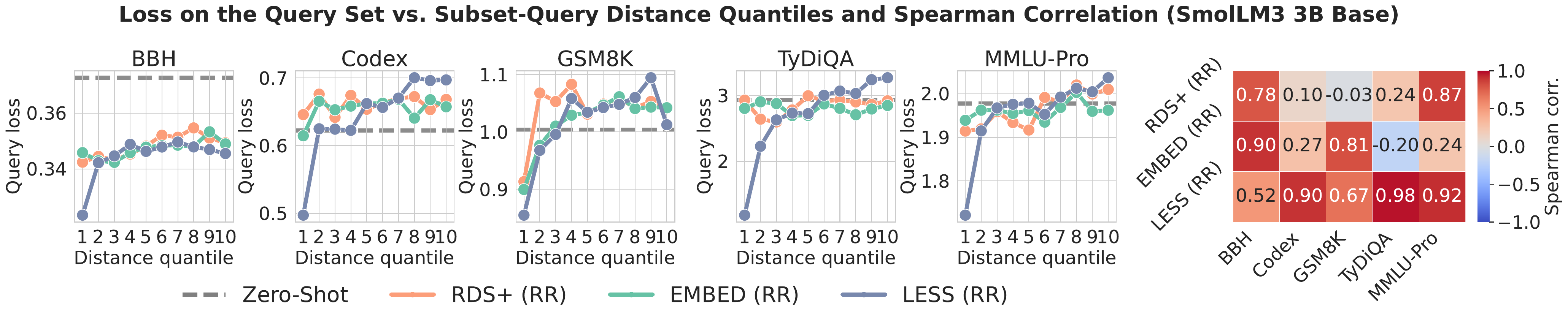}

    \includegraphics[width=\linewidth]{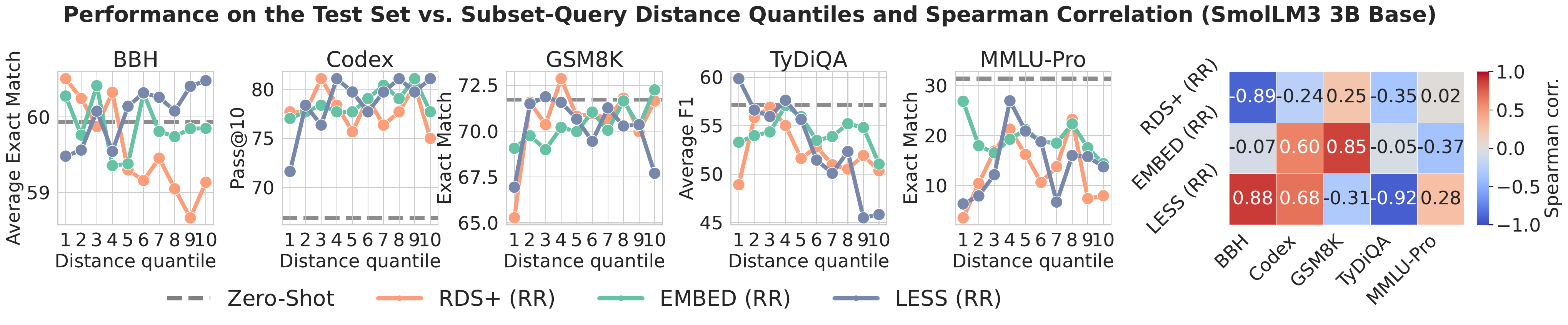}

    \includegraphics[width=\linewidth]{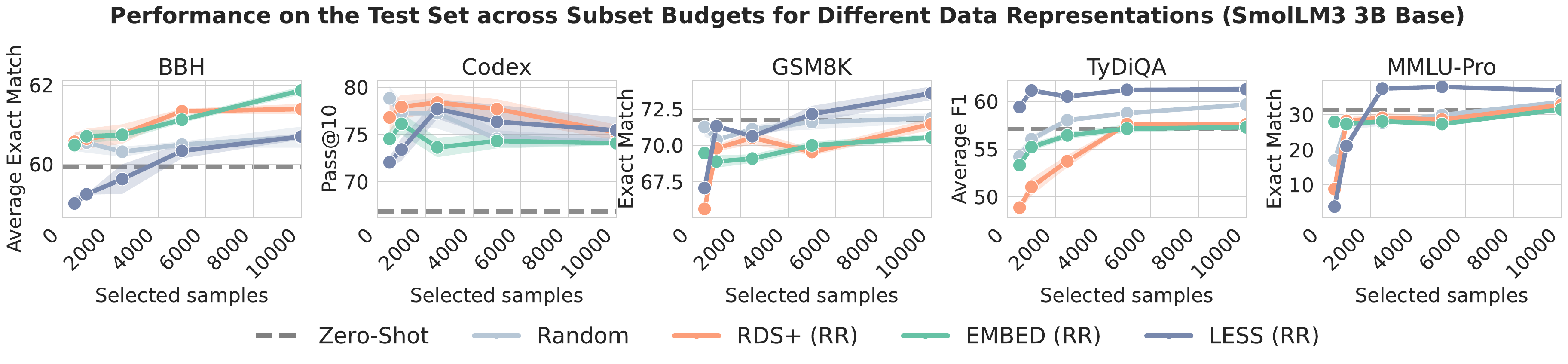}

    \includegraphics[width=\linewidth]{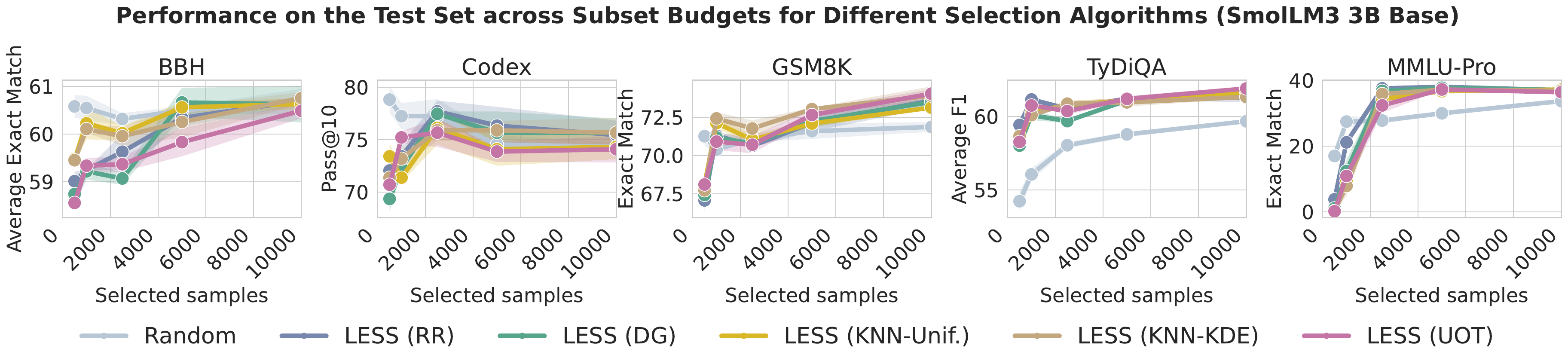}

    \caption{Ablation experiments with SmolLM3 3B.}
    \label{fig:quantile_budget_all_smollm3-3b-base}
\end{figure*}

\begin{figure*}[t!]
    \centering
    \includegraphics[width=\linewidth]{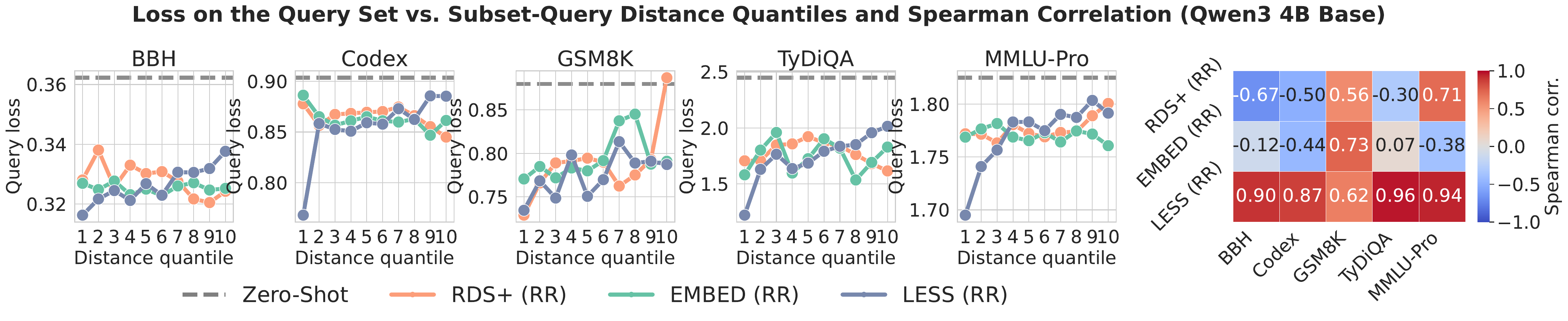}

    \includegraphics[width=\linewidth]{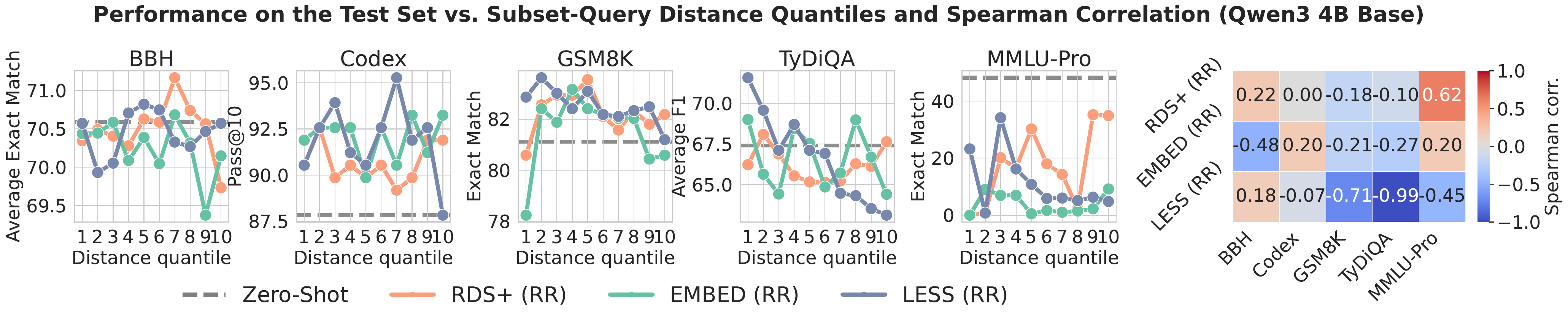}

    \includegraphics[width=\linewidth]{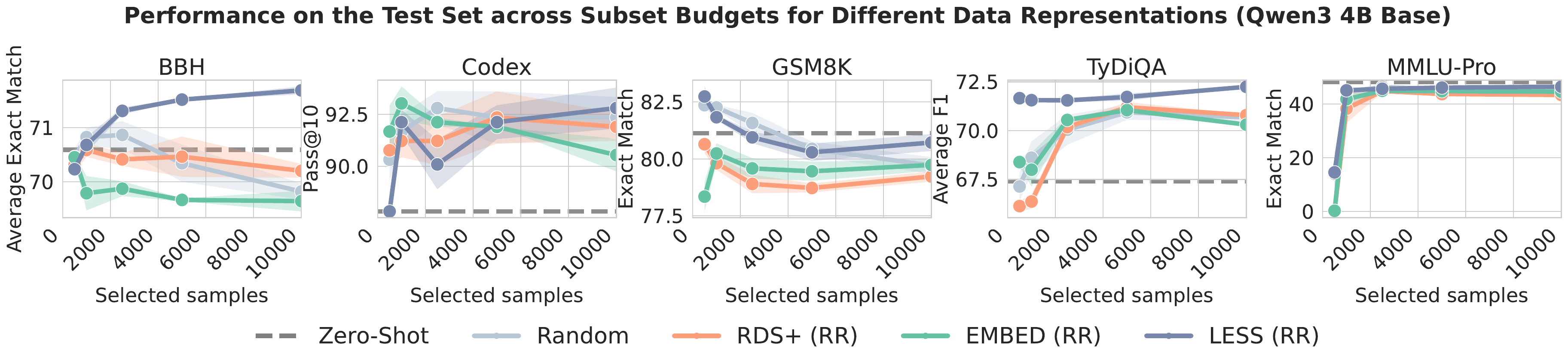}

    \includegraphics[width=\linewidth]{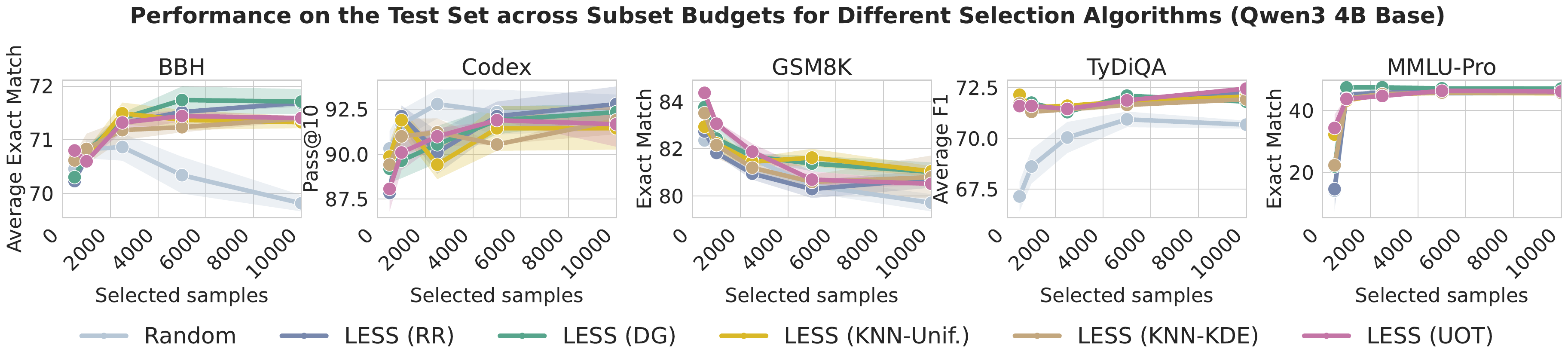}

    \caption{Ablation experiments with Qwen3 4B.}
    \label{fig:quantile_budget_all_qwen3-4b}
\end{figure*}

\begin{figure*}[t!]
    \centering
    \includegraphics[width=\linewidth]{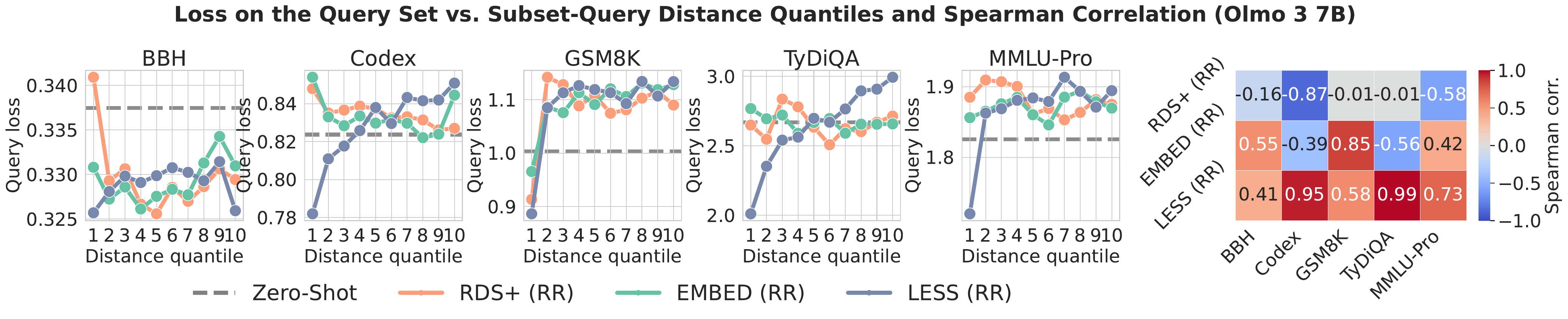}

    \includegraphics[width=\linewidth]{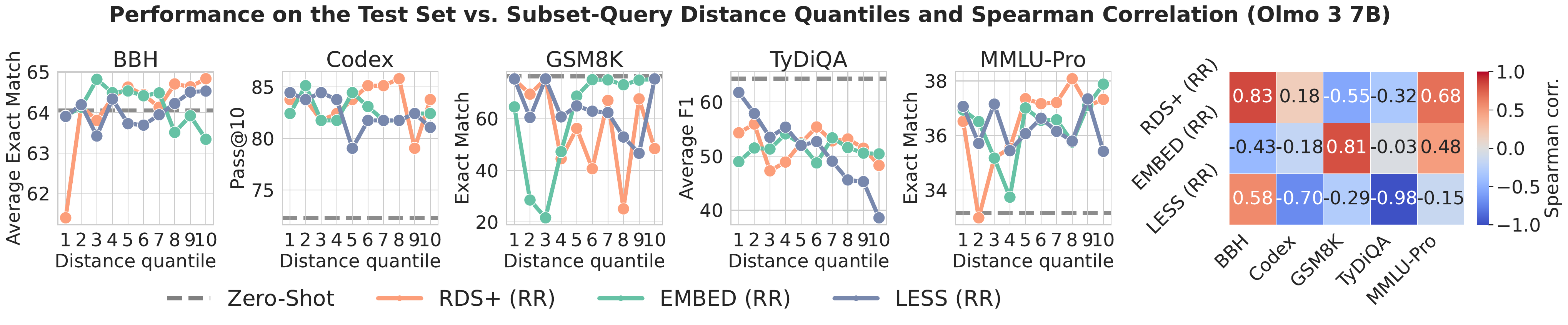}

    \includegraphics[width=\linewidth]{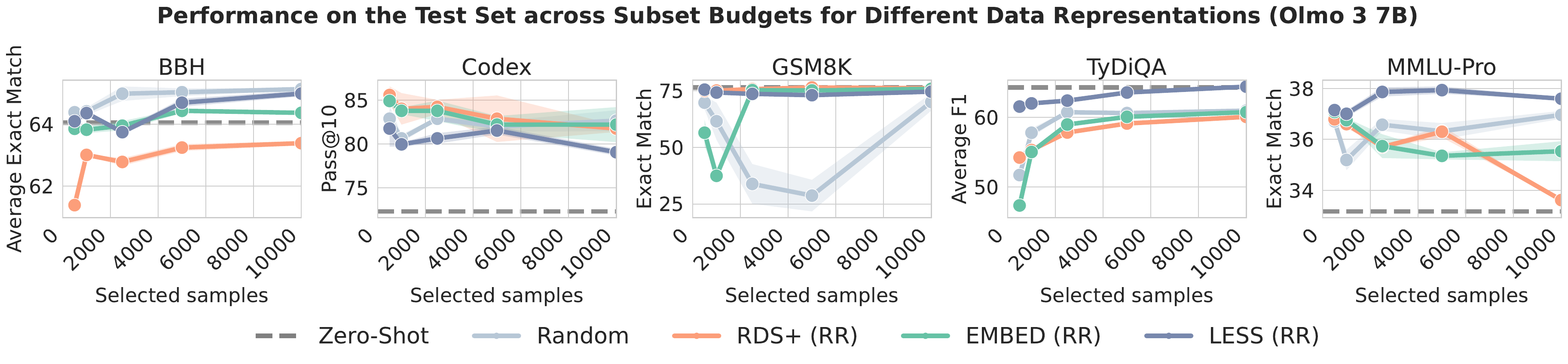}

    \includegraphics[width=\linewidth]{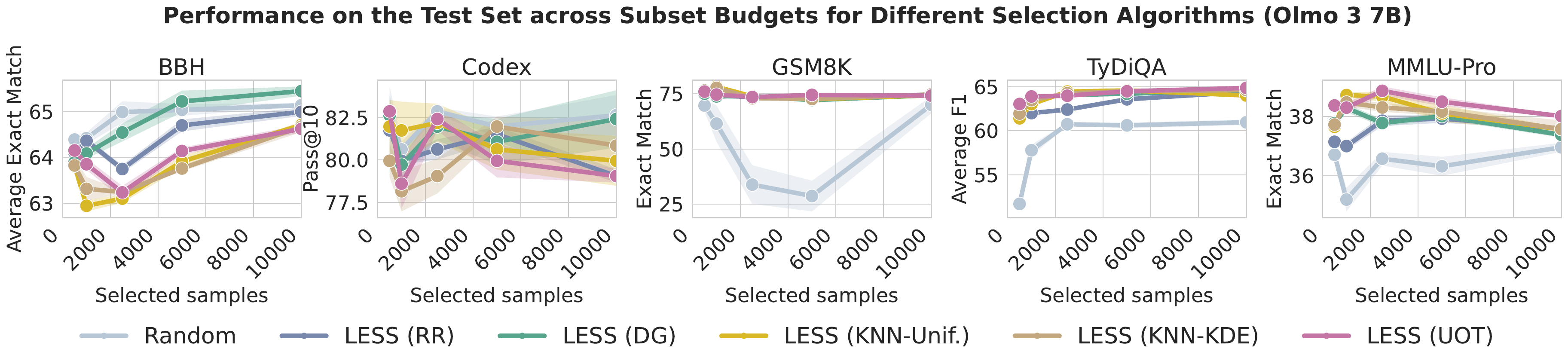}

    \caption{Ablation experiments with Olmo 3 7B.}
    \label{fig:quantile_budget_all_olmo3-7b}
\end{figure*}

%% file: appendices/multitask_ablation.tex
\section{Multitask Instruction Selection}

Here, we experiment with multitask instruction selection, where the goal is to select instructions from the candidate pool for multiple target tasks rather than a single task.
\looseness-1

\paragraph{Setup.}
We jointly select instructions for all target tasks described in Appendix~\ref{app:target_tasks}.
For each data representation, we concatenate the single-task similarity matrices across all target tasks, treat the resulting matrix as a single multitask selection problem, select instructions using different selection algorithms, and train models using the same procedure described in Section~\ref{sec:experimental_setup}.
\looseness-1

\paragraph{Results.}
Figure~\ref{fig:multitask_instruct_quantile_budget_all_llama3.2-3b} shows the multitask instruction selection results for Llama 3.2 3B.
We find that the results closely follow those from the single-task selection experiments with Llama 3.2 3B (Figure~\ref{fig:quantile_budget_all_llama3.2-3b}), suggesting that insights from single-task targeted instruction selection generalize to multitask instruction selection.
\looseness-1

\begin{figure*}[h!]
    \centering
    \includegraphics[width=\linewidth]{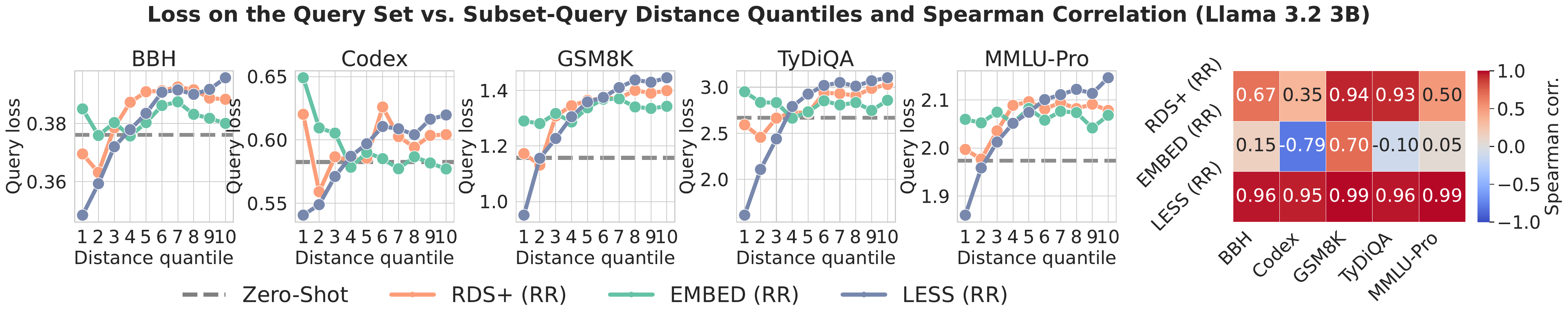}

    \includegraphics[width=\linewidth]{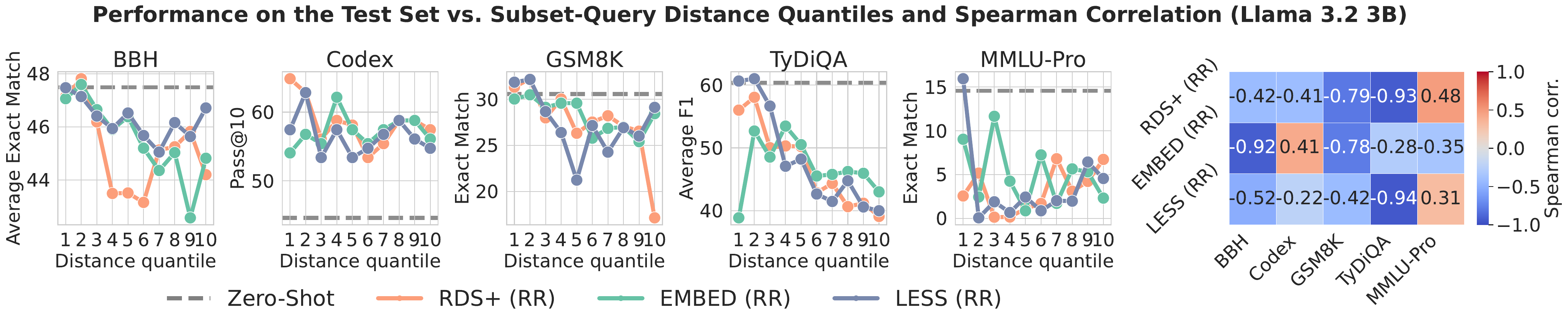}

    \includegraphics[width=\linewidth]{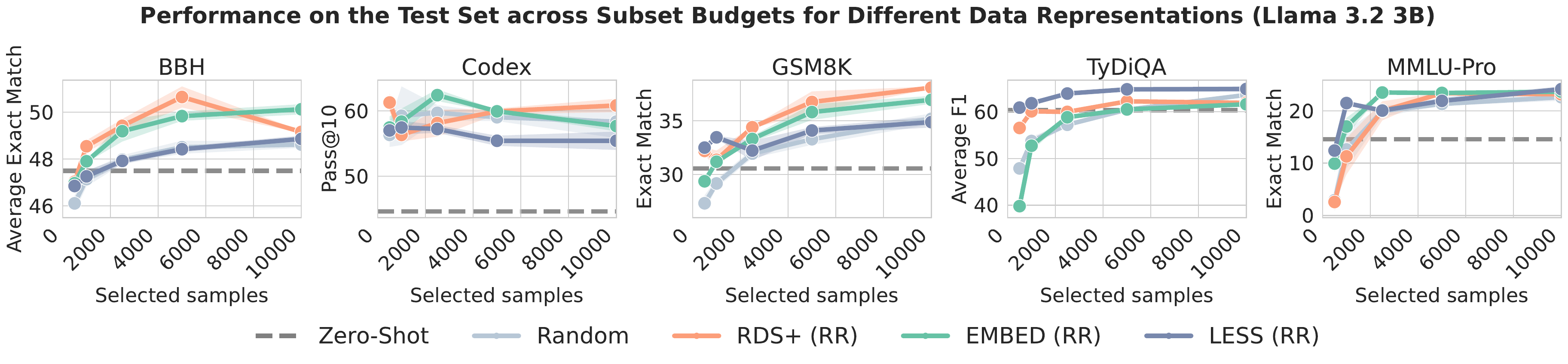}

    \includegraphics[width=\linewidth]{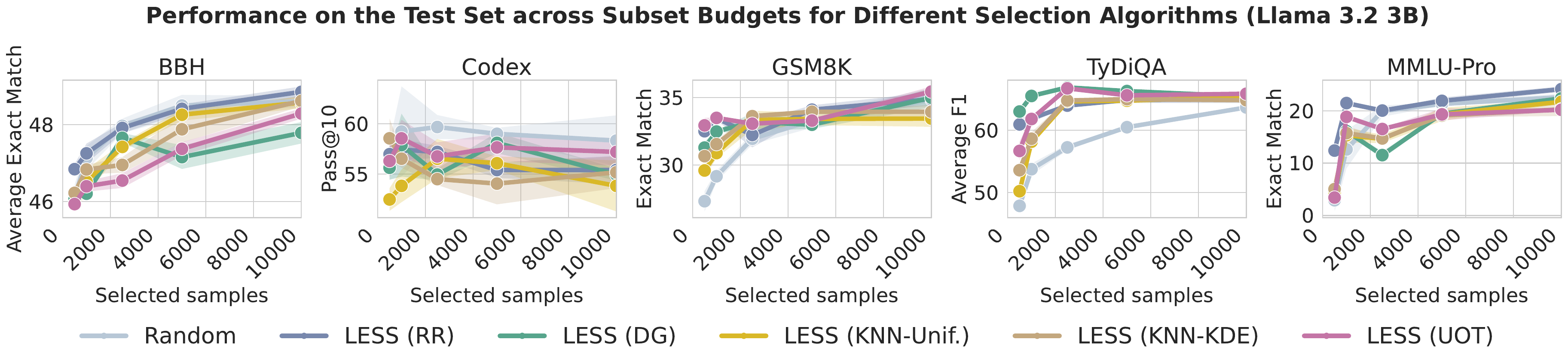}

    \caption{Multitask instruction selection experiments with Llama 3.2 3B}
    \label{fig:multitask_instruct_quantile_budget_all_llama3.2-3b}
\end{figure*}

%% file: appendices/dolci_model_ablations.tex
\section{Model Ablations with Dolci Instruct}\label{app:dolci_instruct_ablations}

We experiment with Dolci Instruct~\citep{olmo:arxiv25} as the candidate pool to better understand targeted instruction selection with a different candidate pool. 
Dolci Instruct is a recent post-training dataset that builds on the earlier Tulu V2~\citep{ivison:arxiv23} and Tulu V3~\citep{lambert:colm25} mixtures while broadening their coverage to include synthetic and curated instructions for mathematical reasoning, code generation, precise instruction following, tool use, multilingual tasks, and safety-related behaviors.

\paragraph{Setup.}
We filter the Dolci-Instruct-SFT dataset by removing all the instructions that require tool use and randomly sample $200,000$ examples from the dataset.
We use this dataset as our candidate pool. 
For the rest of the setup, we follow the same experimental setup from Section \ref{sec:experimental_setup}. 
Since the candidate pool contains longer sequences, we increase the max. sequence length to $4096$ tokens during training, including intermediate data representation computation. 
We repeat all the experiments from Section \ref{sec:experiments} with Llama 2 7B~\citep{touvron:arxiv23}, Llama 3.2 3B~\citep{grattafiori:arxiv24}, SmolLM3 3B ~\citep{bakouch:huggingface25}, Qwen3 4B Base~\citep{yang:arxiv25}, and Olmo 3 7B~\citep{olmo:arxiv25}.

\paragraph{Results.}
We now briefly summarize the results across Figures 
\ref{fig:dolci_instruct_quantile_budget_all_llama2-7b}, 
\ref{fig:dolci_instruct_quantile_budget_all_llama3.2-3b}, \ref{fig:dolci_instruct_quantile_budget_all_smollm3-3b-base}, \ref{fig:dolci_instruct_quantile_budget_all_qwen3-4b}, and \ref{fig:dolci_instruct_quantile_budget_all_olmo3-7b}.
First, consistent with our findings, we find that LESS creates subsets whose distance to the query strongly correlates with the query loss across model-task pairs. 
However, we also observe that the subset distance to the query often strongly correlates with the downstream metric, with LESS showing a stronger correlation than other data representations. 
Next, in contrast to our experiments with Tulu V2, we observe that RDS+ (RR) and EMBED (RR) often match or outperform LESS (RR) on BBH, Codex, and GSM8K. 
On the other hand, LESS (RR) consistently outperforms the other methods on TyDiQA across models. 
Then, we observe that no selection algorithm with LESS dominates across datasets, budgets, or models. 
Finally, we observe that Random is a competitive baseline at higher budgets, highlighting the benefits of a well-curated candidate pool such as Tulu V2 and Dolci Instruct. 
These results underscore a broader candidate-pool dependence in targeted instruction selection: while LESS remains a reliable signal for query loss under Dolci-Instruct, downstream gains and method rankings are substantially less stable across tasks, models, and budgets.

\begin{figure*}[h!]
    \centering
    \includegraphics[width=\linewidth]{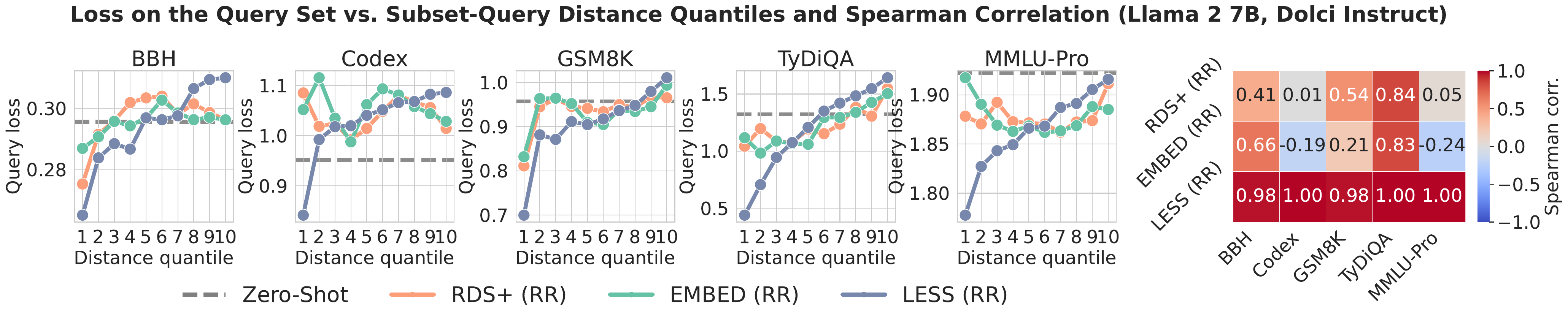}

    \includegraphics[width=\linewidth]{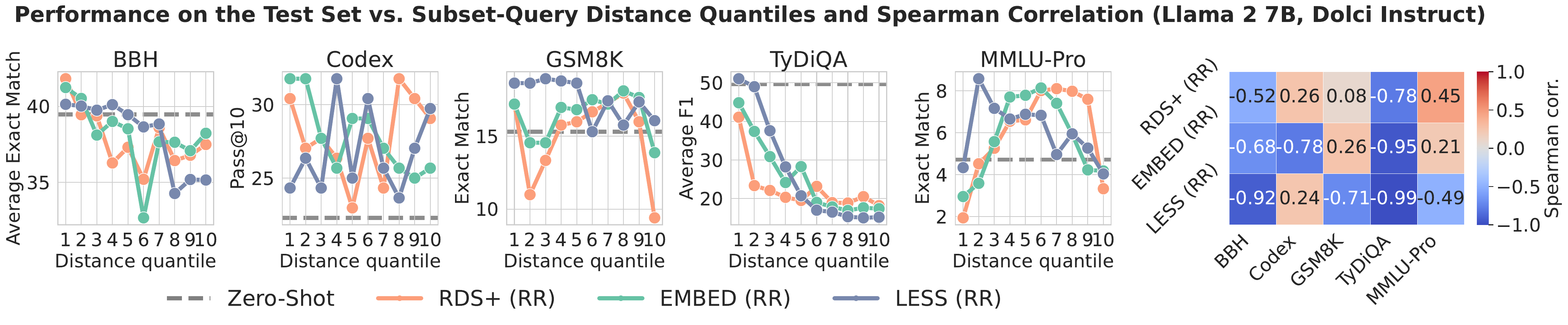}

    \includegraphics[width=\linewidth]{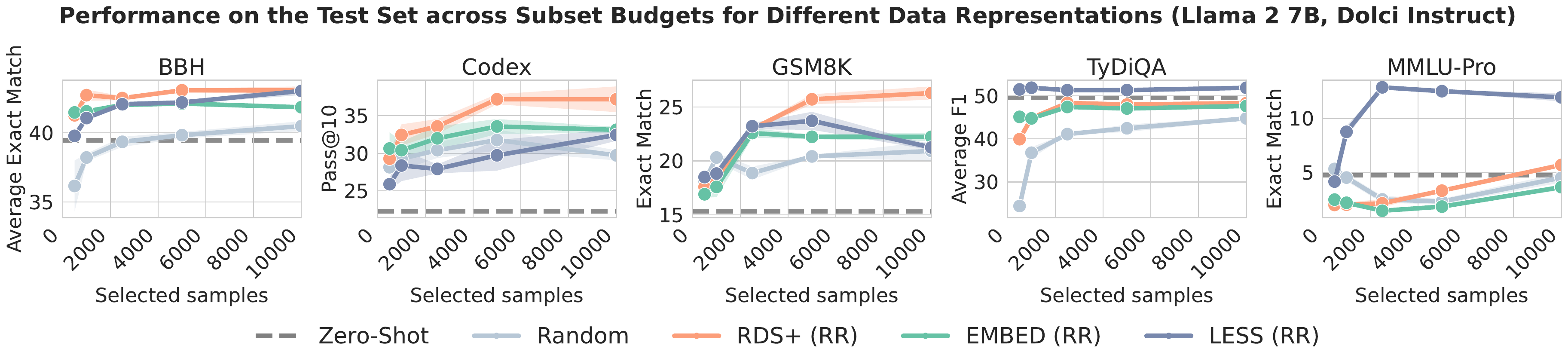}

    \includegraphics[width=\linewidth]{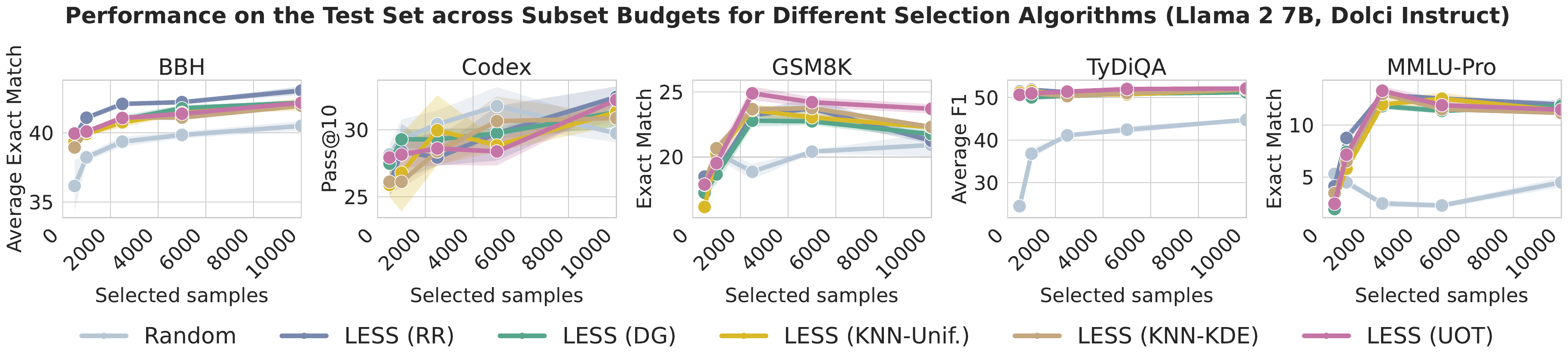}

    \caption{Ablation experiments with Llama 3.2 3B using Dolci Instruct as the candidate pool.}
    \label{fig:dolci_instruct_quantile_budget_all_llama2-7b}
\end{figure*}

\begin{figure*}[h!]
    \centering
    \includegraphics[width=\linewidth]{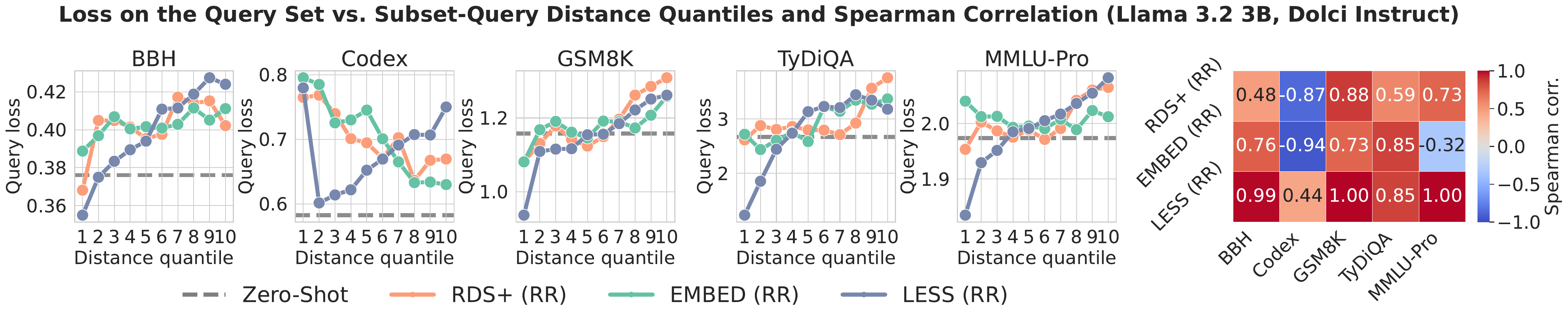}

    \includegraphics[width=\linewidth]{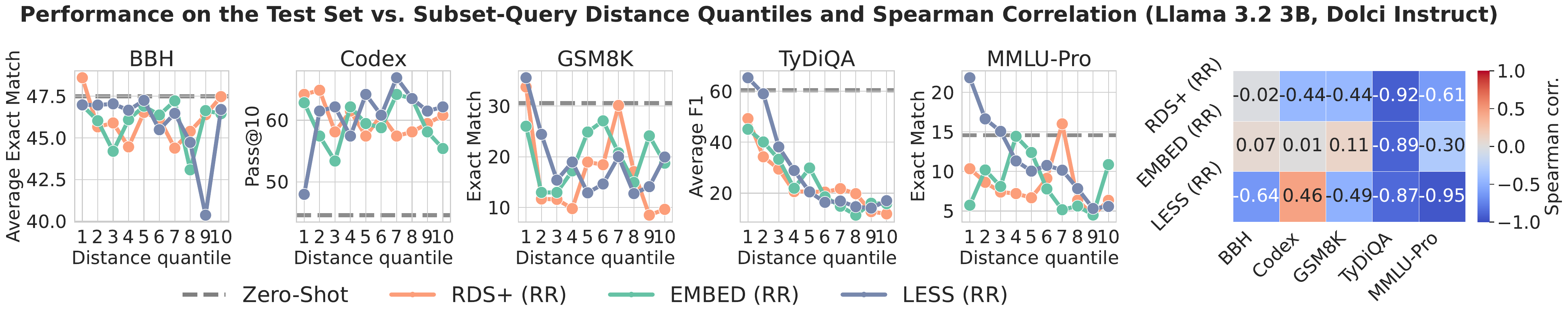}

    \includegraphics[width=\linewidth]{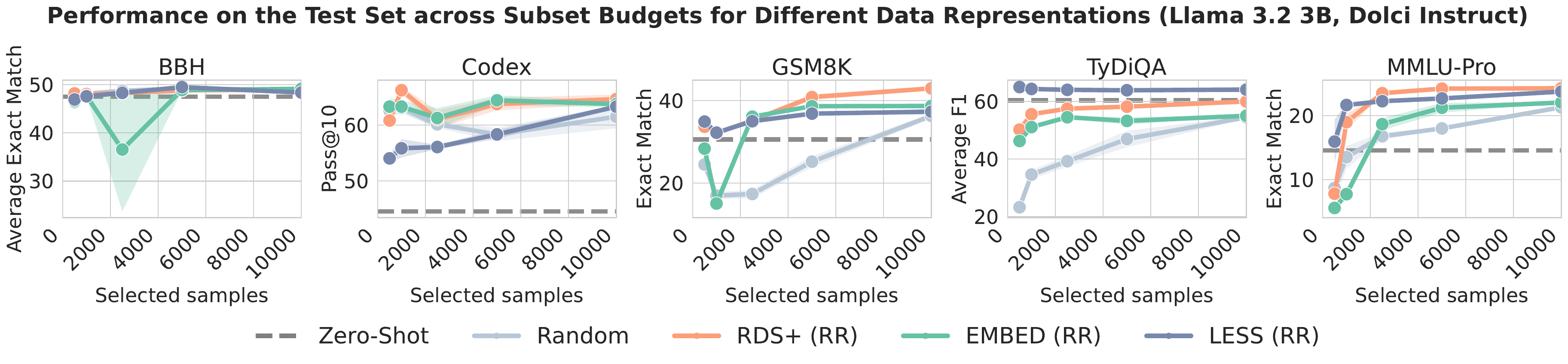}

    \includegraphics[width=\linewidth]{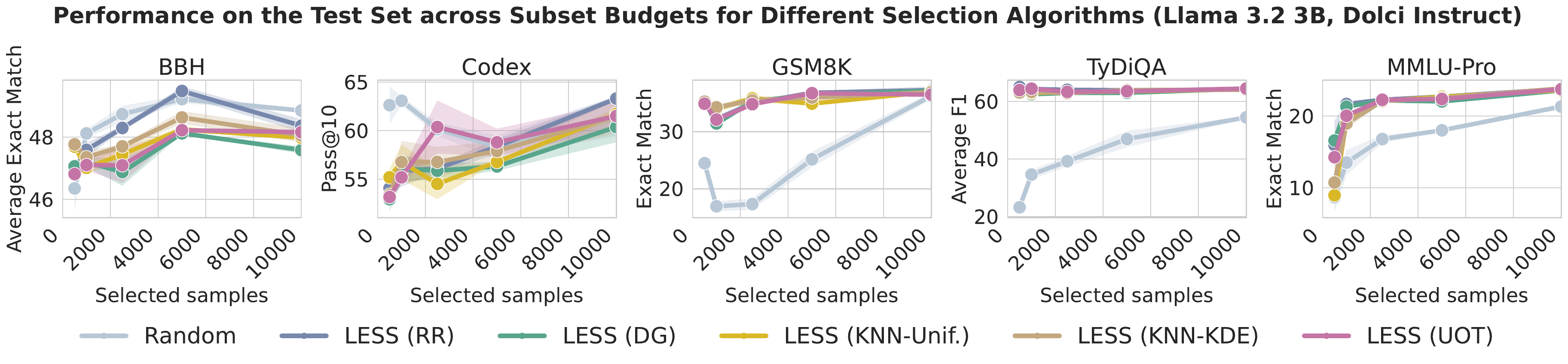}

    \caption{Ablation experiments with Llama 3.2 3B using Dolci Instruct as the candidate pool.}
    \label{fig:dolci_instruct_quantile_budget_all_llama3.2-3b}
\end{figure*}

\begin{figure*}[t!]
    \centering
    \includegraphics[width=\linewidth]{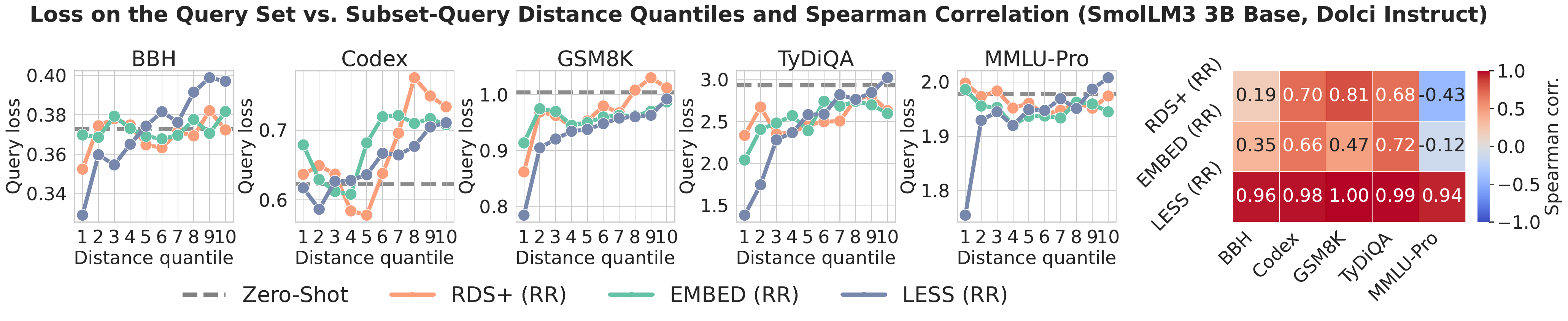}

    \includegraphics[width=\linewidth]{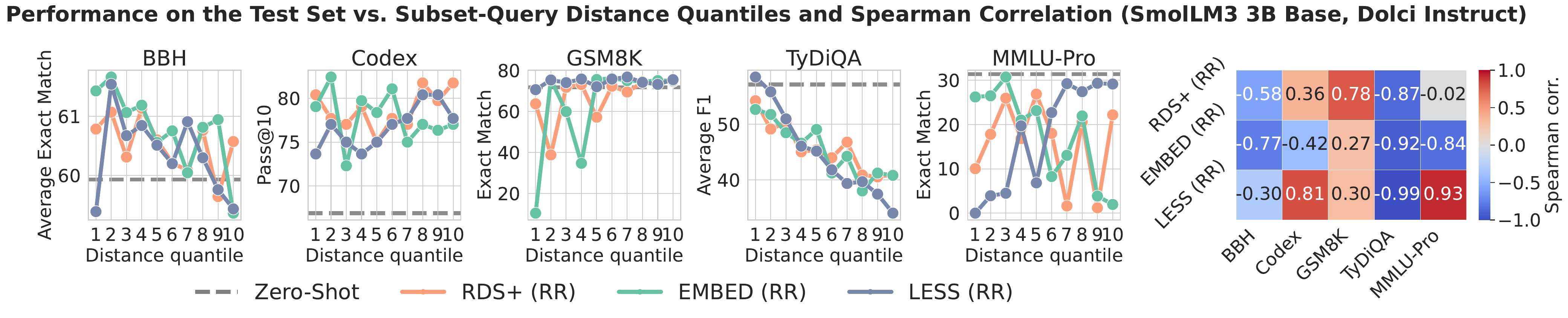}

    \includegraphics[width=\linewidth]{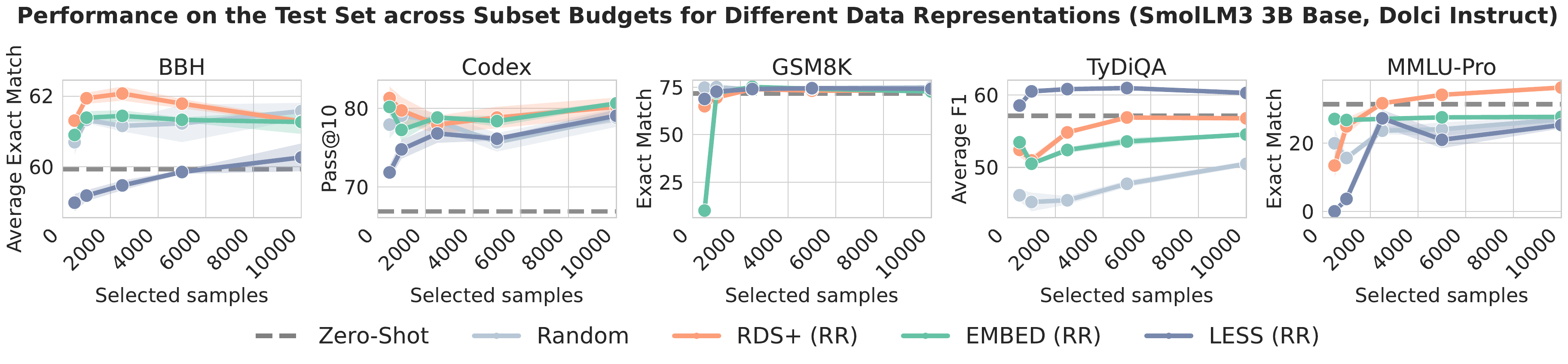}

    \includegraphics[width=\linewidth]{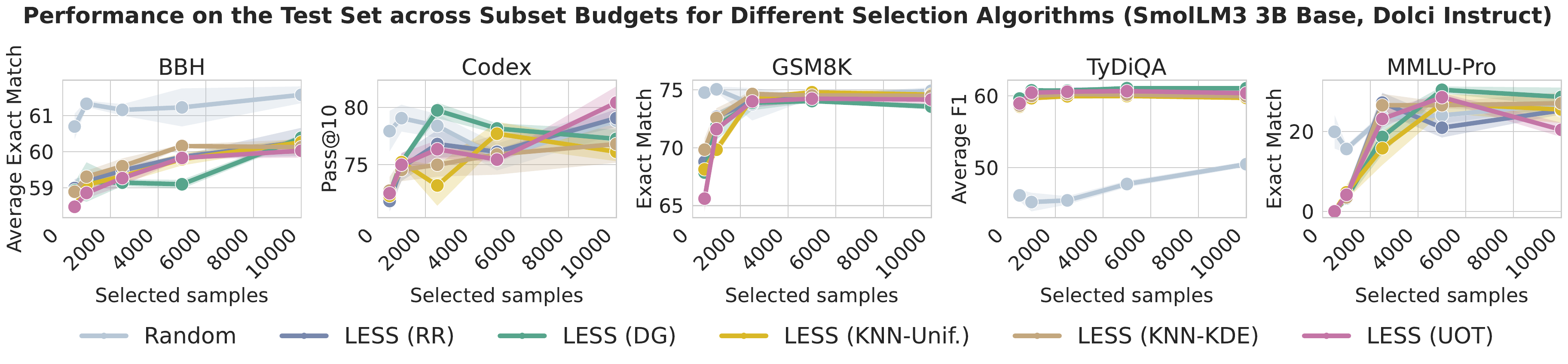}

    \caption{Ablation experiments with SmolLM3 3B using Dolci Instruct as the candidate pool.}
    \label{fig:dolci_instruct_quantile_budget_all_smollm3-3b-base}
\end{figure*}

\begin{figure*}[t!]
    \centering
    \includegraphics[width=\linewidth]{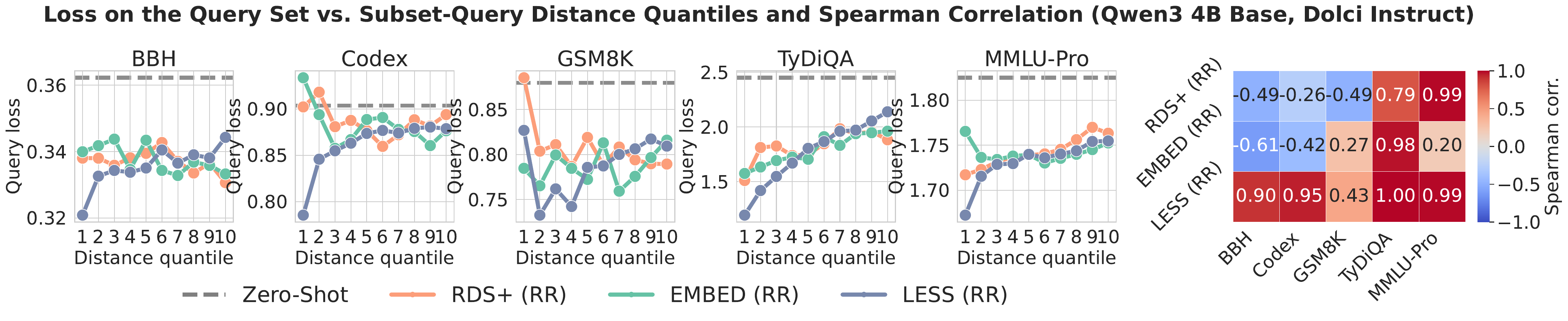}

    \includegraphics[width=\linewidth]{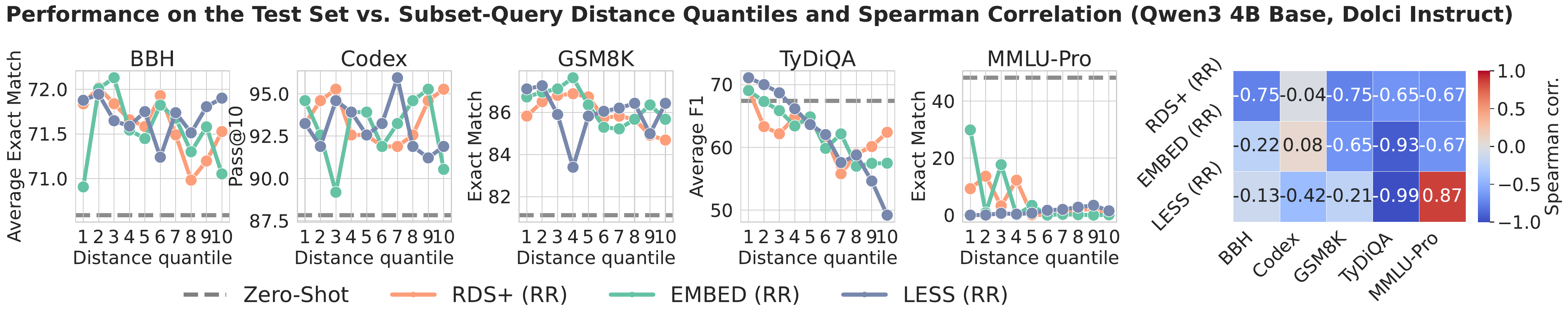}

    \includegraphics[width=\linewidth]{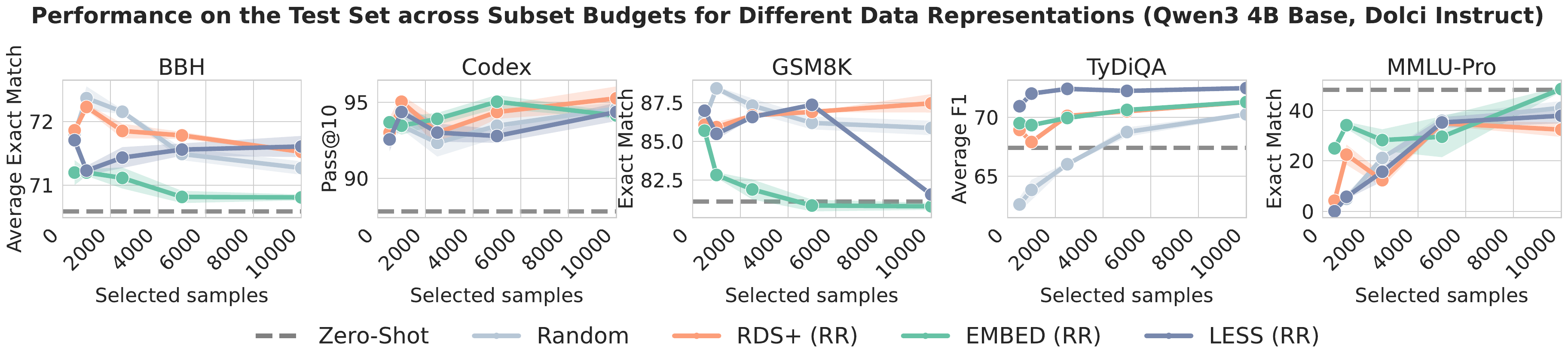}

    \includegraphics[width=\linewidth]{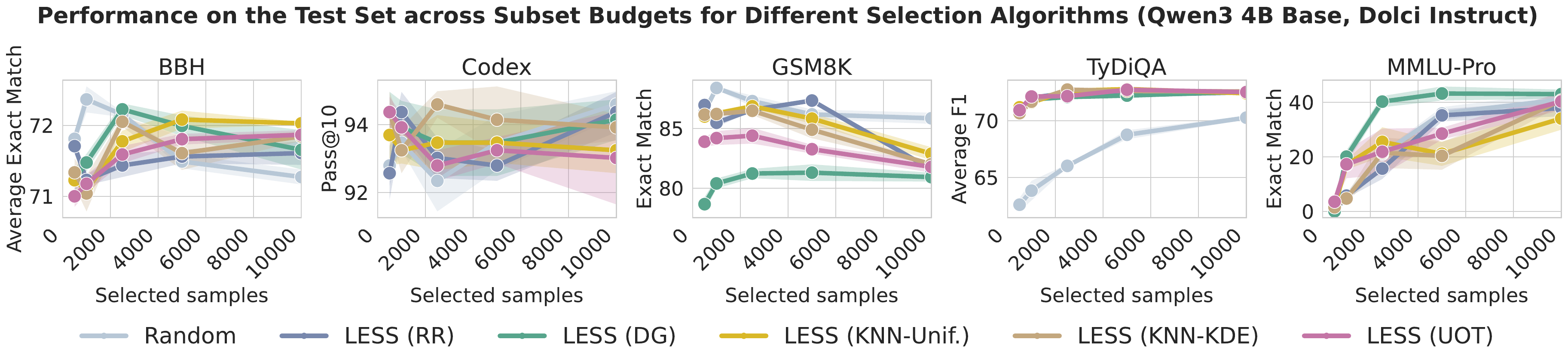}

    \caption{Ablation experiments with Qwen3 4B using Dolci Instruct as the candidate pool.}
    \label{fig:dolci_instruct_quantile_budget_all_qwen3-4b}
\end{figure*}

\begin{figure*}[t!]
    \centering
    \includegraphics[width=\linewidth]{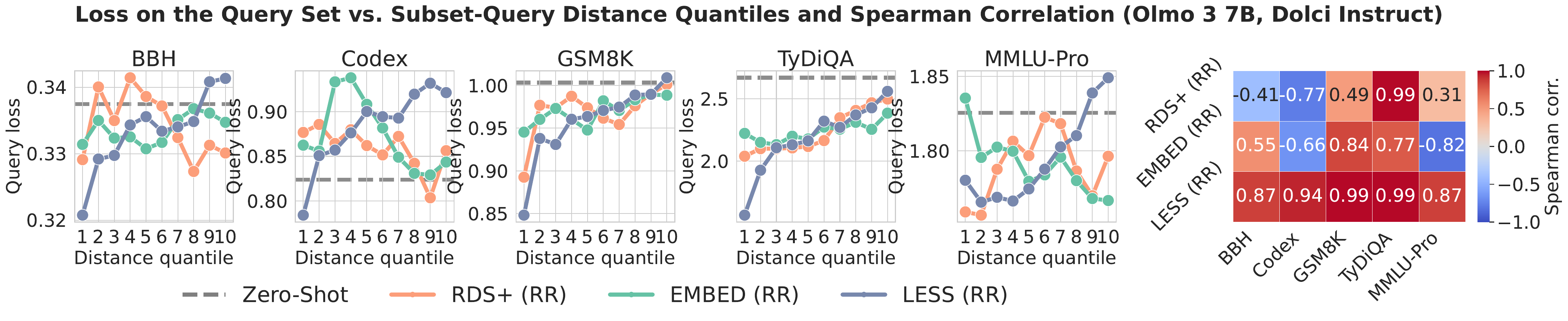}

    \includegraphics[width=\linewidth]{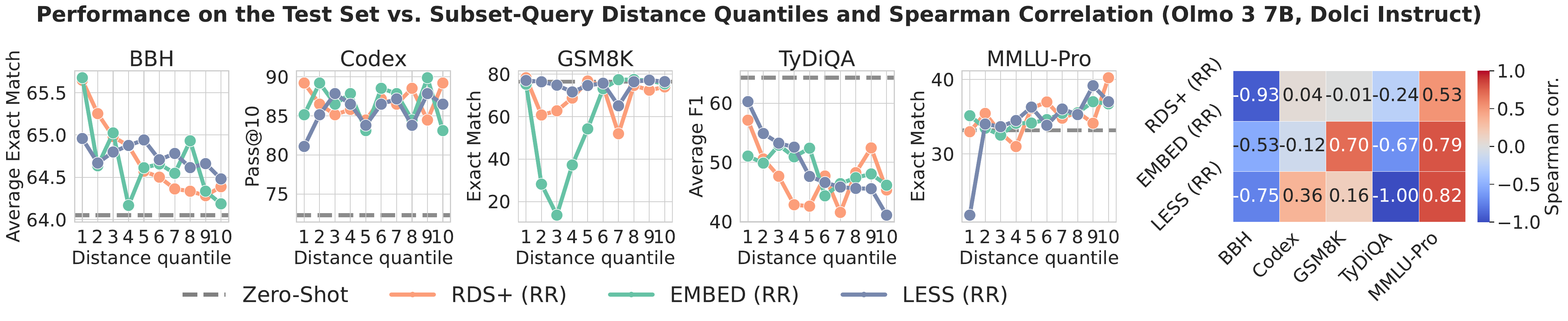}

    \includegraphics[width=\linewidth]{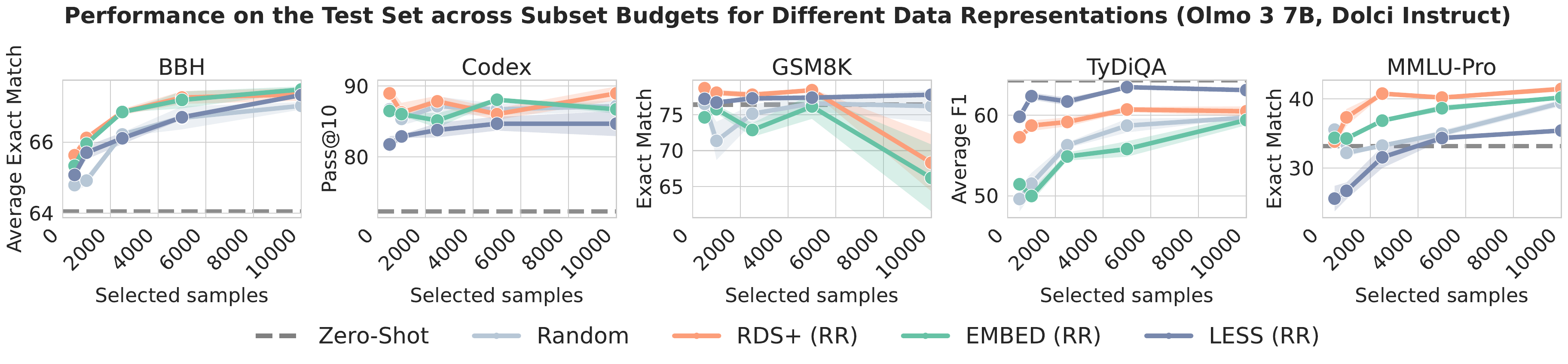}

    \includegraphics[width=\linewidth]{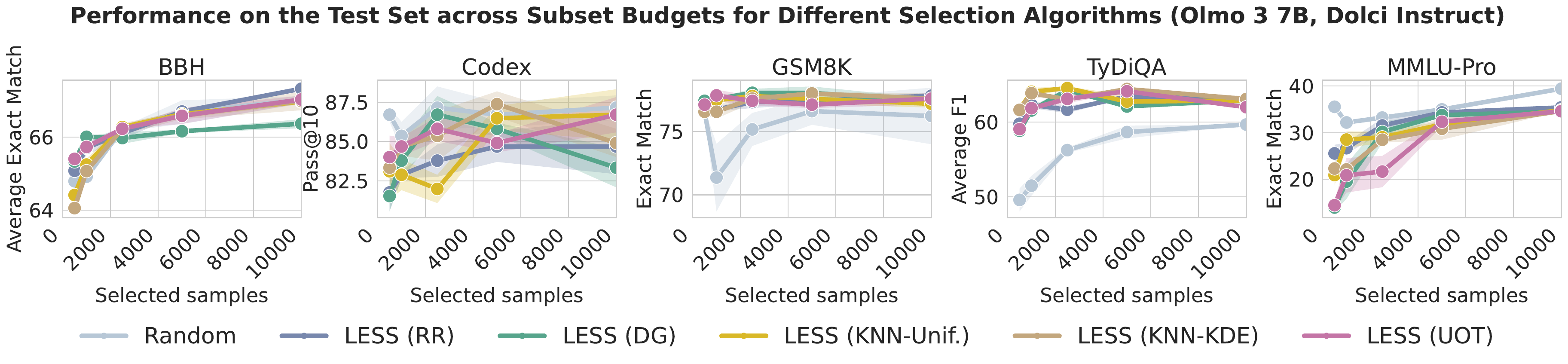}

    \caption{Ablation experiments with Olmo 3 7B using Dolci Instruct as the candidate pool.}
    \label{fig:dolci_instruct_quantile_budget_all_olmo3-7b}
\end{figure*}